\tikzstyle{line}=[draw]
\tikzstyle{arrow}=[draw, -latex]
\newtheoremstyle{mystyle}%
  {3pt} 
  {3pt} 
  {\itshape} 
  {} 
  {\bfseries} 
  {.} 
  { } 
  {\thmname{#1} \thmnumber{#2}: \thmnote{\bfseries #3}} 
\newtheorem{example}{Example}
\newtheorem{proposition}{Proposition}
\newtheorem{definition}{Definition}
\newtheorem{corollary}{Corollary}
\newtheorem{lemma}{Lemma}
\newtheorem{notation}{Notation}
\pgfplotsset{compat=1.13}
\author{}
\title{Approximate matrices of systems of max-min fuzzy relational equations}
\author{Ismaïl Baaj \\ LEMMA, Paris-Panthéon-Assas University, Paris, 75006, France \\  \href{baaj@cril.fr}{ismail.baaj@assas-universite.fr}}
\begin{document}

\maketitle
     
\begin{abstract}
In this article, we address the inconsistency of a system of max-min fuzzy relational equations by minimally modifying the matrix governing the system in order to achieve consistency. Our method yields consistent systems that approximate the original inconsistent system in the following sense: the right-hand side vector of each consistent system is that of the inconsistent system, and the coefficients of the matrix governing each consistent system are obtained by modifying, exactly and minimally, the entries of the original matrix that must be corrected to achieve consistency, while leaving all other entries unchanged.

To obtain a consistent system that closely approximates the considered inconsistent system, we study the distance (in terms of a norm among $L_1$, $L_2$ or $L_\infty$) between the matrix of the inconsistent system and the set formed by the matrices of consistent systems that use the same right-hand side vector as the inconsistent system. We show that our method allows us to directly compute  matrices of consistent systems that use the same right-hand side vector as the inconsistent system whose distance in terms of $L_\infty$ norm  to the matrix of the inconsistent system is minimal (the computational costs are higher when using $L_1$ norm or $L_2$ norm).   We also give an explicit analytical formula for computing this minimal $L_\infty$ distance. Finally, we translate our results for systems of  min-max fuzzy relational equations and present some potential   applications.

\end{abstract}

\keywords{Fuzzy set theory, Systems of fuzzy relational equations}

\section{Introduction}

Systems of fuzzy relational equations are the basis of many fuzzy modeling approaches \cite{di1991fuzzy,dubois1995fuzzy,pedrycz1985applications}, including Zadeh's possibility theory for approximate reasoning \cite{ZadehTH}, and have been applied in some practical areas of Artificial Intelligence (AI), such as medical diagnosis \cite{adlassnig1980fuzzy,bandler1986survey,sanchez1977}. The pioneering work of Sanchez \cite{sanchez1976resolution,sanchez1977} provided necessary and sufficient conditions for  a system of max-min fuzzy relational equations to be consistent, i.e., when the system has solutions. Sanchez showed that, when a max-min system is consistent, the structure of its solutions set is given by a solution which is the greatest and a finite number of minimal solutions. His work has been extended to systems based on $\max-T$ compositions where $T$ is a continuous t-norm \cite{baets2000analytical,di1982solution,di1984fuzzy,li2008resolution,markovskii2005relation,MIYAKOSHI198553,pedrycz1982fuzzy,shieh2007solutions}, systems based on $\max-\ast$ compositions, where $\ast$ is an increasing and continuous function \cite{matusiewicz2013increasing}, and systems based on $\min-\mathcal{I}_T$ compositions, where $\mathcal{I}_T$ is the residual implicator associated with a continuous t-norm $T$, see \cite{bandler1980semantics,perfilieva2008system}.

However, addressing the inconsistency of systems of fuzzy relational equations remains an open problem today \cite{baets2000analytical,li2009fuzzy,pedrycz1990inverse}. For  max-min  systems, researchers have worked on finding approximate solutions \cite{cimler2018optimization,cuninghame1995residuation,li2010chebyshev,li2024linear}, and on the computation of approximate inverses of fuzzy matrices \cite{ WEN2022,wu2022analytical}. An emerging development is based on Pedrycz's approach \cite{pedrycz1990inverse}:  Perdycz proposed to slightly modify the right-hand side vector of an inconsistent max-min  system in order to obtain a consistent system. Then, the solutions of the obtained consistent system are considered as approximate solutions of the  inconsistent system. Some authors proposed algorithms  \cite{cimler2018optimization,cuninghame1995residuation,li2010chebyshev}  based on Pedrycz's approach for obtaining a consistent system close to a given inconsistent system.  More recently, the author of \cite{baaj2024maxProdMaxLuka,baaj2024maxmin} studied the inconsistency of systems of $\max-T$ fuzzy relational equations, where $T$ is a continuous t-norm among minimum, product, or Lukasiewicz's t-norm. For each of the three $\max-T$ systems, the author of \cite{baaj2024maxProdMaxLuka,baaj2024maxmin} provided an explicit analytical formula to compute the Chebyshev distance (defined by  L-infinity norm) between the right-hand side vector of an inconsistent system and the set of right-hand side vectors of consistent systems defined with the same composition and the same matrix as the inconsistent system. Based on these results, the author of \cite{baaj2024maxProdMaxLuka,baaj2024maxmin} studied the Chebyshev approximations of the right-hand side vector of a given inconsistent system of $\max-T$ fuzzy relational equations \cite{cuninghame1995residuation,li2010chebyshev}, where each of these approximations is the right-hand side vector of a consistent system defined with the same matrix as the inconsistent system and such that its distance to the right-hand side vector of the inconsistent system is equal to the computed Chebyshev distance. For each of the three $\max-T$ systems, the author of \cite{baaj2024maxProdMaxLuka,baaj2024maxmin} showed that the greatest Chebyshev approximation of the right-hand side vector of an inconsistent $\max-T$ system can be computed by an explicit analytical formula. Furthermore, for max-min  systems, the author of \cite{baaj2024maxmin} provided the complete description of the structure of the set of Chebyshev approximations.  The author of \cite{baaj2024maxProdMaxLuka,baaj2024maxmin} also studied the approximate solutions of inconsistent $\max-T$ systems, defined as the solutions of the consistent systems whose matrix is that of the inconsistent system and whose right-hand side vector is a Chebyshev approximation of the right-hand side vector of the inconsistent system. For each of the three $\max-T$ systems, the author of \cite{baaj2024maxProdMaxLuka,baaj2024maxmin} showed that the greatest approximate solution of a given inconsistent system can be computed by an explicit analytical formula. For systems based on the max-min composition, the author of \cite{baaj2024maxmin} gave a complete description of the structure of the approximate solutions set of the inconsistent system. Furthermore, for max-min systems, the author of \cite{li2024linear} proposed a linear optimization method for computing $L_\infty$ and $L_1$ approximate solutions.

The works \cite{baaj2024maxmin, cimler2018optimization, cuninghame1995residuation, li2010chebyshev} build on Pedrycz’s approach, focus on minimally modifying the right-hand side vector of an inconsistent  systems, while keeping the matrix unmodified. In this article, we follow a different approach: \textit{we focus on minimally modifying the matrix of an inconsistent max-min system in order to obtain a consistent  system, while keeping the right-hand side vector unchanged}. This problem was studied by \cite{CECHLAROVA2000123,li2010chebyshev} where the authors  proposed algorithms for estimating the Chebyshev distance between the  matrix of an inconsistent system and the closest matrix of a consistent system, where both systems have the same fixed right-hand side vector.

In this article, we consider an inconsistent system of max-min fuzzy relational equations of the form $A \Box_{\min}^{\max} x = b$, where  the matrix $A$ is of size $(n,m)$, the right-hand side vector $b$ has $n$ components, $x$ is an unknown vector of $m$ components and the matrix product $\Box_{\min}^{\max}$ uses  the t-norm $\min$   as the product and the function $\max$ as the addition. We begin by defining, in (\ref{eq:setT}), the set $\mathcal{T}$, which is formed by the matrices governing the  consistent max-min systems that use the same right-hand side vector: the right-hand side vector $b$ of the inconsistent system. Then, we introduce, in (\ref{eq:deltaAp}), the distance denoted by $\mathring{\Delta}_{p}$ between the matrix $A$ of the inconsistent system and the set $\mathcal{T}$,  which is measured with respect to a given norm  among $L_1$, $L_2$, or $L_\infty$.

To introduce our method for constructing matrices of the set $\mathcal{T}$, we begin with a preliminary study. We define an auxiliary matrix $A^{(i,j)}$, which minimally modify the matrix $A$ of the inconsistent system $A \Box_{\min}^{\max} x = b$, in order to satisfy the following constraint: $\delta^{A^{(i,j)}}(i,j) = 0$, see  (\ref{eq:deltaTij}) and Lemma \ref{lemma:deltaaijijequalzero}, where the scalar $\delta^{A^{(i,j)}}(i,j)$  is involved in the formula of \cite{baaj2024maxmin} of the Chebyshev distance $\Delta = \max\limits_{1 \leq k \leq n} \delta_k^{A^{(i,j)}} \text{ where } \delta_k^{A^{(i,j)}} =\min\limits_{1 \leq j \leq m} \delta^{A^{(i,j)}}(k,j)$, see (\ref{eq:Delta}), associated with the right-hand side vector $b$ of the  max-min system  $A^{(i,j)} \Box_{\min}^{\max} x = b$. We then define the matrices $A^{(\vec{i}, \vec{j})}$ in (\ref{eq:constructionAijvec}), whose construction is an iterative composition  (\ref{eq:constructionAijvec}) of auxiliary matrices $A^{(i,j)}$. In Proposition \ref{proposition:SimpleDefAijvec}, we give a simpler construction of these matrices $A^{(\vec{i}, \vec{j})}$.

When  the pair of vectors   $(\vec{i}, \vec{j})$ used to construct the matrices $A^{(\vec{i}, \vec{j})}$ of (\ref{eq:constructionAijvec})  are set from the subset of indices corresponding to the inconsistent equations of the system $A \Box_{\min}^{\max} x = b$ with respect to the Chebyshev distance $\Delta$ (each equation whose index $i \in \{1,2,\dots,n\}$ is such that $\delta_i^A > 0$, see (\ref{eq:Delta})) and a subset of column indices of the matrix $A$, we prove in Theorem \ref{theorem:theo1consSij} that the matrices $A^{(\vec{i}, \vec{j})}$ belong to the set $\mathcal{T}$. The matrices $A^{(\vec{i}, \vec{j})}$ of consistent systems  constructed  in Theorem \ref{theorem:theo1consSij} have important properties.
In Theorem \ref{theorem:theo2Aijnorm}, we show that the modifications applied to the original matrix $A$ of the inconsistent system to obtain the matrices $A^{(\vec{i}, \vec{j})}$ of the consistent systems are minimal.
This  result allows us to compute the distance $\mathring{\Delta}_{p}$, see (\ref{eq:deltaAp}),  in a simpler way, see Corollary \ref{corollary:normminJh}. Furthermore, when using the $L_\infty$ norm, i.e., $p = \infty$, we show that the distance $\mathring{\Delta}_{\infty}$, see (\ref{eq:deltaAp}), can be computed using an explicit analytical formula, see Corollary \ref{corollary:formulaforLinftydist}, and we provide a  method to construct 
a non-empty and finite subset of matrices  $A^{(\vec{i}, \vec{j})}$ 
  whose distance to the matrix  $A$ of the inconsistent system is equal to the scalar $\mathring{\Delta}_{\infty}$, see (\ref{eq:lAinfty}). Finally, we translate the tools and results obtained for max-min systems to the corresponding tools and results for min-max systems, see Subsection \ref{subsec:minmaxsysresults}.

The article is structured as follows. In Section \ref{sec:bg}, we give the necessary background for studying inconsistent max-min systems. In Section \ref{sec:minimallymod}, we present our problem formally and we introduce the construction of the auxiliary matrices, and the iterative construction (\ref{eq:constructionAijvec}) based on them.  In Section \ref{sec:mainresults}, we show our main results:   Theorem \ref{theorem:theo1consSij}, Theorem \ref{theorem:theo2Aijnorm}, Corollary \ref{corollary:normminJh} and Corollary \ref{corollary:formulaforLinftydist}. In Section \ref{sec:minmaxres}, we translate our results with max-min systems to min-max systems. Finally, we conclude with some perspectives.

\section{Background}
\label{sec:bg}

In this section, we give the necessary background for studying max-min systems. 
We begin by giving some notations. We then remind the solving of max-min systems and some of the results of \cite{baaj2024maxmin} for handling the inconsistency of max-min systems. 

\subsection{Notations and matrix products}

\noindent We use the following notations:
\begin{notation}\label{not:basic}
$[0,1]^{n\times m}$ denotes the set of matrices of size $(n,m)$ i.e., $n$ rows and $m$ columns, whose components are in $[0,1]$. In particular:
\begin{itemize}
    \item $[0,1]^{n\times 1}$ denotes the set of column vectors of $n$ components,
    \item $[0,1]^{1\times m}$ denotes  the set of row matrices of $m$ components. 
\end{itemize}

\noindent In the set $[0,1]^{n\times m}$, we use the order relation $\leq$ defined by:
\[ A \leq B \quad \text{iff we have} \quad  a_{ij} \leq b_{ij} \quad \text{ for all } \quad 1 \leq i \leq n, 1 \leq j \leq m,    \]
\noindent where $A=[a_{ij}]_{1 \leq i \leq n, 1 \leq j \leq m}$ and $B=[b_{ij}]_{1 \leq i \leq n, 1 \leq j \leq m}$. \\ 
\end{notation}

\noindent The two matrix products $\Box_{\min}^{\max}$,  and $\Box_{\rightarrow_G}^{\min}$  used in this article are defined by modifying the usual matrix product: 
\begin{itemize}
    \item the matrix product $\Box_{\min}^{\max}$ is obtained by taking the t-norm $\min$ as the product and the function $\max$ as the addition.
 
  \item the matrix product $\Box_{\rightarrow_G}^{\min}$ is obtained by taking the Gödel implication $\rightarrow_G$, which is defined by: \begin{equation}\label{eq:godelimp}
      x \rightarrow_G y = \begin{cases}
      1 & \text{ if } x \leq y\\
      y & \text{otherwise.}
  \end{cases}  \text{ in } [0,1]
  \end{equation}  as the product and the function $\min$ as the addition. 
  
\end{itemize}

\subsection{Systems of \texorpdfstring{max-min}{max-min} fuzzy relational equations}

\noindent Let $A=[a_{ij}] \in [0,1]^{n\times m}$ be a matrix of size $(n,m)$ and $b=[b_{i}] \in [0,1]^{n\times 1}$ be a vector of $n$ components. The system of max-min fuzzy relational equations  associated with $(A,b)$  is denoted by:
\begin{equation}\label{eq:maxminsys}
    (S): A \Box_{\min}^{\max} x = b,
\end{equation}
\noindent where $x = [x_j]_{1 \leq j \leq m} \in [0,1]^{m\times 1}$ is  an unknown vector of $m$ components. By definition of the matrix product $\Box_{\min}^{\max}$, the system $(S)$ is equivalent to the following set of $n$ max-min equations:
\begin{equation*}
    \max_{1 \leq j \leq m} \min(a_{ij},x_j) = b_i,\, \forall i \in \{1, 2,\dots, n\}.
\end{equation*}

\noindent To check if the system $(S)$, see (\ref{eq:maxminsys}), is consistent, we compute the following vector: \begin{equation}\label{eq:egretestsol}
e = A^t \Box_{\rightarrow_G}^{\min} b,\end{equation}
\noindent where $A^t$ is the transpose of $A$  and we use the matrix product $\Box_{\rightarrow_G}^{\min}$ (Notation \ref{not:basic}). The components of the vector $e = [e_j]_{1\leq j \leq m}$ are given by:
\begin{equation*}
    e_j = \min_{1 \leq i \leq n} (a_{ij} \rightarrow_G b_i), \forall j \in \{1,2,\dots,m\}.
\end{equation*}
\noindent The vector $e$ is the potential greatest solution of the system $(S)$. 

\noindent Thanks to Sanchez's seminal work \cite{sanchez1976resolution}  on the solving of systems max-min fuzzy relational equations, we have the following equivalence:
\begin{equation}\label{eq:consiste1}
    \text{The system } (S) \text{ is consistent}\Longleftrightarrow A \Box_{\min}^{\max} e = b. 
\end{equation}
\noindent  The set of solutions of the system $(S)$ is denoted by:
    \begin{equation}\label{eq:setofsolutionsS}
{\cal  S  } = {\cal  S  }(A, b) = \{ v \in [0, 1]^{m \times 1} \,\mid \,  A \, \Box_{\min}^{\max} v  = b \}. 
\end{equation}
\noindent The structure of the solution set was described by Sanchez in \cite{sanchez1976resolution} for systems of max-min fuzzy relational equations. He proved that if  ${\cal  S  }$ is non-empty i.e., the system $(S)$ is consistent, the structure of ${\cal  S  }$ is given by the vector $e = A^t \Box_{\rightarrow_G}^{\min} b$, see (\ref{eq:egretestsol}), which is the greatest solution of the system $(S)$ and a finite number of minimal solutions.

\begin{example}
    Let a max-min system $A \Box_{\min}^{\max} x = b$ be defined by $A = \begin{bmatrix}
    0.79& 0.29& 0.28\\
    0.76& 0.79& 0.18\\ 
    0.53& 0.21& 0.11
\end{bmatrix}$ and $b = \begin{bmatrix}
    0.35\\ 0.6\\ 0.35
\end{bmatrix}$.

We compute the potential greatest solution of the  system:
\begin{equation*}
    e = A^t \Box_{\rightarrow_G}^{\min} b = \begin{bmatrix}
        \min( 0.79 \rightarrow_G 0.35, 0.76 \rightarrow_G 0.6, 0.53 \rightarrow_G 0.35) \\
         \min( 0.29 \rightarrow_G 0.35, 0.79 \rightarrow_G 0.6, 0.21 \rightarrow_G 0.35) \\
          \min( 0.28 \rightarrow_G 0.35, 0.18 \rightarrow_G 0.6, 0.11 \rightarrow_G 0.35) \\
    \end{bmatrix} = \begin{bmatrix}
        0.35\\ 0.6\\ 1
    \end{bmatrix}.
\end{equation*}
The equation  system $A \Box_{\min}^{\max} x = b$ is consistent because: $$A \Box_{\min}^{\max} e = \begin{bmatrix}
    \max(\min(0.79, 0.35), \min(0.29, 0.6), \min(0.28, 1))\\ 
    \max(\min(0.76, 0.35), \min(0.79, 0.6), \min(0.18, 1))\\
    \max(\min(0.53, 0.35), \min(0.21, 0.6), \min(0.11, 1))\\
\end{bmatrix} =\begin{bmatrix}
    0.35 \\ 0.6 \\ 0.35
\end{bmatrix} = b.$$ So the vector $e$ is the greatest solution of the   system. 
\end{example}

\subsection{Handling inconsistent systems of \texorpdfstring{max-min}{max-min} fuzzy relational equations}

Recently, the author of \cite{baaj2024maxmin} studied how to handle inconsistent systems of max-min  fuzzy relational equations. The tools introduced by \cite{baaj2024maxmin} follow the strategy proposed by Pedrycz in \cite{pedrycz1990inverse}, which is as follows: given an inconsistent system, we perturb, as slighly as possible, the components of the right-hand side vector of the inconsistent system in order to obtain a consistent system. 

To remind the tools of the author \cite{baaj2024maxmin} we reuse some of his notations:
\begin{notation}
    For $x,y,z,u,\delta \in [0,1]$, we use the following notations:
\begin{itemize}
    \item $x^+ = \max(x,0)$,
    \item $\overline{z}(\delta) = \min(z+\delta,1)$, 
    \item $\underline{z}(\delta) = \max(z-\delta,0) = (z-\delta)^+$.
\end{itemize}
\end{notation}
We rely on the following function, which was introduced in  \cite{baaj2024maxmin}:
\begin{equation}\label{eq:sigmaG}
    \sigma_G(x, y, z) = \min( \frac{(x - z)^+}{2},(y - z)^+)  \quad \text{ where } \quad x,y,z \in [0,1].
\end{equation}

Let $T = \begin{bmatrix}
    t_{ij}
\end{bmatrix}_{1 \leq i \leq n, 1 \leq j \leq m} \in [0,1]^{n\times m}$ be a matrix. We rely on  the following function defined for all $i\in\ \{1,2,\cdots,n\}$, which appears in \cite{baaj2024maxmin} and uses the matrix $T$ and the right-hand side vector $b=[b_{i}] \in [0,1]^{n\times 1}$ of the system $T \Box_{\min}^{\max} x = b$:
\begin{equation}\label{eq:deltaTi}
    [0 , 1]^{n \times m} \rightarrow [0 , 1] : T \mapsto \delta^T_i = \min_{1 \leq j \leq m} \delta^T{(i,j)}
\end{equation}
where:
\begin{equation}\label{eq:deltaTij}
\delta^T{(i,j)} = \max[ (b_i - t_{ij})^+,  \max_{1 \leq k \leq n}\,  \sigma_G\,(b_i, t_{kj}, b_k)] \quad \quad \text{(see (\ref{eq:sigmaG}) for the definition of }\sigma_G)
\end{equation}
 Since the operations involved (minimum, maximum, and taking the positive part) are continuous on the compact set $[0,1]^{n \times m}$, we get the following result:
\begin{lemma}\label{lemma:cont}
For all $i\in\ \{1,2,\cdots,n\}$,  the function   
$[0 , 1]^{n \times m} \rightarrow [0 , 1] : T \mapsto \delta^T_i$ is continuous.
\end{lemma}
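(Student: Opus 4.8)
The plan is to exhibit the map $T \mapsto \delta_i^T$ as a finite composition of elementary continuous operations, exploiting the fact that the right-hand side vector $b$ is held fixed throughout, so that only the entries of $T$ vary. First I would recall the standard toolbox: the coordinate projections $T \mapsto t_{kj}$ are continuous on $[0,1]^{n\times m}$; the positive-part map $t \mapsto t^+ = \max(t,0)$, addition, subtraction, and multiplication by $\tfrac12$ are all continuous; and the pointwise minimum and maximum of \emph{finitely many} continuous functions are again continuous. These are the only ingredients required, and the argument then proceeds by working from the innermost expression outward.

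Concretely, I would first observe that for fixed indices $i,k,j$ the map $T \mapsto \sigma_G(b_i, t_{kj}, b_k)$, see (\ref{eq:sigmaG}), is continuous. Here $b_i$ and $b_k$ are constants, so this map is the minimum of the constant $\tfrac{(b_i - b_k)^+}{2}$ and the function $(t_{kj} - b_k)^+$, the latter being a continuous function of the single entry $t_{kj}$; hence it is continuous in $T$. Next, $\delta^T(i,j)$, see (\ref{eq:deltaTij}), is the maximum of the continuous map $T \mapsto (b_i - t_{ij})^+$ and the $n$ continuous maps $T \mapsto \sigma_G(b_i, t_{kj}, b_k)$ for $1 \le k \le n$, and is therefore continuous. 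Finally, $\delta_i^T = \min_{1 \le j \le m} \delta^T(i,j)$, see (\ref{eq:deltaTi}), is the minimum of the $m$ continuous functions $T \mapsto \delta^T(i,j)$, whence continuous, which is exactly the claim.

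Since the statement is a straightforward composition of continuous maps, there is no genuine mathematical obstacle here; the work is essentially bookkeeping. The one point demanding care is to keep $b$ firmly in the role of a fixed parameter, so that $b_i$ and $b_k$ are treated as constants while the arguments $t_{ij}$ and $t_{kj}$ are the genuine variables, and to confirm that each inner map indeed depends continuously on the relevant entry of $T$. I would also note in passing that well-definedness into $[0,1]$ is automatic, since positive parts, halving, maxima and minima of quantities in $[0,1]$ remain in $[0,1]$; and that no compactness or uniformity argument is needed for continuity itself, the compactness of $[0,1]^{n\times m}$ being relevant only later for guaranteeing that extrema are attained.
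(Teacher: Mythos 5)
Your proof is correct and follows exactly the route the paper takes: the paper justifies Lemma \ref{lemma:cont} with the one-line observation that $\delta^T_i$ is built from minima, maxima, and positive parts of the entries of $T$, all of which are continuous, and your proposal is simply this argument spelled out from the innermost map $T \mapsto \sigma_G(b_i, t_{kj}, b_k)$ outward. Your closing remarks are also well placed: compactness of $[0,1]^{n\times m}$ plays no role in the continuity claim itself and is used only later (in Lemma \ref{lemma:DistanceDeltaApmin}) to replace the infimum in (\ref{eq:deltaAp}) by a minimum.
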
 
\qed

Given the matrix $A$ of an inconsistent max-min system $(S): A \Box_{\min}^{\max} x = b$, see (\ref{eq:maxminsys}), the author of \cite{baaj2024maxmin} defines:
\begin{definition}\label{def:setC}
The set formed by the right-hand side vectors of the consistent systems defined with the same matrix $A$ of the system $(S): A \Box_{\min}^{\max} x = b$ is:
\begin{equation}\label{eq:anymaxminRHS}
    \mathcal{C} = \{ c \in [0,1]^{n \times 1} \mid \text{ the system } A \Box_{\min}^{\max} x = c \text{ is consistent } \}. 
\end{equation}
\end{definition}
\begin{definition}\label{def:chebdist}
The  Chebyshev distance (using L-infinity norm) associated with the inconsistent max-min system $(S): A \Box_{\min}^{\max} x = b$ (see (\ref{eq:maxminsys})) is:
\begin{equation}\label{eq:cheb:def}
    \Delta = \min_{c \in \mathcal{C}} \Vert b -c \Vert_{\infty} \text{ where } \Vert b - c \Vert = \max_{1 \leq i \leq n} \mid b_i - c_i \mid.
\end{equation}
\end{definition}

The author of \cite{baaj2024maxmin} introduced an analytical formula for computing  the Chebyshev distance $\Delta$ (Definition \ref{def:chebdist}) associated to the max-min  system $(S): A \Box_{\min}^{\max} x = b$, which relies on $\delta^A_i$, see (\ref{eq:deltaTi}): 
\begin{equation}\label{eq:Delta}
    \Delta = \Delta(A,b)= \max_{1 \leq i \leq n} \delta^A_i 
\end{equation}

The following results were proven in \cite{baaj2024maxmin}:
\begin{equation}\label{eq:equivDelta}
    \Delta = 0 \Longleftrightarrow \text{ the  system } (S): A \Box_{\min}^{\max} x = b \text { is consistent. } 
\end{equation}
We define a subset of $\mathcal{C}$, see (\ref{eq:anymaxminRHS}):
\begin{definition}\label{def:chebapproxset}
The set of  Chebyshev approximations of the right-hand side vector $b$ of the  system $(S): A \Box_{\min}^{\max} x = b$ (see (\ref{eq:maxminsys})) is defined using the Chebyshev distance $\Delta$ associated with the right-hand side vector $b$ of $(S)$ (see Definition \ref{def:chebdist}): 
\begin{equation}
    \mathcal{C}_b = \{ c \in [0,1]^{n \times 1} \mid \text{ the system } A \Box_{\min}^{\max} x = c \text{ is consistent and } \Vert b - c \Vert = \Delta \} \subseteq \mathcal{C}.
\end{equation}
\end{definition}

The author of  \cite{baaj2024maxmin} showed that we can directly obtain one vector in $\mathcal{C}_b$: the greatest Chebyshev approximation of the right-hand side vector $b$:
\begin{equation}\label{eq:greatestcheb}
    \widehat{b} = A \Box_{\min}^{\max} (A^t \Box_{\rightarrow_G}^{\min} \overline{b}(\Delta)) \in \mathcal{C}_b,
\end{equation}
where  $\overline{b}(\Delta) = \begin{bmatrix}
    \min(b_i + \Delta , 1)
\end{bmatrix}_{1 \leq i \leq n}$. So, we have $\Vert b -\widehat{b} \Vert = \Delta$ and the  system $A \Box_{\min}^{\max} x =  \widehat{b}$ is consistent. Obtaining the minimal Chebyshev approximations of $b$ is possible with a method developed in  \cite{baaj2024maxmin}, but minimal  approximations are more expensive to obtain in terms of computational cost than the greatest Chebyshev approximation $\widehat{b}$. 

The author of  \cite{baaj2024maxmin} defines:

\begin{definition}
An approximate solution of the system $(S): A \Box_{\min}^{\max} x = b$ is a vector $x^\ast$ such that the distance between the vector $c = A \Box_{\min}^{\max} x^\ast$ and the vector $b$ is equal to $\Delta$, i.e.,  we have $\Vert b - c \Vert = \Delta$. Thus, the vector $x^\ast$ is a solution of a consistent system $A \Box_{\min}^{\max} x = c$ defined with the matrix $A$ and the vector $c$, which is a Chebyshev approximation of $b$ (Definition \ref{def:chebapproxset}), i.e., we have $c \in \mathcal{C}_b$. 
\end{definition}

In \cite{baaj2024maxmin}, the author   showed that we can directly obtain the greatest approximate solution of the system $(S)$, which is the following vector:
\begin{equation}\label{eq:greatestapproxsol}
    \eta = A^t \Box_{\rightarrow_G}^{\min} \overline{b}(\Delta).
\end{equation}
Of course, the greatest approximate solution $\eta$ of the system $(S)$ is the greatest solution of the consistent system $A \Box_{\min}^{\max} x =  \widehat{b}$. In \cite{baaj2024maxmin}, the author describes how to obtain  a set of minimal approximate solutions.

\begin{example}\label{ex:incons}
    Let a max-min system $A \Box_{\min}^{\max} x = b$ be defined by: 
\begin{equation}\label{eq:ex:Ab}
    A = \begin{bmatrix}
    0.04& 0.73& 0.50\\
    0.33& 0.35& 0.94\\ 
    0.55& 0.90&  0.35
\end{bmatrix} \quad \text{ and } \quad b = \begin{bmatrix}
    0.30\\  0.42\\ 0.76
\end{bmatrix}
\end{equation}

We compute the Chebyshev distance $\Delta$, see (\ref{eq:Delta}), associated with the right-hand side vector $b$ of the  max-min system $A \Box_{\min}^{\max} x = b$:
\begin{equation*}
    \Delta = \max_{1 \leq i \leq 3} \delta^A_i
\end{equation*}
where:
\begin{align*}
    \delta^A_1 &= \min(\delta(1,1), \delta(1,2), \delta(1,3)) = \min(0.26, 0, 0) = 0, \\
    \delta^A_2 &= \min(\delta(2,1), \delta(2,2), \delta(2,3)) = \min(0.09, 0.07, 0.06) = 0.06, \\
    \delta^A_3 &= \min(\delta(3,1), \delta(3,2), \delta(3,3)) = \min(0.21, 0.23, 0.41) = 0.21. \\
\end{align*}
Therefore, we obtain $\Delta = 0.21$, so,  from (\ref{eq:equivDelta}), the  system $A \Box_{\min}^{\max} x = b$ is inconsistent. Let us compute $\overline{b}(\Delta) = \begin{bmatrix}0.51 \\ 0.63 \\ 0.97\end{bmatrix}$ and the greatest Chebyshev approximation of $b$, see (\ref{eq:greatestcheb}):
\begin{equation*}
     \widehat{b} = A \Box_{\min}^{\max} (A^t \Box_{\rightarrow_G}^{\min} \overline{b}(\Delta)) = \begin{bmatrix}0.51\\ 0.63\\ 0.55\end{bmatrix}.
\end{equation*}
One can check that $\Vert b -  \widehat{b} \Vert = \Delta$ and that the  system  $A \Box_{\min}^{\max} x = \widehat{b}$ is consistent.

We compute the greatest approximate solution of the inconsistent  system $A \Box_{\min}^{\max} x = b$, see (\ref{eq:greatestapproxsol}):
\begin{equation*}
 \eta = A^t \Box_{\rightarrow_G}^{\min} \overline{b}(\Delta) = \begin{bmatrix}1.0 \\ 0.51\\ 0.63\end{bmatrix}.
\end{equation*}
One can check that $\eta$ is the greatest solution of the consistent  system  $A \Box_{\min}^{\max} x =  \widehat{b}$.
\end{example}

\section{On minimally modifying the matrix of an inconsistent max-min system}
\label{sec:minimallymod}

In this section, we present our problem formally: how to minimally modify the matrix $A$ of size $(n,m)$ of an inconsistent max-min  system $(S): A \Box_{\min}^{\max} x = b$, see (\ref{eq:maxminsys}), in order to obtain a consistent  system, while keeping the right-hand side vector $b$ unchanged. 

For this purpose, we introduce the following set, which is formed by  the matrices of size $(n,m)$ governing the  consistent max-min  systems which use the same right-hand side vector: the right-hand side vector $b$ of the system $(S)$:

\begin{equation}\label{eq:setT}
    {\cal T} = \{T\in [0 , 1]^{n \times m}\mid \text{the system }  T \Box_{\min}^{\max} x = b  \text{ is consistent}\}.
\end{equation}

\noindent
It is easy to see that the particular matrix $T = [t_{kl}]_{1 \leq k \leq n , 1\leq l \leq m}$
whose coefficients are $t_{kl}:= b_k$, belongs to the set ${\cal T}$.  Therefore ${\cal T}$   is a non-empty set.

\noindent For each $p \in \{1,2,\infty\}$, we define the distance from the fixed matrix $A$ to the set $\mathcal{T}$ as:
\begin{equation}\label{eq:deltaAp}
    \mathring{\Delta}_{p} := \inf_{T \in \mathcal{T}} \|T - A\|_p.
\end{equation}
Here, $\|\cdot\|_p$ denotes the matrix norm with exponent $p$, namely, for a matrix $X=[x_{ij}]\in\mathbb{R}^{n\times m}$:
\begin{equation}\label{eq:entrywise_p_norm}
  \|X\|_p \;=\;
  \begin{cases}
    \displaystyle\sum_{i=1}^n\sum_{j=1}^m \bigl|x_{ij}\bigr|,
      & p=1,\\[1em]
    \displaystyle\Bigl(\sum_{i=1}^n\sum_{j=1}^m \bigl|x_{ij}\bigr|^2\Bigr)^{\!1/2},
      & p=2,\\[1em]
    \displaystyle\max_{1\le i\le n,\;1\le j\le m}\bigl|x_{ij}\bigr|,
      & p=\infty.
  \end{cases}
\end{equation}

Our aim is to compute  matrices in the set $\mathcal{T}$, i.e., matrices governing  consistent max-min systems whose right-hand side vector is $b$, which are the closest to the matrix $A$ of the system $(S): A \Box_{\min}^{\max} x = b$ according to the distance $\mathring{\Delta}_{p}$.  We also study how to compute the distance $\mathring{\Delta}_{p}$ with respect to $p$  according to the coefficients of $A$ and $b$.

In this section, we   present some preliminary results and tools that will be useful for constructing matrices in the set $\mathcal{T}$ in subsequent sections. 
We show that the matrices of the set $\mathcal{T}$ defined in (\ref{eq:setT}) dominate in a certain sense the auxiliary matrices noted $A^{(i,j)}$ associated with pairs of indices $(i,j) \in \{1,2,\dots,n\} \times \{1,2,\dots,m\}$, that we introduce in Definition \ref{def:matAij}. We   establish useful  properties  of these  auxiliary matrices $A^{(i,j)}$ for our problem.

\begin{restatable}{lemma}{lemmaDistanceDeltaApmin}
\label{lemma:DistanceDeltaApmin}

The formula (\ref{eq:deltaAp}) can be rewritten as: $$\mathring{\Delta}_{p}  = \min\limits_{T\in {\cal T}}\, \Vert T - A\Vert_p.$$
\end{restatable}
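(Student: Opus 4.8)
The plan is to show that the infimum defining $\mathring{\Delta}_{p}$ is attained; this reduces to establishing that $\mathcal{T}$ is compact and that the objective is continuous, after which the extreme value theorem finishes the argument. To begin, $\mathcal{T}$ is nonempty (the matrix with $t_{kl} := b_k$ lies in $\mathcal{T}$, as already observed) and bounded, since $\mathcal{T} \subseteq [0,1]^{n \times m}$.

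The crux is to prove that $\mathcal{T}$ is closed. I would use the consistency characterization via the Chebyshev distance: applying the equivalence (\ref{eq:equivDelta}) with a general matrix $T$ in place of $A$, we have $T \in \mathcal{T}$ if and only if $\Delta(T,b) = \max_{1 \le i \le n} \delta_i^T = 0$. By Lemma \ref{lemma:cont} each map $T \mapsto \delta_i^T$ is continuous, so $T \mapsto \Delta(T,b)$, being a maximum of finitely many continuous functions, is continuous on $[0,1]^{n \times m}$. Since $\delta_i^T \ge 0$ for every $i$, we have $\Delta(T,b) \ge 0$, and hence $\mathcal{T} = \{T \in [0,1]^{n \times m} \mid \Delta(T,b) \le 0\}$ is the preimage of the closed set $(-\infty, 0]$ under a continuous function. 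Thus $\mathcal{T}$ is closed, and being closed and bounded in $\mathbb{R}^{n \times m}$, it is compact.

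Finally, for each $p \in \{1, 2, \infty\}$ the map $T \mapsto \|T - A\|_p$ is continuous, since each entrywise norm in (\ref{eq:entrywise_p_norm}) depends continuously on the entries of $T - A$. A continuous real-valued function on a nonempty compact set attains its minimum, so there exists $T^\star \in \mathcal{T}$ with $\|T^\star - A\|_p = \inf_{T \in \mathcal{T}} \|T - A\|_p$, which proves $\mathring{\Delta}_{p} = \min_{T \in \mathcal{T}} \|T - A\|_p$.

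The step I expect to be the main obstacle is the closedness of $\mathcal{T}$: it is not transparent from the defining condition (solvability of $T \Box_{\min}^{\max} x = b$), and the argument hinges on re-encoding consistency as the vanishing of the continuous scalar $\Delta(T,b)$, so Lemma \ref{lemma:cont} and the equivalence (\ref{eq:equivDelta}) are the essential ingredients.
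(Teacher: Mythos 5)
Your proposal is correct and takes essentially the same approach as the paper: both re-encode consistency via (\ref{eq:equivDelta}) as the vanishing of the continuous functions $T \mapsto \delta_i^T$ (Lemma \ref{lemma:cont}) to show $\mathcal{T}$ is a non-empty closed subset of the compact set $[0,1]^{n \times m}$, and then conclude by continuity of $\|\cdot\|_p$ that the infimum is attained. Your packaging of the condition as the preimage of $(-\infty,0]$ under the single scalar $\Delta(T,b)$ is only a cosmetic reformulation of the paper's condition ``$\delta_i^T = 0$ for all $i$.''
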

The proof is given in Subsection \ref{sec:proof:minimallymod}.

We deduce the following equivalence, which is analog to (\ref{eq:equivDelta}):
\begin{corollary}
    \begin{equation}
        \mathring{\Delta}_{p} = 0 \Longleftrightarrow \text{ the system $(S)$ is consistent.}
    \end{equation}
\end{corollary}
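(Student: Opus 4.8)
The plan is to derive the equivalence as an immediate consequence of two facts: the positive‑definiteness of the norm $\|\cdot\|_p$ (for each $p\in\{1,2,\infty\}$, $\|X\|_p=0$ holds if and only if $X=0$), and the fact that the infimum in $(\ref{eq:deltaAp})$ is actually attained as a minimum, which was established in Lemma $\ref{lemma:DistanceDeltaApmin}$. I would prove the two implications separately.

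First I would treat the direction $(\Leftarrow)$. Assume that the system $(S): A \Box_{\min}^{\max} x = b$ is consistent. Then, by the very definition of the set $\mathcal{T}$ in $(\ref{eq:setT})$, the matrix $A$ itself is an element of $\mathcal{T}$. Taking $T=A$ as a competitor in the minimization, and using Lemma $\ref{lemma:DistanceDeltaApmin}$, we obtain
\[
\mathring{\Delta}_{p} \;=\; \min_{T\in\mathcal{T}} \|T-A\|_p \;\le\; \|A-A\|_p \;=\; 0 .
\]
Since every norm is non‑negative, $\mathring{\Delta}_{p}\ge 0$, and therefore $\mathring{\Delta}_{p}=0$.

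Next I would treat the direction $(\Rightarrow)$. Assume $\mathring{\Delta}_{p}=0$. By Lemma $\ref{lemma:DistanceDeltaApmin}$, the value $\mathring{\Delta}_{p}$ is attained, so there exists a matrix $T^{\ast}\in\mathcal{T}$ with $\|T^{\ast}-A\|_p=0$. By positive‑definiteness of the norm $\|\cdot\|_p$, this forces $T^{\ast}=A$. Hence $A\in\mathcal{T}$, which by the definition of $\mathcal{T}$ in $(\ref{eq:setT})$ means precisely that the system $A \Box_{\min}^{\max} x = b$ is consistent, i.e.\ $(S)$ is consistent.

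There is no genuine computational obstacle here; the statement is a formal consequence of the definitions. The one point that must not be glossed over is the attainment of the minimum: if we only knew that $\mathring{\Delta}_{p}$ equals an \emph{infimum} of zero, we could not immediately produce a matrix $T^{\ast}$ realizing $\|T^{\ast}-A\|_p=0$, and the forward implication would require an additional closedness argument for $\mathcal{T}$. This is exactly why invoking Lemma $\ref{lemma:DistanceDeltaApmin}$ (which upgrades the infimum to a minimum) is the essential ingredient, and it is the only step where care is needed.
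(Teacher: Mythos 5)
Your proof is correct and is exactly the argument the paper leaves implicit (the corollary is stated with no written proof, as an immediate consequence of Lemma \ref{lemma:DistanceDeltaApmin}): the backward direction uses $A\in\mathcal{T}$, and the forward direction uses the attainment of the minimum together with positive-definiteness of $\|\cdot\|_p$. Your closing remark correctly pinpoints why the lemma's upgrade from infimum to minimum (via closedness of $\mathcal{T}$ and compactness of $[0,1]^{n\times m}$) is the one nontrivial ingredient.
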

\qed

\subsection{Constructing the auxiliary matrices \texorpdfstring{$A^{(i,j)}$}{A(i,j)} from the matrix \texorpdfstring{$A$}{A} of the inconsistent system \texorpdfstring{$(S): A \Box_{\min}^{\max} x = b$}{(S): A maxmin x = b}}
\label{sec:auxiliarymatrix}

We introduce the following notations associated with the system  $(S): A \Box_{\min}^{\max} x = b$, see (\ref{eq:maxminsys}). 
\begin{notation}\label{notation:basicNotForS}
 Let us use the following set of indices:
\begin{equation}
    I = \{1,2,\cdots,n\} \quad \text{and} \quad J = \{1,2,\dots,m\}.
\end{equation}
For all $(i,j) \in I \times J$, a given matrix $T = \begin{bmatrix}
    t_{ij}
\end{bmatrix}_{1 \leq i \leq n, 1 \leq j \leq m} \in [0,1]^{n\times m}$ and the right-hand side vector $b$ of the system $(S)$, we define the following set using the function $\sigma_G$, see (\ref{eq:sigmaG}):
\begin{equation}\label{eq:UDij}
U^T_{ij} := \{k\in I  \mid  \sigma_G(b_i , t_{kj} , b_k) > 0\}.\end{equation}
Using (\ref{eq:sigmaG}), we get the following  equivalent definition of the set $U^T_{ij}$:
\begin{equation}\label{eq:UDij1}
U^T_{ij}  = \{k\in I  \mid  \theta(i,k) \cdot (t_{kj} - b_k)^+ > 0\}\quad \text{where} \quad
\theta: I \times I \mapsto \{0,1\}: \theta(i,k) = \begin{cases}
1 & \text{ if } b_i > b_k\\
0 & \text{ otherwise}
\end{cases}.\end{equation}

In particular, for any pair of indices $(i,j) \in I \times J$, we can compute the set $U^A_{ij}$ using the matrix $A$ of the system $(S)$.\\
From the formula of the Chebyshev distance $\Delta = \max_{1 \leq i \leq n} \delta^A_i$, see (\ref{eq:Delta}), associated with the right-hand side vector $b$ of the system $(S)$, we set:
\begin{equation}\label{eq:nconsninc}
    N_{\text{cons}} := \{ i \in I \mid \delta_i^A = 0 \} \quad \text{ and } \quad N_{\text{inc}} := \overline{N_{\text{cons}}} = \{ i \in I \mid \delta_i^A > 0 \}.
\end{equation}
The two disjoint  sets $N_{\text{cons}}$ and $N_{\text{inc}}$ form a partition of the set $I$ associated with the pair $(A , b)$. We know by   (\ref{eq:equivDelta}) that the equality $N_{\text{cons}} = I$ is equivalent to the consistency of the system  $(S): A \Box_{\min}^{\max} x = b$.

\end{notation}

\begin{definition}\label{def:matAij}
For any fixed pair of indices $(i,j) \in I \times J$, we define the matrix 
$A^{(i,j)} := \begin{bmatrix}
    a^{(i,j)}_{kl}
\end{bmatrix}_{1 \le k \le n,\, 1 \le l \le m} \in [0,1]^{n \times m}$, 
whose coefficients are defined using the matrix  $A$,  the right-hand side vector $b$ of the inconsistent system $(S): A \Box_{\min}^{\max} x = b$, and the set $U^A_{ij}$, see (\ref{eq:UDij}), by:

\begin{subequations}\label{eq:Aijklgroup}
\begin{equation}\label{eq:Aijkl1}
a^{(i,j)}_{ij} := \begin{cases}
    b_i \quad & \text{if } \quad      b_i > a_{ij}    \\
 a_{ij}   \quad & \text{if } \quad b_i \leq a_{ij}
\end{cases} \quad  \text{and for } \quad \forall k \in I\backslash\{i\}, \quad         a^{(i , j)}_{kj} := \begin{cases}
b_k  \quad &  \text{if } \quad k \in U^A_{ij}  \\
 a_{kj}   \quad & \text{if } \quad k \notin U^A_{ij}  
 \end{cases}.     
\end{equation}
\begin{equation}\label{eq:Aijkl2}
\forall k \in I \, \forall l\in J \backslash\{j\}, 
  \quad  a^{(i , j)}_{kl} := a_{kl}.   
\end{equation}
\end{subequations}
\end{definition}
The matrix $A^{(i,j)}$ is derived from the matrix $A$ by \textit{only modifying some entries in the column with index $j$}. Specifically, we have:
\begin{equation}\label{eq:aijklisbkorakl}
\forall (k , l)\in I \times J, \quad a^{(i , j)}_{kl} \in \{b_k , a_{kl}\}.
\end{equation}

We begin by establishing some useful properties of the scalars $\delta^{A^{(i , j)}}(s ,l)$, which is computed using (\ref{eq:deltaTij}) and associated with the matrix $A^{(i , j)}$ and the right-hand side vector $b$:

\begin{restatable}{lemma}{lemmadeltaaijijequalzero}
\label{lemma:deltaaijijequalzero}
Let $(i,j) \in I \times J$ be a fixed pair of indices. Then, we have:
\begin{enumerate}
  \item $\delta^{A^{(i , j)}}(i , j) = 0$.
  \item For all  $1\leq s \leq n$, we have: $$\delta^{A}(s , j) = 0 \Longrightarrow \delta^{A^{(i , j)}}(s , j) = 0.$$ 
    \item For all $1 \leq s \leq n$ and   
    $1 \leq l \not=j \leq m$, we have: $$\delta^{A^{(i , j)}}(s , l) = \delta^A(s , l).$$
    \item $\Vert A^{(i , j)} - A \Vert_\infty = 
    \max\Bigl[(b_i - a_{ij})^+, \, \max_{k \in I, k\neq i}\Bigl(\theta(i,k) \cdot (a_{kj} - b_k)^+\Bigr)\Bigr]$ 
    
    where the function $\theta$ is defined in (\ref{eq:UDij1}). 
\end{enumerate}
\end{restatable}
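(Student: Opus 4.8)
The plan is to treat the four claims essentially independently, exploiting two structural facts about the matrix $A^{(i,j)}$: first, by (\ref{eq:Aijkl2}) it agrees with $A$ outside column $j$; second, by (\ref{eq:aijklisbkorakl}) every entry satisfies $a^{(i,j)}_{kl} \in \{b_k, a_{kl}\}$. Throughout I will use the observation that the scalar $\delta^T(s,l)$ defined in (\ref{eq:deltaTij}) depends only on $b$ and on the entries of $T$ lying in column $l$.

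For Part 1, I would expand $\delta^{A^{(i,j)}}(i,j)$ via (\ref{eq:deltaTij}) and show that each of its two arguments vanishes. The term $(b_i - a^{(i,j)}_{ij})^+$ is zero because the definition (\ref{eq:Aijkl1}) sets $a^{(i,j)}_{ij} = \max(a_{ij}, b_i)$, so $a^{(i,j)}_{ij} \geq b_i$ in both branches. For the inner maximum over $k$ of $\sigma_G(b_i, a^{(i,j)}_{kj}, b_k)$, I would split on membership in $U^A_{ij}$: if $k \in U^A_{ij}$ then $a^{(i,j)}_{kj} = b_k$ forces the factor $(a^{(i,j)}_{kj} - b_k)^+ = 0$, killing $\sigma_G$; if $k \notin U^A_{ij}$ (including $k = i$) then $a^{(i,j)}_{kj} = a_{kj}$ and the defining property of $U^A_{ij}$ in (\ref{eq:UDij}) gives $\sigma_G(b_i, a_{kj}, b_k) = 0$ directly. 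Hence both arguments are zero and $\delta^{A^{(i,j)}}(i,j) = 0$.

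Part 2 follows the same bookkeeping, now starting from the hypothesis $\delta^A(s,j) = 0$, which unpacks to $b_s \leq a_{sj}$ and $\sigma_G(b_s, a_{kj}, b_k) = 0$ for every $k$. Using $a^{(i,j)}_{kj} \in \{b_k, a_{kj}\}$ again: whenever an entry was pushed to $b_k$ the $\sigma_G$ factor vanishes, and whenever it was left at $a_{kj}$ the hypothesis supplies the vanishing; the diagonal term $(b_s - a^{(i,j)}_{sj})^+$ is zero since $a^{(i,j)}_{sj}$ is either $b_s$ or $a_{sj} \geq b_s$. Part 3 is immediate from the two structural facts: for $l \neq j$, column $l$ of $A^{(i,j)}$ coincides with that of $A$ by (\ref{eq:Aijkl2}), and since $\delta^T(s,l)$ reads only column $l$, the two scalars agree.

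Part 4 is the only genuinely computational claim. Since $A^{(i,j)}$ and $A$ differ only in column $j$, the entrywise $L_\infty$ norm reduces to $\max_{k \in I} |a^{(i,j)}_{kj} - a_{kj}|$, and I would evaluate each absolute difference. For $k = i$, the two branches of (\ref{eq:Aijkl1}) combine to give exactly $(b_i - a_{ij})^+$. For $k \neq i$, I would split on membership in $U^A_{ij}$ and reconcile with the form in (\ref{eq:UDij1}): on $U^A_{ij}$ we have both $\theta(i,k) = 1$ and $a_{kj} > b_k$, so $|b_k - a_{kj}| = (a_{kj} - b_k)^+ = \theta(i,k)(a_{kj} - b_k)^+$; off $U^A_{ij}$ the entry is unchanged while $\theta(i,k)(a_{kj} - b_k)^+ = 0$ as well. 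Taking the maximum yields the stated formula. I expect the main obstacle to be purely organizational, namely keeping the case distinctions ($k = i$ versus $k \neq i$, and inside versus outside $U^A_{ij}$) aligned with the correct branch of (\ref{eq:Aijkl1}), rather than any conceptual difficulty, since all four parts reduce to the single observation that each modified entry either equals $b_k$, neutralizing the relevant positive part, or is left untouched, inheriting a known vanishing.
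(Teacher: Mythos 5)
Your proof is correct and takes essentially the same route as the paper's: parts 1--3 rest on the same case split ($k \in U^A_{ij}$ versus not, together with the fact that every modified entry lies in $\{b_k, a_{kl}\}$), and part 4 reduces the norm to column $j$ and identifies $\theta(i,k)\,(a_{kj}-b_k)^+$ with $\lvert a^{(i,j)}_{kj}-a_{kj}\rvert$ exactly as the paper does. One small repair in part 1: for $k=i$ with $b_i > a_{ij}$ your claim that ``$k \notin U^A_{ij}$ implies $a^{(i,j)}_{kj}=a_{kj}$'' fails (the entry is changed to $b_i$), but the term vanishes anyway since $\sigma_G(b_i,\,\cdot\,,b_i)=0$ regardless of the middle argument --- which is how the paper disposes of the $k=i$ case.
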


\noindent Any matrix $T = [t_{kl}]_{1 \leq k \leq n, 1 \leq l \leq m}$ satisfying $\delta^T(i,j)=0$ for a given pair of indices $(i , j)\in I \times J$ dominates the auxiliary matrix 
$A^{(i,j)}$ in the following sense: the distance $|t_{kl}-a_{kl}|$ between  a coefficient $t_{kl}$ of the matrix $T$ and the corresponding coefficient $a_{kl}$ of the original matrix $A = [a_{kl}]$ is always at least as large as the distance  $|a^{(i,j)}_{kl}-a_{kl}|$ between the coefficient $a^{(i,j)}_{kl}$ of the matrix $A^{(i,j)}$ and $a_{kl}$: 
\begin{restatable}{lemma}{lemmamatrixTwithdetaTijzero}
\label{lemma:matrixTwithdetaTijzero}
Let 
$T = [t_{kl}]_{1 \leq k \leq n , 1 \leq l \leq m}$ be a matrix such that  $\delta^T(i , j) = 0$. 
Then,   we have:
\begin{enumerate}
    \item   $\mid a^{(i , j)}_{kl} - a_{kl} \mid \leq \mid t_{kl} -a_{kl}\mid$ 
    for all pairs $(k , l)  \in I \times J$,
    \item $\Vert A^{(i , j)}  - A\Vert_p \leq  \Vert T  - A\Vert_p $ for $p\in\{1 , 2 , \infty\}$.
\end{enumerate}
\end{restatable}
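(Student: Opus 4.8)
The plan is to establish the entrywise inequality of item (1) first, and then to obtain the norm inequality of item (2) as an immediate consequence of the monotonicity of each of the three norms with respect to the entrywise absolute values. The crux of the argument is to decode the hypothesis $\delta^T(i,j) = 0$. By the definition (\ref{eq:deltaTij}) of $\delta^T(i,j)$ together with (\ref{eq:sigmaG}), the equality $\delta^T(i,j) = 0$ is equivalent to the conjunction of two conditions: first, $(b_i - t_{ij})^+ = 0$, i.e., $t_{ij} \geq b_i$; and second, $\sigma_G(b_i, t_{kj}, b_k) = 0$ for every $k \in I$, which unfolds to the implication: for each $k \in I$, if $b_i > b_k$ then $t_{kj} \leq b_k$. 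These two facts about the entries of $T$ in column $j$ are the only properties of $T$ I would use.

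For item (1), first I would dispose of the columns $l \neq j$: by (\ref{eq:Aijkl2}) we have $a^{(i,j)}_{kl} = a_{kl}$, so the left-hand side vanishes and the inequality holds trivially. It then remains to treat the column $l = j$, which I would split according to whether $k = i$ or $k \neq i$. For the entry $(i,j)$, by (\ref{eq:Aijkl1}) we have $a^{(i,j)}_{ij} = \max(a_{ij}, b_i)$; if $b_i \leq a_{ij}$ the left-hand side is zero and nothing is to prove, while if $b_i > a_{ij}$ then $|a^{(i,j)}_{ij} - a_{ij}| = b_i - a_{ij}$, and the first decoded condition $t_{ij} \geq b_i > a_{ij}$ yields $|t_{ij} - a_{ij}| = t_{ij} - a_{ij} \geq b_i - a_{ij}$, as required. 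For an entry $(k,j)$ with $k \neq i$, if $k \notin U^A_{ij}$ then $a^{(i,j)}_{kj} = a_{kj}$ and the inequality is trivial; if $k \in U^A_{ij}$, then by the characterization (\ref{eq:UDij1}) we have simultaneously $b_i > b_k$ and $a_{kj} > b_k$, so $a^{(i,j)}_{kj} = b_k$ and $|a^{(i,j)}_{kj} - a_{kj}| = a_{kj} - b_k$. This is exactly where the second decoded condition enters: since $b_i > b_k$, it forces $t_{kj} \leq b_k < a_{kj}$, whence $|t_{kj} - a_{kj}| = a_{kj} - t_{kj} \geq a_{kj} - b_k$. This exhausts all cases and closes item (1).

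For item (2), since the entrywise bound $0 \leq |a^{(i,j)}_{kl} - a_{kl}| \leq |t_{kl} - a_{kl}|$ holds for every pair $(k,l) \in I \times J$, each of the three norms defined in (\ref{eq:entrywise_p_norm}) — a maximum for $p = \infty$, a sum of nonnegative terms for $p = 1$, and a sum of squares followed by a monotone square root for $p = 2$ — is monotone in these nonnegative quantities, so the entrywise bound propagates directly to $\|A^{(i,j)} - A\|_p \leq \|T - A\|_p$ for every $p \in \{1, 2, \infty\}$. The only genuinely delicate step is the subcase $k \in U^A_{ij}$ in item (1); everything else is routine bookkeeping. I expect the main (modest) obstacle to be ensuring that the two decoded conditions on $T$ are applied to the correct entries — in particular, recognizing that membership in $U^A_{ij}$ guarantees the strict inequality $b_i > b_k$, which is precisely the trigger that activates the second condition $t_{kj} \leq b_k$ and thereby produces the desired bound on $|t_{kj} - a_{kj}|$.
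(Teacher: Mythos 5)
Your proof is correct and follows essentially the same route as the paper's: you decode $\delta^T(i,j)=0$ into $t_{ij}\geq b_i$ and the implication $b_i>b_k \Rightarrow t_{kj}\leq b_k$, then handle the only nontrivial entries ($(i,j)$ when $b_i>a_{ij}$, and $(k,j)$ with $k\in U^A_{ij}$) via the same chains $a_{ij}<b_i=a^{(i,j)}_{ij}\leq t_{ij}$ and $t_{kj}\leq b_k=a^{(i,j)}_{kj}<a_{kj}$, deducing the norm bounds from entrywise monotonicity exactly as the paper does. No gaps.
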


\noindent
In the following lemma, we show that if we first apply the transformation 
$A \mapsto A^{(i,j)}$ with parameters $(i_1,j_1)$ to the fixed matrix $A$, and then apply to the resulting matrix the same transformation with parameters $(i_2,j_2)$ {\it  with $i_1 \neq i_2$}, the final result is the same as when the order of these two transformations is reversed.
\begin{restatable}{lemma}{lemmacommutativeAij}
\label{lemma:commutativeAij}
Let $i_1,i_2 \in I$ with $i_1 \neq i_2$ and $j_1, j_2 \in J$. Define
$$
C := A^{(i_1,j_1)}, \quad D := C^{(i_2,j_2)}, \quad E := A^{(i_2,j_2)}, \quad F := E^{(i_1,j_1)}.
$$
Then,
$$
D = F.
$$
\end{restatable}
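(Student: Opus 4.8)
The plan is to exploit the fact that each transformation $A \mapsto A^{(i,j)}$ rewrites \emph{only column $j$} (see (\ref{eq:Aijkl2})), so that $D$ and $F$ agree automatically on every column $l \notin \{j_1, j_2\}$, and it suffices to compare the matrices column by column. I would split the argument according to whether $j_1 = j_2$.

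First, in the case $j_1 \neq j_2$ the two edits act on disjoint columns. The key point is that the set $U^{M}_{ij}$ governing the $(i,j)$-transformation of a matrix $M$ depends on $M$ only through the entries of column $j$ (see (\ref{eq:UDij1})). Since the $(i_1,j_1)$-edit leaves column $j_2$ untouched, $U^{C}_{i_2 j_2} = U^{A}_{i_2 j_2}$, and symmetrically $U^{E}_{i_1 j_1} = U^{A}_{i_1 j_1}$. Hence in both orders column $j_1$ is the $(i_1,j_1)$-modification of column $j_1$ of $A$ and column $j_2$ is the $(i_2,j_2)$-modification of column $j_2$ of $A$, which gives $D = F$ directly.

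The substantive case is $j_1 = j_2 =: j$, where both edits rewrite the same column and $i_1 \neq i_2$. Here I would reduce to showing $d_{kj} = f_{kj}$ for every row $k$, treating three types of rows: $k \notin \{i_1,i_2\}$, $k = i_1$, and $k = i_2$. The organizing observation is that, by (\ref{eq:aijklisbkorakl}), every edited entry lies in $\{a_{kj}, b_k\}$; once an entry has been reset to $b_k$ it can never re-enter a later set $U^{\cdot}_{i'j}$ because $(b_k - b_k)^+ = 0$, whereas an entry left equal to $a_{kj}$ satisfies ``entry $>b_k$'' exactly when $a_{kj} > b_k$. Together with the fact that the diagonal modification satisfies $\max(a_{ij},b_i) > b_i$ precisely when $a_{ij} > b_i$, this shows that membership of a row in the \emph{second} transformation's $U$-set, although nominally computed on the already-modified matrix, is governed only by the original comparisons $a_{kj} > b_k$, $b_{i_1} > b_k$ and $b_{i_2} > b_k$.

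Carrying this out, for an off-diagonal row $k \notin \{i_1,i_2\}$ I expect to find that $d_{kj} = b_k$ holds precisely when $a_{kj} > b_k$ and ($b_{i_1} > b_k$ or $b_{i_2} > b_k$), a condition manifestly symmetric in $i_1$ and $i_2$, so $d_{kj} = f_{kj}$ (both equalling $a_{kj}$ otherwise). For the distinguished rows $k = i_1$ and $k = i_2$ --- which are distinct exactly because $i_1 \neq i_2$ --- each is the diagonal of one transformation and off-diagonal of the other; a short direct check, using that $i_1 \in U^{C}_{i_2 j}$ is equivalent to $i_1 \in U^{A}_{i_2 j}$ (and its mirror for $i_2$), shows both orders yield $b_{i_\ell}$ under the same condition ($b_{i'} > b_{i_\ell}$ and $a_{i_\ell j} > b_{i_\ell}$, with $i'$ the other index) and yield $\max(a_{i_\ell j}, b_{i_\ell})$ otherwise. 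I expect the main obstacle to be precisely this invariance: verifying that the second transformation's $U$-set, evaluated on the matrix already altered by the first transformation, neither gains nor loses indices relative to the computation ``as if from $A$.'' Once that is established, the symmetry under $i_1 \leftrightarrow i_2$ is transparent and the equality $D = F$ follows.
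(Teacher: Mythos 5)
Your proposal is correct and follows essentially the same route as the paper's proof: split on whether $j_1 = j_2$ (the disjoint-column case being immediate since $U^M_{ij}$ depends only on column $j$ of $M$), and in the shared-column case compare the $j$-th column entry by entry for the rows $i_1$, $i_2$, and $k \notin \{i_1,i_2\}$, using that every edited entry lies in $\{b_k, a_{kj}\}$ and that membership in the second edit's $U$-set reduces to the original comparisons on $A$ and $b$. Your repackaging of the case analysis into a single symmetric characterization of the final value (e.g.\ $d_{kj}=b_k$ iff $a_{kj}>b_k$ and $b_{i_1}>b_k$ or $b_{i_2}>b_k$) is a mild streamlining of the paper's two-sided case chase, but rests on exactly the same invariance facts.
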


The proofs of Lemma \ref{lemma:deltaaijijequalzero}, \ref{lemma:matrixTwithdetaTijzero} and \ref{lemma:commutativeAij} are written in Subsection \ref{sec:proofs:auxiliarymatrix}.

\begin{example}\label{ex:matAij}
(continued)
We reuse the inconsistent max-min system $(S) : A \Box_{\min}^{\max} x = b$ of Example \ref{ex:incons} where:
\[
A = \begin{bmatrix}
0.04 & 0.73 & 0.5 \\
0.33 & 0.35 & 0.94 \\
0.55 & 0.9  & 0.35
\end{bmatrix}, \quad
b = \begin{bmatrix}
0.3\\[0.5em]
0.42\\[0.5em]
0.76
\end{bmatrix}.
\]
We computed:
\[
\Delta = \max_{1 \leq i \leq n} \delta^A_i = 0.21, \quad \text{ where }  \delta^A_1 = 0, \delta^A_2 = 0.06, \text{ and } \delta^A_3 = 0.21.
\]
Therefore, we have:
\[
N_{\text{cons}} = \{1\}, \quad N_{\text{inc}} = \{2, 3\}.
\]

We construct the matrices $A^{(i,j)}$, see Definition \ref{def:matAij}.

Case $(i,j)=(1,1)$:
\[
U^A_{1,1} = \emptyset.
\]
Modified matrix:
\[
A^{(1,1)} = \begin{bmatrix}
0.3  & 0.73 & 0.5 \\
0.33 & 0.35 & 0.94 \\
0.55 & 0.9  & 0.35
\end{bmatrix},
\]
with the $(1,1)$-entry changed from 0.04 to 0.3, and we have 
$\delta^{A^{(1,1)}}(1,1)=0$.

Case $(i,j)=(1,2)$:
\[
U^A_{1,2} = \emptyset.
\]
Matrix remains:
\[
A^{(1,2)} = \begin{bmatrix}
0.04 & 0.73 & 0.5 \\
0.33 & 0.35 & 0.94 \\
0.55 & 0.9  & 0.35
\end{bmatrix},
\]
with no changes, and we have $\delta^A(1,2) = \delta^{A^{(1,2)}}(1,2)=0$.

Case $(i,j)=(1,3)$:
\[
U^A_{1,3} = \emptyset.
\]
Matrix remains:
\[
A^{(1,3)} = \begin{bmatrix}
0.04 & 0.73 & 0.5 \\
0.33 & 0.35 & 0.94 \\
0.55 & 0.9  & 0.35
\end{bmatrix},
\]
with no changes, and we have $\delta^A(1,3) = \delta^{A^{(1,3)}}(1,3)=0.$

Case $(i,j)=(2,1)$:
\[
U^A_{2,1} = \emptyset.
\]
Modified matrix:
\[
A^{(2,1)} = \begin{bmatrix}
0.04 & 0.73 & 0.5 \\
0.42 & 0.35 & 0.94 \\
0.55 & 0.9  & 0.35
\end{bmatrix},
\]
with the $(2,1)$-entry changed from 0.33 to 0.42, and we have
$
\delta^{A^{(2,1)}}(2,1)=0.
$

Case $(i,j)=(2,2)$:
\[
U^A_{2,2} = \{1\},\quad \text{since } \sigma_G(b_2, a_{1,2}, b_1)=\sigma_G(0.42,\,0.73,\,0.3)=0.06>0.
\]
Modified matrix:
\[
A^{(2,2)} = \begin{bmatrix}
0.04 & 0.3  & 0.5 \\
0.33 & 0.42 & 0.94 \\
0.55 & 0.9  & 0.35
\end{bmatrix},
\]
with the (2,2)-entry changed from 0.35 to 0.42 and the (1,2)-entry changed from 0.73 to 0.3, and we have 
$
\delta^{A^{(2,2)}}(2,2)=0.
$

Case $(i,j)=(2,3)$:
\[
U^A_{2,3} = \{1\},\quad \text{since } \sigma_G(b_2, a_{1,3}, b_1)=\sigma_G(0.42,\,0.5,\,0.3)=0.06>0.
\]
Modified matrix:
\[
A^{(2,3)} = \begin{bmatrix}
0.04 & 0.73 & 0.3 \\
0.33 & 0.35 & 0.94 \\
0.55 & 0.9  & 0.35
\end{bmatrix},
\]
with the (1,3)-entry changed from 0.5 to 0.3, and we have $\delta^{A^{(2,3)}}(2,3)=0$.

Case $(i,j)=(3,1)$:
\[
U^A_{3,1} = \emptyset.
\]
Modified matrix:
\[
A^{(3,1)} = \begin{bmatrix}
0.04 & 0.73 & 0.5 \\
0.33 & 0.35 & 0.94 \\
0.76 & 0.9  & 0.35
\end{bmatrix},
\]
with the (3,1)-entry changed from 0.55 to 0.76, and we have 
$\delta^{A^{(3,1)}}(3,1)=0$.

Case $(i,j)=(3,2)$:
\[
U^A_{3,2} = \{1\},\quad \text{since } \sigma_G(b_3, a_{1,2}, b_1)=\sigma_G(0.76,\,0.73,\,0.3)=0.23>0.
\]
Modified matrix:
\[
A^{(3,2)} = \begin{bmatrix}
0.04 & 0.3  & 0.5 \\
0.33 & 0.35 & 0.94 \\
0.55 & 0.9  & 0.35
\end{bmatrix},
\]
with the (1,2)-entry changed from 0.73 to 0.3, and we have 
$\delta^{A^{(3,2)}}(3,2)=0$.

Case $(i,j)=(3,3)$:
\[
U^A_{3,3} = \{1,2\},\quad \text{since } \sigma_G(b_3, a_{1,3}, b_1)=\sigma_G(0.76,\,0.5,\,0.3)=0.2>0 \quad (k=1)
\]
and
\[
\sigma_G(b_3, a_{2,3}, b_2)=\sigma_G(0.76,\,0.94,\,0.42)=0.17>0 \quad (k=2).
\]
Modified matrix:
\[
A^{(3,3)} = \begin{bmatrix}
0.04 & 0.73 & 0.3 \\
0.33 & 0.35 & 0.42 \\
0.55 & 0.9  & 0.76
\end{bmatrix},
\]
with the (3,3)-entry changed from 0.35 to 0.76, the (1,3)-entry changed from 0.5 to 0.3, and the (2,3)-entry changed from 0.94 to 0.42, and we have 
$\delta^{A^{(3,3)}}(3,3)=0$.

We now illustrate Lemma \ref{lemma:commutativeAij}. For instance, let $i_1 = 1$, $j_1 = 1$, $i_2 = 2$, and $j_2 = 1$. 
We have $C = A^{(1,1)}$ and $E = A^{(2,1)}$. We compute:

\[
D = C^{(2,1)} =  \begin{bmatrix}
0.3 & 0.73 & 0.5 \\
0.42 & 0.35 & 0.94 \\
0.55 & 0.9  & 0.35
\end{bmatrix}.
\]
where the entry at coordinate $(2,1)$ is changed from $0.33$ in $C$ to $0.42$ in $D$. Similarly,
\[
F = E^{(1,1)} =  \begin{bmatrix}
0.3 & 0.73 & 0.5 \\
0.42 & 0.35 & 0.94 \\
0.55 & 0.9  & 0.35
\end{bmatrix}.
\]
where the entry at coordinate $(1,1)$ is changed from $0.04$ in $E$ to $0.3$ in $F$. Thus, we observe that
\[
D = F,
\]
which confirms the lemma.
\end{example}

\subsection{Constructing the matrices \texorpdfstring{$A^{(\vec{i},\vec{j})}$}{A(i,j)} from the matrix \texorpdfstring{$A$}{A} of the inconsistent system \texorpdfstring{$(S): A \Box_{\min}^{\max} x = b$}{(S): A maxmin x = b}}
\label{sec:constructionAijvec}

In this subsection, we generalize the construction of auxiliary matrices $A^{(i , j)}$ associated with pairs $(i , j)\in I \times J$ to pairs of index uplets $(\vec i , \vec j)\in I^h \times J^h$  where $1 \leq h \leq n$, see (\ref{eq:constructionAijvec}). In the next section, Theorem \ref{theorem:theo1consSij} will show that for specific pairs $(\vec i , \vec j)$, matrices $A^{(\vec{i},\vec{j})}$ obtained with the construction (\ref{eq:constructionAijvec})  belong to the set $\mathcal{T}$, see (\ref{eq:setT}), i.e., using these specific pairs $(\vec{i},\vec{j})$, the matrices $A^{(\vec{i},\vec{j})}$ are matrices governing consistent max-min systems whose right-hand side vector is the right-hand side vector $b$ of the system $(S)$.

\subsubsection{Iterative construction of the matrices \texorpdfstring{$A^{(\vec{i},\vec{j})}$}{A vec i vec j}}

Let $h$ be an integer satisfying $1 \leq h \leq n$ where 
 $n =\text{card}(I)$, where the set $I$ is defined in Notation \ref{notation:basicNotForS}, along with the set $J$.  
\begin{definition}\label{def:vecivecj}
For any  $h$-tuple  $\vec{i} = (i_1, i_2, \dots, i_h) \in I^h$ consisting of two-by-two distinct indices from the set $I$ (i.e., for all $u,v \in \{1,2,\dots,h\}$ with $u \neq v$, $i_u \neq i_v$) and a  $h$-tuple  $\vec{j} = (j_1, j_2, \dots, j_h) \in J^h$ of   indices of the set $J$, we associate  the  matrix  denoted $A^{(\vec{i},\vec{j})}$ which is constructed by successively applying the modifications indicated by the pairs $(i_1,j_1), (i_2,j_2), \dots, (i_h,j_h)$ (using Definition \ref{def:matAij}) as follows:
\begin{equation}\label{eq:constructionAijvec}
A^{(\vec{i} , \vec{j})} =  \begin{cases}
 A^{(i_1 , j_1)}    &   \text{if }       h = 1,\hfill     \\
 C^{(i_h , j_h)}   & \text{if } h > 1 \,\text{and} \, C := A^{(i_1 , j_1) \ast (i_2 , j_2) \ast \dots \ast (i_{h-1} , j_{h-1})}.
 \end{cases}
\end{equation}.    
\end{definition} 
In (\ref{eq:constructionAijvec}), the matrix $C$ is the intermediate matrix obtained by successively applying the modifications corresponding to the pairs $(i_1,j_1), (i_2,j_2), \dots, (i_{h-1},j_{h-1})$ to the original matrix $A$. Then, the final matrix $A^{(\vec{i},\vec{j})}$ is obtained by applying the modification $(i_h,j_h)$ to $C$.  This construction can also be written as
$$
A^{(\vec{i},\vec{j})} = (((A^{(i_1,j_1)})^{(i_2,j_2)})^{(i_3,j_3)} \cdots )^{(i_h,j_h)}.
$$

\subsubsection{Properties of the matrices \texorpdfstring{$A^{(\vec{i},\vec{j})}$}{A vec i vec j}}

By induction on $h$, one can verify the following properties:
\begin{restatable}{proposition}{propositionaijklbkorakl}
\label{proposition:aijklbkorakl}\mbox{}
\begin{enumerate}
    \item For every $(k,l) \in I \times J$,
    $$
    a^{(\vec{i},\vec{j})}_{kl} \in \{b_k,\, a_{kl}\}.
    $$
    \item For every $k \in I$ and for every $l \notin \{j_1,j_2,\dots,j_h\}$,
    $$
    a^{(\vec{i},\vec{j})}_{kl} = a_{kl}.
    $$
\end{enumerate}
These properties extend those  of the matrices $A^{(i,j)}$ given in   (\ref{eq:Aijkl1}), (\ref{eq:Aijkl2}) and     (\ref{eq:aijklisbkorakl}). They  ensure that the modifications introduced by $A^{(\vec{i},\vec{j})}$ affect only the columns indexed by $j_1, j_2, \dots, j_h$, while all other coefficients of the matrix $A$ remain unchanged. 
\end{restatable}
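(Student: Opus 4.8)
The plan is to argue by induction on $h$, exactly as the statement suggests. The key observation driving the whole argument is that the single-step properties (\ref{eq:aijklisbkorakl}) and (\ref{eq:Aijkl2}) are not specific to the matrix $A$: they hold for the transformation $X \mapsto X^{(i,j)}$ applied to \emph{any} matrix $X \in [0,1]^{n \times m}$ (with the same fixed $b$), since Definition \ref{def:matAij} only ever outputs, in each entry, either $b_k$ or the corresponding entry of the input matrix. So I would first record this reading: for any matrix $X = [x_{kl}]$ and any $(i,j) \in I \times J$, one has $x^{(i,j)}_{kl} \in \{b_k, x_{kl}\}$ for all $(k,l)$, and $x^{(i,j)}_{kl} = x_{kl}$ whenever $l \neq j$. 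The base case $h = 1$ is then immediate: $A^{(\vec{i},\vec{j})} = A^{(i_1,j_1)}$, so property~1 is exactly (\ref{eq:aijklisbkorakl}), and property~2 is exactly (\ref{eq:Aijkl2}) (since $l \notin \{j_1\}$ means $l \neq j_1$).

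For the inductive step, I would assume both properties hold for the $(h-1)$-tuples $\vec{i}' = (i_1,\dots,i_{h-1})$ and $\vec{j}' = (j_1,\dots,j_{h-1})$, and set $C := A^{(\vec{i}',\vec{j}')}$, so that by (\ref{eq:constructionAijvec}) we have $A^{(\vec{i},\vec{j})} = C^{(i_h,j_h)}$. Applying the single-step reading above with $X = C$ and $(i,j) = (i_h,j_h)$ gives, for every $(k,l)$, that $a^{(\vec{i},\vec{j})}_{kl} = c^{(i_h,j_h)}_{kl} \in \{b_k, c_{kl}\}$. By the induction hypothesis $c_{kl} \in \{b_k, a_{kl}\}$, and a short case split on the two possible values of $c^{(i_h,j_h)}_{kl}$ then yields $a^{(\vec{i},\vec{j})}_{kl} \in \{b_k, a_{kl}\}$, proving property~1.

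For property~2, I would fix $k \in I$ and $l \notin \{j_1,\dots,j_h\}$. Since in particular $l \neq j_h$, the single-step reading gives $a^{(\vec{i},\vec{j})}_{kl} = c^{(i_h,j_h)}_{kl} = c_{kl}$; and since also $l \notin \{j_1,\dots,j_{h-1}\}$, the induction hypothesis (property~2 applied to $C$) gives $c_{kl} = a_{kl}$. Chaining these two equalities gives $a^{(\vec{i},\vec{j})}_{kl} = a_{kl}$, which completes the induction.

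I do not expect any serious obstacle here; the argument is a routine induction. The only point requiring care is the one flagged at the outset, namely that Definition \ref{def:matAij} must be read as a transformation applicable to an arbitrary input matrix (here the intermediate matrix $C$), so that (\ref{eq:aijklisbkorakl}) and (\ref{eq:Aijkl2}) can legitimately be invoked with $C$ in place of $A$, rather than only for the original matrix $A$. Once this is made explicit, both properties follow by combining the single-step behaviour on the last column $j_h$ with the induction hypothesis on the earlier columns.
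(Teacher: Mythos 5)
Your proof is correct and follows essentially the same route as the paper, which establishes both properties by the same induction on $h$ via the intermediate matrix $C = A^{(\vec{i}',\vec{j}')}$ and the single-step properties (\ref{eq:aijklisbkorakl}) and (\ref{eq:Aijkl2}); the paper merely states this induction as ``easily obtained'' while you spell it out. Your explicit remark that Definition \ref{def:matAij} must be read as a transformation $X \mapsto X^{(i,j)}$ applicable to any input matrix is exactly the reading the paper uses implicitly throughout (e.g.\ in the proof of Lemma \ref{lemma:commutativeAij}, where the sets $U^{C}_{i_2 j}$ and $U^{E}_{i_1 j}$ are computed from intermediate matrices), so there is no gap.
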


The following result generalizes Lemma \ref{lemma:commutativeAij} in the following sense:
\begin{restatable}{proposition}{propositioncommutativePi}
\label{proposition:commutativePi}
Let $\vec{i} = (i_1, i_2, \dots, i_h) \in I^h$ be an $h$-tuple of   two-by-two distinct indices of the set $I$ and $\vec{j} = (j_1, j_2, \dots, j_h) \in J^h$. For any permutation $\pi$ of the set $\{1, 2, \dots, h\}$ i.e., $\pi : \{1 , 2 , \dots , h\} \rightarrow  \{1 , 2 , \dots , h\} : \lambda \mapsto  i_\lambda$, define
\begin{equation}\label{eq:vecpi}
\vec{i} \triangleleft \pi = \bigl(i_{\pi(1)}, i_{\pi(2)}, \dots, i_{\pi(h)}\bigr)
\quad \text{and} \quad
\vec{j} \triangleleft \pi = \bigl(j_{\pi(1)}, j_{\pi(2)}, \dots, j_{\pi(h)}\bigr).    
\end{equation}

Then, the matrix obtained by successively modifying $A$ using the pairs $(i_1, j_1), (i_2, j_2), \dots, (i_h, j_h)$ is identical to the matrix obtained by applying these modifications in any permuted order; that is,
\begin{equation}\label{eq:mainavecipivecjpi}
    A^{(\vec{i},\vec{j})} = A^{(\vec{i}\triangleleft\pi,\, \vec{j}\triangleleft\pi)},
\end{equation}

or equivalently,
\[
A^{(i_1, j_1) \ast (i_2, j_2) \ast \cdots \ast (i_h, j_h)}
=
A^{(i_{\pi(1)}, j_{\pi(1)}) \ast (i_{\pi(2)}, j_{\pi(2)}) \ast \cdots \ast (i_{\pi(h)}, j_{\pi(h)})}.
\]
\end{restatable}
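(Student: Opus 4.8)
The plan is to deduce the full permutation-invariance from the two-term commutation of Lemma~\ref{lemma:commutativeAij}, exploiting that the symmetric group $S_h$ is generated by the adjacent transpositions $\tau_1,\dots,\tau_{h-1}$, where $\tau_s$ swaps $s$ and $s+1$. Concretely, it suffices to show that swapping two \emph{consecutive} pairs in the defining sequence $(i_1,j_1),\dots,(i_h,j_h)$ leaves $A^{(\vec i,\vec j)}$ unchanged; since any $\pi\in S_h$ is a composition of such swaps, and since permuting a tuple of two-by-two distinct indices again yields two-by-two distinct indices, repeated application will give (\ref{eq:mainavecipivecjpi}).

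Before that, I would record that Lemma~\ref{lemma:commutativeAij} in fact holds with an arbitrary matrix $M\in[0,1]^{n\times m}$ in place of $A$: for $i_1\neq i_2$ and any $j_1,j_2$,
\[
\bigl(M^{(i_1,j_1)}\bigr)^{(i_2,j_2)}=\bigl(M^{(i_2,j_2)}\bigr)^{(i_1,j_1)}.
\]
This is legitimate because Definition~\ref{def:matAij} specifies the map $T\mapsto T^{(i,j)}$ uniformly for \emph{every} matrix $T$ (through the set $U^T_{ij}$ built from $T$ and the fixed vector $b$), so the argument proving Lemma~\ref{lemma:commutativeAij} transfers verbatim, using only $i_1\neq i_2$ and never a property peculiar to $A$. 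I expect this to be the main point to check: when $j_1=j_2$, one must verify that the entries altered by one transformation do not change the sets $U^T_{ij}$ of~(\ref{eq:UDij}) governing the other transformation---for instance, replacing $t_{i_1 j}$ by $\max(t_{i_1 j},b_{i_1})$ preserves whether $t_{i_1 j}>b_{i_1}$, hence leaves membership of $i_1$ in the relevant $U$-set unchanged---which is precisely what the stated lemma already settles.

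Granting this, I would handle a single adjacent transposition $\tau_s$ as follows. Let $C$ be the intermediate matrix obtained by applying the first $s-1$ pairs $(i_1,j_1),\dots,(i_{s-1},j_{s-1})$ to $A$. Then $A^{(\vec i,\vec j)}$ is obtained from $C$ by applying $(i_s,j_s),(i_{s+1},j_{s+1})$ in this order and then the tail $(i_{s+2},j_{s+2}),\dots,(i_h,j_h)$, whereas $A^{(\vec i\triangleleft\tau_s,\,\vec j\triangleleft\tau_s)}$ is obtained identically except that $(i_s,j_s)$ and $(i_{s+1},j_{s+1})$ are applied in the reverse order. Because $\vec i$ has two-by-two distinct entries, $i_s\neq i_{s+1}$, so the generalized commutation applied to $C$ gives $\bigl(C^{(i_s,j_s)}\bigr)^{(i_{s+1},j_{s+1})}=\bigl(C^{(i_{s+1},j_{s+1})}\bigr)^{(i_s,j_s)}$; applying the identical remaining pairs to these two equal matrices yields equal final matrices.

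Finally, I would conclude by iterating this one-swap invariance along a decomposition $\pi=\tau_{s_r}\cdots\tau_{s_1}$ of $\pi$ into adjacent transpositions. Starting from $(\vec i,\vec j)$ and rearranging by one adjacent transposition at a time, each step preserves the matrix by the previous paragraph (the sequence retaining two-by-two distinct first coordinates throughout), and after the $r$ steps the sequence has become $(\vec i\triangleleft\pi,\,\vec j\triangleleft\pi)$. Chaining the equalities gives $A^{(\vec i,\vec j)}=A^{(\vec i\triangleleft\pi,\,\vec j\triangleleft\pi)}$, which is exactly (\ref{eq:mainavecipivecjpi}).
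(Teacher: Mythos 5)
Your proof is correct and follows essentially the same route as the paper's: reduce to adjacent transpositions via the generation of $S_h$, apply the two-term commutation of Lemma~\ref{lemma:commutativeAij} to the intermediate matrix $C$ built from the first pairs, and chain the resulting equalities along a decomposition of $\pi$. Your explicit observation that the lemma holds for an arbitrary matrix in place of $A$ (since Definition~\ref{def:matAij} and the lemma's proof use nothing specific to $A$) is a point the paper uses only implicitly when it applies the lemma to $C$, so making it explicit is a small improvement in rigor rather than a different approach.
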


Proposition \ref{proposition:commutativePi}  allows us to generalize Lemma \ref{lemma:deltaaijijequalzero} and Lemma \ref{lemma:matrixTwithdetaTijzero} to the matrices $A^{(\vec i, \vec j)}$. We rely on the scalars  $\delta^{A^{(\vec i , \vec j)}}(s , l)$ (resp. $\delta^{A}(s , l)$), which are  associated with the matrix $A^{(\vec i , \vec j)}$ (resp.  $A$),  see  (\ref{eq:deltaTij}).

\begin{restatable}{proposition}{propositionmodOnegAijvec}
\label{proposition:modOnegAijvec}

Let  $A = [a_{kl}]\in [0 , 1]^{n \times m}$, $\vec{i} = (i_1, i_2, \dots, i_h) \in I^h$ be an $h$-tuple of two-by-two distinct indices and $\vec{j} = (j_1 , \dots , j_h)\in J^h$. We construct the matrix $A^{(\vec i , \vec j)} = [a^{(\vec i , \vec j)}_{kl}]_{1 \le k \le n,\, 1 \le l \le m}$ using (\ref{eq:constructionAijvec}). We have:
\begin{enumerate}
\item  For all $(k , l)\in I \times J$, we have $a^{(\vec i , \vec j)}_{kl}\in \{b_k , a_{kl}\}$.
   \item  For all $\lambda\in\{1 , \dots , h\}$, we have $\delta^{A^{(\vec i , \vec j)}}(i_\lambda, j_\lambda) = 0$.
\item For all $s \in I$ and for all $ l \in \{j_1 , j_2 , \dots , j_h\}$,  we have $\delta^A(s , l) = 0 \Longrightarrow \delta^{A^{(\vec i , \vec j)}}(s , l) = 0$. 
    \item For all $s \in I$  and for all $ l\notin \{j_1 , j_2 , \dots , j_h\}$,  we have $\delta^{A^{(\vec i , \vec j)}}(s , l) = \delta^A(s , l)$. 
\end{enumerate}
\end{restatable}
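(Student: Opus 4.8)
The plan is to dispatch the first item by citation and to prove the three statements about the scalars $\delta$ together by a single induction on $h$. Item~1 is exactly the first assertion of Proposition~\ref{proposition:aijklbkorakl}, so nothing new is required there; I also record from Proposition~\ref{proposition:aijklbkorakl}(2) that $A^{(\vec{i},\vec{j})}$ differs from $A$ only in the columns indexed by $j_1,\dots,j_h$. The whole argument rests on two structural observations that I would isolate first: reading (\ref{eq:deltaTij}), the scalar $\delta^T(s,l)$ depends on the matrix $T$ \emph{only through its $l$-th column} $(t_{1l},\dots,t_{nl})$; and, by Definition~\ref{def:matAij} (in particular (\ref{eq:Aijkl2})), the map $T\mapsto T^{(i,j)}$ alters \emph{only the $j$-th column}. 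I would also note that Lemma~\ref{lemma:deltaaijijequalzero} holds verbatim with $A$ replaced by any matrix $T\in[0,1]^{n\times m}$ sharing the right-hand side $b$, since both the construction and the quantity $\delta$ are defined generically; this is what lets me apply it to the intermediate matrices produced along the induction.

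Item~4 then needs no induction: for $l\notin\{j_1,\dots,j_h\}$ the $l$-th column of $A^{(\vec{i},\vec{j})}$ coincides with that of $A$ by Proposition~\ref{proposition:aijklbkorakl}(2), and since $\delta(s,l)$ sees only the $l$-th column we get $\delta^{A^{(\vec{i},\vec{j})}}(s,l)=\delta^A(s,l)$ at once.

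For items~2 and~3 I would run a simultaneous induction on $h$. The base case $h=1$ is precisely Lemma~\ref{lemma:deltaaijijequalzero}: its item~1 gives our item~2 and its item~2 gives our item~3. For the inductive step write $A^{(\vec{i},\vec{j})}=C^{(i_h,j_h)}$ with $C=A^{(\vec{i}^{-},\vec{j}^{-})}$, $\vec{i}^{-}=(i_1,\dots,i_{h-1})$, $\vec{j}^{-}=(j_1,\dots,j_{h-1})$, a tuple whose index part $\vec{i}^{-}$ still has pairwise distinct entries, so the induction hypothesis applies to $C$. For item~2 and index $\lambda$: if $\lambda=h$, Lemma~\ref{lemma:deltaaijijequalzero}(1) applied to $C$ gives $\delta^{C^{(i_h,j_h)}}(i_h,j_h)=0$; if $\lambda<h$ and $j_\lambda\neq j_h$, Lemma~\ref{lemma:deltaaijijequalzero}(3) gives $\delta^{A^{(\vec{i},\vec{j})}}(i_\lambda,j_\lambda)=\delta^C(i_\lambda,j_\lambda)$, which vanishes by the hypothesis; and if $\lambda<h$ with $j_\lambda=j_h$, the hypothesis gives $\delta^C(i_\lambda,j_\lambda)=0$ and Lemma~\ref{lemma:deltaaijijequalzero}(2) propagates it through the last transformation. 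Item~3 is handled the same way, splitting on whether the target column $l$ equals $j_h$: if $l\neq j_h$, Lemma~\ref{lemma:deltaaijijequalzero}(3) reduces it to the hypothesis for $C$; if $l=j_h$, I first obtain $\delta^C(s,l)=0$ and then apply Lemma~\ref{lemma:deltaaijijequalzero}(2) to push it across the final step.

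The only delicate point, and the one I would write most carefully, is this last case of item~3: when the freshly modified column $j_h$ coincides with the target column $l$, the premise $\delta^C(s,l)=0$ is not simply inherited from item~3 of the hypothesis but must be drawn from the inductive item~3 when $l$ already occurs among $j_1,\dots,j_{h-1}$ and from the inductive item~4 in the complementary sub-case where $l$ is touched for the first time at step $h$. This interdependence of items~3 and~4 is exactly what forces them, together with item~2, into one simultaneous induction rather than three separate ones. Commutativity (Proposition~\ref{proposition:commutativePi}) is not strictly needed for this scheme, though it yields a quick alternative proof of item~2 by reordering $(i_\lambda,j_\lambda)$ into the last position and invoking the single-step Lemma~\ref{lemma:deltaaijijequalzero}(1).
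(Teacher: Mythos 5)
Your proof is correct, and for items~2 and~3 it takes a genuinely different route from the paper's. The paper leans on commutativity (Proposition~\ref{proposition:commutativePi}): for item~2 it writes $A^{(\vec i,\vec j)} = C^{(i_\lambda,j_\lambda)}$ with $C = A^{(\vec i',\vec j')}$ obtained by deleting the $\lambda$-th pair, so that the single-step Lemma~\ref{lemma:deltaaijijequalzero} applied to $C$ settles the claim with no induction at all (this is exactly the alternative you mention at the end); for item~3 it again invokes commutativity to reorder the pairs so that all those with column $l$ come first, applies the induction hypothesis to the prefix $C = A^{(i_1,l)\ast\cdots\ast(i_{h'},l)}$, and concludes either via statement~2 of the Lemma (when every $j_\lambda$ equals $l$) or by observing that the remaining pairs leave column $l$ untouched. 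You avoid Proposition~\ref{proposition:commutativePi} entirely: you peel off only the last pair $(i_h,j_h)$ and run a simultaneous induction in which item~2 splits on $\lambda = h$, $j_\lambda \neq j_h$, $j_\lambda = j_h$, and item~3 splits on whether $l = j_h$, drawing the premise $\delta^{C}(s,l)=0$ from the inductive item~3 when $l$ already occurs among $j_1,\dots,j_{h-1}$ and from item~4 when column $l$ is touched for the first time at step $h$, then pushing across the final step with statement~2 of Lemma~\ref{lemma:deltaaijijequalzero}. Both arguments tacitly require the Lemma (and the $(i,j)$-transformation) to apply to an arbitrary matrix sharing the right-hand side $b$; your explicit remark that this is legitimate is sound, and the paper itself uses the Lemma on intermediate matrices $C$ in just this way. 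What each approach buys: yours is more elementary and self-contained, since Proposition~\ref{proposition:commutativePi} itself rests on Lemma~\ref{lemma:commutativeAij} plus a generation-by-transpositions argument, at the cost of the more intricate case analysis and the entanglement of items~2--4 that you correctly identify as forcing a single simultaneous induction; the paper's commutativity-based reduction makes each item an almost immediate consequence of the single-step lemma, and since commutativity is needed elsewhere anyway (e.g., in the proof of Proposition~\ref{proposition:SimpleDefAijvec}), the paper gets it essentially for free. Your treatments of items~1 and~4 (citation of Proposition~\ref{proposition:aijklbkorakl} and the column-locality of $\delta^{T}(s,l)$) coincide with the paper's.
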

\begin{restatable}{proposition}{propositionmodtwogAijvec}
\label{proposition:mod2gAijvec}
Let  $A = [a_{kl}]\in [0 , 1]^{n \times m}$, Let $\vec{i} = (i_1, i_2, \dots, i_h) \in I^h$ be an $h$-tuple  of   two-by-two distinct indices of the set $I$ and   $\vec{j} = (j_1 , \dots , j_h)\in J^h$.  We construct the matrix $A^{(\vec i , \vec j)} = [a^{(\vec i , \vec j)}_{kl}]$ using (\ref{eq:constructionAijvec}). 
For any matrix $T\in[0 , 1]^{n \times m}$   verifying 
$\delta^T(i_\lambda , j_\lambda) = 0$  for all $\lambda\in\{1 , 2 , \dots , h\}$, we have:
\begin{equation}\label{eq:ilambdajlambdaGNew}
\forall (k , l)\in I \times J, \quad  \mid   
a^{(\vec i , \vec j)}_{kl} - a_{kl} \mid \leq \mid t_{kl} - a_{kl} \mid.
\end{equation}
It clearly follows that: $$\Vert A^{(\vec i , \vec j)} - A \Vert_p \leq \Vert T - A \Vert_p\quad \text{ for }\quad p\in\{1 , 2 , \infty\}.$$
\end{restatable}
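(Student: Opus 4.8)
The plan is to reduce the matrix-norm inequality to an entrywise comparison, and then reduce each entrywise comparison to the single-pair case already settled in Lemma~\ref{lemma:matrixTwithdetaTijzero}. Since each of the three norms $\|\cdot\|_p$ ($p\in\{1,2,\infty\}$) is nondecreasing in the entrywise absolute differences, it suffices to prove the pointwise bound $|a^{(\vec i,\vec j)}_{kl}-a_{kl}|\le |t_{kl}-a_{kl}|$ for every $(k,l)\in I\times J$; the stated norm inequality $\|A^{(\vec i,\vec j)}-A\|_p\le\|T-A\|_p$ then follows immediately (as a maximum for $p=\infty$, and because $x\mapsto x^p$ is nondecreasing on $[0,\infty)$ for $p\in\{1,2\}$).

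Fix $(k,l)\in I\times J$. By Proposition~\ref{proposition:aijklbkorakl}(1), $a^{(\vec i,\vec j)}_{kl}\in\{b_k,a_{kl}\}$. If $a^{(\vec i,\vec j)}_{kl}=a_{kl}$ the left-hand side vanishes and there is nothing to prove, so I would assume $a^{(\vec i,\vec j)}_{kl}=b_k\ne a_{kl}$. The key claim to establish is then: there exists an index $\mu\in\{1,\dots,h\}$ such that the single-pair matrix $A^{(i_\mu,j_\mu)}$ built directly from $A$ already satisfies $a^{(i_\mu,j_\mu)}_{kl}=b_k$. Granting this claim, Lemma~\ref{lemma:matrixTwithdetaTijzero}(1) applied with the pair $(i_\mu,j_\mu)$ — which is licit because the hypothesis gives $\delta^T(i_\mu,j_\mu)=0$ — yields $|a^{(i_\mu,j_\mu)}_{kl}-a_{kl}|\le|t_{kl}-a_{kl}|$; since $a^{(i_\mu,j_\mu)}_{kl}=b_k=a^{(\vec i,\vec j)}_{kl}$, this is exactly the desired pointwise bound.

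To prove the claim I would track the $(k,l)$-entry along the iterative construction~(\ref{eq:constructionAijvec}), writing $C^{(\mu)}:=A^{(i_1,j_1)\ast\cdots\ast(i_\mu,j_\mu)}$ for the successive prefixes, with $C^{(0)}:=A$. Each $C^{(\mu)}$ is itself of the form $A^{(\vec i',\vec j')}$ for a prefix tuple of pairwise distinct row indices, so Proposition~\ref{proposition:aijklbkorakl}(1) forces $(C^{(\mu)})_{kl}\in\{b_k,a_{kl}\}$ at every stage. Since the entry starts at $a_{kl}$ and ends at $b_k\ne a_{kl}$, there is a \emph{first} step $\mu$ at which it switches from $a_{kl}$ to $b_k$; because Definition~\ref{def:matAij}, eq.~(\ref{eq:Aijkl2}), leaves columns other than $j_\mu$ untouched, this forces $j_\mu=l$. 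It then remains to read off Definition~\ref{def:matAij} at this step, using that the value just before the flip is still $(C^{(\mu-1)})_{kl}=a_{kl}$: if $k=i_\mu$, the rule~(\ref{eq:Aijkl1}) produces $b_k$ precisely when $b_k>a_{kl}$; if $k\ne i_\mu$, it produces $b_k$ precisely when $k\in U^{C^{(\mu-1)}}_{i_\mu,l}$, i.e.\ when $\theta(i_\mu,k)=1$ and $(C^{(\mu-1)})_{kl}>b_k$. Since $\theta$ depends only on $b$ and $(C^{(\mu-1)})_{kl}=a_{kl}$, each of these is literally the condition under which $A^{(i_\mu,l)}$, formed from $A$ via~(\ref{eq:Aijkl1}), sets its own $(k,l)$-entry to $b_k$ (recall $U^A_{i_\mu,l}=\{k':\theta(i_\mu,k')(a_{k'l}-b_{k'})^+>0\}$). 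Hence $a^{(i_\mu,l)}_{kl}=b_k$, which is the claim.

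The only genuinely delicate point is this last matching of conditions: the set $U$ governing the flip is recomputed against the evolving prefix $C^{(\mu-1)}$ rather than against $A$, so a priori it could differ from $U^A_{i_\mu,l}$. The argument is saved by choosing $\mu$ to be the first flip of the entry $(k,l)$, which guarantees $(C^{(\mu-1)})_{kl}=a_{kl}$ for the single coordinate under scrutiny; only this coordinate's membership in $U$ matters for the flip at $(k,l)$, and for it the two $U$-sets agree. (Commutativity, Proposition~\ref{proposition:commutativePi}, could alternatively be invoked to bring the relevant step to the front, but the first-flip bookkeeping makes this unnecessary.) Once the pointwise bound holds for all $(k,l)$, the inequality for $p\in\{1,2,\infty\}$ follows as in the first paragraph.
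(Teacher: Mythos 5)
Your proof is correct, and it takes a genuinely different route from the paper's. The paper argues by induction on $h$: writing $A^{(\vec i,\vec j)} = C^{(i_h,j_h)}$ with $C$ the length-$(h-1)$ prefix, the induction hypothesis bounds every entry where $A^{(\vec i,\vec j)}$ agrees with $C$, and the entries flipped at the last step are treated by an inline comparison with $T$: from $c_{kl}\in\{b_k,a_{kl}\}$ and the strictness of the flip conditions one deduces $c_{i_hj_h}=a_{i_hj_h}$ (resp.\ $c_{kj_h}=a_{kj_h}$), and then $\delta^T(i_h,j_h)=0$ yields $b_{i_h}\le t_{i_hj_h}$ (resp.\ $t_{kj_h}\le b_k$) --- in effect re-running the argument of Lemma~\ref{lemma:matrixTwithdetaTijzero} at the final step. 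You instead avoid induction at the level of the proposition itself: for each entry ending at $b_k\ne a_{kl}$ you locate the \emph{first} flip along the prefix sequence, note that minimality forces the pre-flip value to be $a_{kl}$, so that the flip condition against the evolving prefix coincides for that single coordinate with the flip condition against $A$ (membership of $k$ in $U^{\,\cdot}_{i_\mu j_\mu}$ depends only on the $(k,j_\mu)$ entry and on $b$), conclude $a^{(i_\mu,j_\mu)}_{kl}=b_k$, and then invoke Lemma~\ref{lemma:matrixTwithdetaTijzero} as a black box using $\delta^T(i_\mu,j_\mu)=0$. Both arguments rest on the same invariant $a^{(\cdot)}_{kl}\in\{b_k,a_{kl}\}$ from Proposition~\ref{proposition:aijklbkorakl}, but yours buys a cleaner reduction that reuses the single-pair lemma verbatim and makes explicit that every modified entry of $A^{(\vec i,\vec j)}$ is already produced by one of the single-pair matrices $A^{(i_\mu,j_\mu)}$; this key claim is in fact equivalent to the description of $A^{(\vec i,\vec j)}$ via the set ${\cal E}$ in Proposition~\ref{proposition:SimpleDefAijvec}, which (being proved independently of this proposition) could replace your first-flip bookkeeping entirely, whereas the paper's induction stays closer to the iterative definition (\ref{eq:constructionAijvec}) and needs no per-entry tracking.
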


The proofs of Propositions \ref{proposition:aijklbkorakl}, \ref{proposition:commutativePi}, \ref{proposition:modOnegAijvec} and \ref{proposition:mod2gAijvec} are given in Subsection \ref{sec:proofs:constructionAijvec}.

\subsubsection{Computing the coefficients of \texorpdfstring{$A^{(\vec{i},\vec{j})}$}{A vec i vec j}}

Let us consider $(\vec{i} = (i_1, i_2, \dots, i_h),\vec{j}  = (j_1 , \dots , j_h))$ a pair of $h$-tuples   as in Definition \ref{def:vecivecj}. In the following, we present a method for computing the coefficients of the matrix $A^{(\vec{i},\vec{j})}$ in a simpler way than the construction provided in (\ref{eq:constructionAijvec}).

We introduce three sets ${\cal E}_1 , {\cal E}_2$ and     ${\cal E}$ of index pairs that depend on $A$, $b$ and  $(\vec{i} , \vec{j})$ :

\begin{subequations}\label{eq:combinedE1E2Eh}
\begin{enumerate}
    \item The set ${\cal E}_1 $ is formed by the pairs $(i_\lambda, j_\lambda)$ where $\lambda \in \{1,2,\dots,h\}$ for which the corresponding component $b_{i_\lambda}$ of the right-hand side vector $b$ is strictly greater than the entry $a_{i_\lambda j_\lambda}$ of the matrix $A$:
    \begin{flalign}
     &  {\cal E}_1 := \{ (i_\lambda, j_\lambda) \in I \times J \mid \lambda \in \{1,\dots,h\} \text{ and } b_{i_\lambda} > a_{i_\lambda j_\lambda} \}.
    \end{flalign}
    
    \item The set ${\cal E}_2$ is defined using the sets
    $
    U^A_{i_\lambda j_\lambda} = \{ k \in I \mid \sigma_G(b_{i_\lambda}, a_{k j_\lambda}, b_k) > 0 \},
    $
    see (\ref{eq:UDij}):
    \begin{flalign}
    & {\cal E}_2 := \bigcup_{\lambda=1}^h \{ (k, j_\lambda) \mid k \in U^A_{i_\lambda j_\lambda} \} \subseteq I \times J.
    \end{flalign}
    
    \item Finally, the set ${\cal E}$ is defined as the union of the previous two sets:
    \begin{flalign}
     & {\cal E} := {\cal E}_1 \cup {\cal E}_2 \subseteq I \times J.
    \end{flalign}
\end{enumerate}
\end{subequations}

Note that for every pair $(k,l)$ where $k \in I$ and $l \in J \setminus \{j_1, j_2, \dots, j_h\}$, we have $(k,l) \notin {\cal E}$, since both ${\cal E}_1$ and ${\cal E}_2$ only contain pairs with the second component equal to one of $j_1,\dots,j_h$.

Let us fix an index $\lambda \in \{1,2,\dots,h\}$ and define the reduced $(h-1)$-tuples $\vec{i}'$ and $\vec{j}'$ by removing the $\lambda$-th component from $\vec{i}$ and $\vec{j}$, respectively. 
We need to    establish relationships between the sets ${\cal E}_1 , {\cal E}_2 , {\cal E} $, associated to the original pair $(\vec{i},\vec{j})$, and the sets ${\cal E}'_1 $, ${\cal E}'_2 $, ${\cal E}' $, associated to the reduced   pair   $(\vec{i}',\vec{j}')$.

Set $\vec{i}' = (i'_1, i'_2, \dots, i'_{h-1}) \quad \text{and} \quad \vec{j}' = (j'_1, j'_2, \dots, j'_{h-1}).$  We have for every $\lambda' \in \{1,2,\dots,h-1\}$:
$$
i'_{\lambda'} =
\begin{cases}
i_{\lambda'}, & \text{if } \lambda' \le \lambda - 1,\\[1mm]
i_{\lambda'+1}, & \text{if } \lambda' \ge \lambda,
\end{cases}
\quad
j'_{\lambda'} =
\begin{cases}
j_{\lambda'}, & \text{if } \lambda' \le \lambda - 1,\\[1mm]
j_{\lambda'+1}, & \text{if } \lambda' \ge \lambda.
\end{cases}
$$

The sets ${\cal E}'_1 , {\cal E}'_2 , {\cal E}'$ associated with  the matrix $A$,    the right-hand side vector $b$ of the system $(S)$ and the pair of $(h-1)$-tuples   $(\vec{i}' , \vec{j}')$ , are   the following subsets of $I \times J$:
\begin{subequations}\label{eq:ehminusonesets}
\begin{flalign}
{\cal E}'_1  &= \{ (i'_{\lambda'}, j'_{\lambda'}) \in I \times J \mid \lambda' \in \{1,\dots,h-1\} \text{ and } b_{i'_{\lambda'}} > a_{i'_{\lambda'} j'_{\lambda'}} \},&\\[1mm]
{\cal E}'_2  &= \bigcup_{\lambda'=1}^{h-1} \{ (k, j'_{\lambda'}) \mid k \in U^A_{i'_{\lambda'} j'_{\lambda'}} \},&\\[1mm]
{\cal E}'  &= {\cal E}'_1  \cup {\cal E}'_2 .&
\end{flalign}
\end{subequations}

It is clear from the definitions of the sets ${\cal E}_1 $, ${\cal E}_2 $, and ${\cal E} $ that the following properties hold.

\begin{restatable}{lemma}{lemmaIncEhEprimeh}
\label{lemma:IncEhEprimeh}
Let   $(\vec{i} = (i_1, i_2, \dots, i_h),\vec{j}  = (j_1 , \dots , j_h))$ be a pair of $h$-tuples   as in Definition \ref{def:vecivecj}. For $h > 1$, let ${\cal E}_1$, ${\cal E}_2$, and ${\cal E}$ be the sets associated to the pair  $(\vec{i},\vec{j})$ as in (\ref{eq:combinedE1E2Eh}). Given a fixed index $\lambda \in \{1,2,\dots,h\}$, let $(\vec{i}',\vec{j}')$ denote the pair obtained by removing the $\lambda$-th component from $(\vec{i},\vec{j})$. Then, the corresponding sets ${\cal E}'_1$, ${\cal E}'_2$, and ${\cal E}'$, see (\ref{eq:ehminusonesets}), satisfy:

\begin{enumerate}
    \item
    $
    {\cal E}'_1  = {\cal E}_1  \setminus \{(i_\lambda, j_\lambda)\},
    $
    \item
    $
    {\cal E}'_2  \subseteq {\cal E}_2 ,
    $
    \item
    $
    {\cal E}'  = {\cal E}'_1  \cup {\cal E}'_2  \subseteq {\cal E} .
    $
\end{enumerate}
\end{restatable}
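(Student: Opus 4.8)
The plan is to prove all three statements by unwinding the definitions in (\ref{eq:combinedE1E2Eh}) and (\ref{eq:ehminusonesets}) and exploiting the explicit reindexing that relates the reduced $(h-1)$-tuples $(\vec{i}', \vec{j}')$ to the original $h$-tuples $(\vec{i}, \vec{j})$. The central observation, which I would state once and reuse throughout, is that removing the $\lambda$-th component amounts to restricting every union and set-builder indexed by $\{1, \dots, h\}$ to the smaller index set $\{1, \dots, h\} \setminus \{\lambda\}$. Concretely, from the displayed formulas for $i'_{\lambda'}$ and $j'_{\lambda'}$, as $\lambda'$ runs through $\{1, \dots, h-1\}$ the pair $(i'_{\lambda'}, j'_{\lambda'})$ runs through $\lambda'=1,\dots,\lambda-1$ over $(i_1,j_1),\dots,(i_{\lambda-1},j_{\lambda-1})$ and through $\lambda'=\lambda,\dots,h-1$ over $(i_{\lambda+1},j_{\lambda+1}),\dots,(i_h,j_h)$; that is, exactly over the pairs $(i_\mu, j_\mu)$ with $\mu \neq \lambda$.

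For the first claim I would rewrite $\mathcal{E}'_1$ via this reindexing as $\{(i_\mu, j_\mu) \mid \mu \in \{1,\dots,h\}\setminus\{\lambda\},\ b_{i_\mu} > a_{i_\mu j_\mu}\}$ and compare it directly with $\mathcal{E}_1 \setminus \{(i_\lambda, j_\lambda)\}$. The key point, and the only place requiring care, is the standing hypothesis that $\vec{i}=(i_1,\dots,i_h)$ consists of two-by-two distinct indices: this forces the pairs $(i_\mu,j_\mu)$ to be pairwise distinct as elements of $I \times J$, so the index $\mu$ is recoverable from the set element $(i_\mu,j_\mu)$. Hence deleting the single element $(i_\lambda,j_\lambda)$ from $\mathcal{E}_1$ removes precisely the contribution of $\mu=\lambda$ and nothing else. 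I would split into two cases: if $b_{i_\lambda} > a_{i_\lambda j_\lambda}$ then $(i_\lambda,j_\lambda)\in\mathcal{E}_1$ and its deletion yields exactly $\mathcal{E}'_1$; if instead $b_{i_\lambda}\le a_{i_\lambda j_\lambda}$ then $(i_\lambda,j_\lambda)\notin\mathcal{E}_1$, the deletion is vacuous, and the $\mu=\lambda$ term is likewise absent from the definition of $\mathcal{E}'_1$, so both sides equal $\mathcal{E}_1$.

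For the second claim I would simply observe that $\mathcal{E}'_2$ is the union of the same building blocks $\{(k,j_\mu)\mid k\in U^A_{i_\mu j_\mu}\}$ appearing in $\mathcal{E}_2$, but taken over $\{1,\dots,h\}\setminus\{\lambda\}$ rather than all of $\{1,\dots,h\}$; a union over a subset of indices is contained in the union over all indices, giving $\mathcal{E}'_2\subseteq\mathcal{E}_2$. This is only an inclusion, since the omitted $\lambda$-th block may contribute pairs produced by no other index. The third claim then follows at once by combining the first two: $\mathcal{E}'_1=\mathcal{E}_1\setminus\{(i_\lambda,j_\lambda)\}\subseteq\mathcal{E}_1\subseteq\mathcal{E}$ and $\mathcal{E}'_2\subseteq\mathcal{E}_2\subseteq\mathcal{E}$, whence $\mathcal{E}'=\mathcal{E}'_1\cup\mathcal{E}'_2\subseteq\mathcal{E}$.

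The whole argument is a bookkeeping exercise in set-builder manipulation, so I do not expect a genuine obstacle. The one step that must not be skipped is the appeal to the distinctness of $\vec{i}$ in the first claim: without it, the single-element deletion $\mathcal{E}_1\setminus\{(i_\lambda,j_\lambda)\}$ could collapse several index contributions simultaneously, and the clean equality $\mathcal{E}'_1=\mathcal{E}_1\setminus\{(i_\lambda,j_\lambda)\}$ would weaken to a mere inclusion.
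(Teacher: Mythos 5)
Your proof is correct and is precisely the argument the paper regards as immediate: the paper gives no written proof of this lemma, asserting that it is clear from the definitions of ${\cal E}_1$, ${\cal E}_2$, and ${\cal E}$. Your explicit appeal to the pairwise distinctness of the indices $i_\mu$ --- which makes the pairs $(i_\mu, j_\mu)$ pairwise distinct in $I \times J$, so that deleting $(i_\lambda, j_\lambda)$ from ${\cal E}_1$ removes exactly the $\mu=\lambda$ contribution and the equality in the first claim holds rather than a mere inclusion --- is the one genuinely non-trivial point, and you identify and handle it correctly.
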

\qed \\

The set ${\cal E}$, see (\ref{eq:combinedE1E2Eh}), allows us to construct the matrix $A^{(\vec{i},\vec{j})}$, see (\ref{eq:constructionAijvec}), in a simpler way:

\begin{restatable}{proposition}{propositionSimpleDefAijvec}
\label{proposition:SimpleDefAijvec}
Let   $(\vec{i} = (i_1, i_2, \dots, i_h),\vec{j}  = (j_1 , \dots , j_h))$ be a pair of $h$-tuples   as in Definition \ref{def:vecivecj}. \\
Then, the coefficients of the modified matrix $A^{(\vec i , \vec j)} = [a^{(\vec i , \vec j)}_{kl}]_{1 \le k \le n,\, 1 \le l \le m}$, see (\ref{eq:constructionAijvec}), are given by: 
$$\text{for every $(k,l) \in I \times J$}, \quad 
a^{(\vec{i},\vec{j})}_{kl} =
\begin{cases}
b_k, & \text{if } (k,l) \in {\cal E} ,\\[1mm]
a_{kl}, & \text{if } (k,l) \notin {\cal E} .
\end{cases}
$$

\end{restatable}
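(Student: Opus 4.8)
The plan is to prove the formula by induction on $h$. For the base case $h=1$, the matrix $A^{(\vec i, \vec j)}$ is exactly $A^{(i_1,j_1)}$, and I would simply unfold Definition \ref{def:matAij} and compare it with the set ${\cal E} = {\cal E}_1 \cup {\cal E}_2$ built from the single pair $(i_1,j_1)$. The only point needing attention is that the entry at $(i_1,j_1)$ is governed by ${\cal E}_1$ alone: indeed $i_1 \notin U^A_{i_1 j_1}$ because $\sigma_G(b_{i_1}, a_{i_1 j_1}, b_{i_1}) = \min\bigl((b_{i_1}-b_{i_1})^+/2,\, (a_{i_1 j_1}-b_{i_1})^+\bigr) = 0$ by (\ref{eq:sigmaG}). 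The off-diagonal entries in column $j_1$ are governed by $k \in U^A_{i_1 j_1}$, i.e. by ${\cal E}_2$, while all columns $l \neq j_1$ stay untouched and avoid ${\cal E}$.

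For the inductive step I would write $A^{(\vec i, \vec j)} = C^{(i_h,j_h)}$, where $C := A^{(\vec i', \vec j')}$ is built from the $(h-1)$-tuple obtained by dropping the last component (this corresponds to taking $\lambda = h$ in Lemma \ref{lemma:IncEhEprimeh}; more generally any $\lambda$ could be peeled off thanks to the commutativity of Proposition \ref{proposition:commutativePi}). The induction hypothesis gives $c_{kl} = b_k$ iff $(k,l) \in {\cal E}'$ and $c_{kl} = a_{kl}$ otherwise. The decisive preliminary observation is how the set $U^C_{i_h j_h}$ actually used in the final step compares with $U^A_{i_h j_h}$: since $\sigma_G(b_{i_h}, c_{k j_h}, b_k) > 0$ forces $c_{k j_h} > b_k$, and since $c_{k j_h} \in \{b_k, a_{k j_h}\}$ by Proposition \ref{proposition:aijklbkorakl}, an index $k$ survives in $U^C_{i_h j_h}$ exactly when $c_{k j_h} = a_{k j_h}$ \emph{and} $k \in U^A_{i_h j_h}$; indices already reduced to $b_k$ drop out harmlessly, because their entry is already at the target value $b_k$.

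I would then conclude by a case analysis on $(k,l)$. For $l \neq j_h$ the entry is unchanged, $a^{(\vec i, \vec j)}_{kl} = c_{kl}$, and I would check that membership in ${\cal E}$ and in ${\cal E}'$ coincide off column $j_h$: this follows from Lemma \ref{lemma:IncEhEprimeh}(1) (which removes only $(i_h,j_h)$ from ${\cal E}_1$) together with the immediate fact, read off the definitions in (\ref{eq:combinedE1E2Eh}), that any pair $(k,l) \in {\cal E}_2$ with $l \neq j_h$ must arise from an index $\mu \neq h$ and hence lies in ${\cal E}'_2$, and conversely ${\cal E}'_2 \subseteq {\cal E}_2$. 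For $l = j_h$ I would treat $k = i_h$ and $k \neq i_h$ separately: applying Definition \ref{def:matAij} to $C$ produces $\max(b_{i_h}, c_{i_h j_h})$ in the first subcase, and $b_k$ if $k \in U^C_{i_h j_h}$ (else $c_{k j_h}$) in the second, and using the preliminary observation together with Lemma \ref{lemma:IncEhEprimeh} I would match each outcome against membership in ${\cal E}_1 \cup {\cal E}_2$. The key leverage is that whenever the induction hypothesis yields $c_{k j_h} = a_{k j_h}$ (so $(k,j_h) \notin {\cal E}'$), any pair that could place $(k,j_h)$ in ${\cal E}$ through a modification index $\mu \neq h$ would already have placed it in ${\cal E}'$, a contradiction; hence such a pair enters ${\cal E}$ only through $\mu = h$, which is precisely the modification just performed.

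I expect the main obstacle to be exactly this reconciliation between the \emph{dynamic} sets $U^C$ that actually govern the iterative construction (\ref{eq:constructionAijvec}) and the \emph{static} set ${\cal E}$ defined purely from the original matrix $A$ via $U^A$ in (\ref{eq:combinedE1E2Eh}). The care lies in showing that the entries already collapsed to $b_k$ in earlier stages account exactly for the difference $U^A_{i_h j_h} \setminus U^C_{i_h j_h}$, so that re-expressing the final step in terms of $U^A$ rather than $U^C$ introduces no error. Once this is established, the remaining verifications reduce to routine bookkeeping against the inclusions of Lemma \ref{lemma:IncEhEprimeh}.
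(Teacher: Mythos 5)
Your proposal is correct, but it is organized quite differently from the paper's proof, and the difference is instructive. The paper fixes an arbitrary $(k,l)$ and splits on membership: if $(k,l)\in{\cal E}_1$ or $(k,l)\in{\cal E}_2$ via some index $\lambda$, it invokes Proposition \ref{proposition:commutativePi} to commute the responsible pair $(i_\lambda,j_\lambda)$ to the \emph{last} position, writing $A^{(\vec i,\vec j)}=C^{(i_\lambda,j_\lambda)}$; in those two cases the induction hypothesis is never used --- only the structural fact $c_{kl}\in\{b_k,a_{kl}\}$ from Proposition \ref{proposition:aijklbkorakl} is needed to resolve the ambiguous branch of (\ref{eq:Aijkl1}) (e.g.\ if $b_{i_\lambda}\le c_{i_\lambda j_\lambda}$ while $b_{i_\lambda}>a_{i_\lambda j_\lambda}$, then necessarily $c_{i_\lambda j_\lambda}=b_{i_\lambda}$). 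The full induction hypothesis, together with Lemma \ref{lemma:IncEhEprimeh} for an arbitrary removed index $\lambda$, enters only in the case $(k,l)\notin{\cal E}$ with $l\in\{j_1,\dots,j_h\}$. You instead always peel off the fixed last pair $(i_h,j_h)$ --- which, notably, requires no commutativity at all, since (\ref{eq:constructionAijvec}) literally defines $A^{(\vec i,\vec j)}=C^{(i_h,j_h)}$ with $C=A^{(\vec i',\vec j')}$ --- use the induction hypothesis on all of $C$, and split by column rather than by membership. The price you pay is exactly the obstacle you identified: you must reconcile the dynamic set $U^C_{i_h j_h}$ governing the final step with the static set $U^A_{i_h j_h}$ defining ${\cal E}$, and your characterization ($k\in U^C_{i_h j_h}$ iff $c_{k j_h}=a_{k j_h}$ and $k\in U^A_{i_h j_h}$, with entries already collapsed to $b_k$ dropping out harmlessly) is correct and does the job; you also need the slight sharpening of Lemma \ref{lemma:IncEhEprimeh} that ${\cal E}$ and ${\cal E}'$ agree off column $j_h$, which you correctly justify from (\ref{eq:combinedE1E2Eh}). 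What each approach buys: yours is self-contained modulo Proposition \ref{proposition:aijklbkorakl} and dispenses with the nontrivial commutativity result, and the $U^C$/$U^A$ reconciliation is a reusable insight; the paper's commute-to-the-end trick avoids that reconciliation entirely and makes the ${\cal E}_1$ and ${\cal E}_2$ cases essentially one-line computations against a freshly applied modification. One small point of care in your write-up: the inference ``$c_{k j_h}=a_{k j_h}$, so $(k,j_h)\notin{\cal E}'$'' inverts the induction hypothesis and fails in the degenerate case $b_k=a_{k j_h}$; this is harmless because both branches of the claimed formula then coincide (and $k\notin U^A_{i_h j_h}$ since $(a_{k j_h}-b_k)^+=0$), but the final argument should be phrased as a case split on membership in ${\cal E}'$ rather than on the value of $c_{k j_h}$.
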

The proof is by induction on $h \geq 1$, see Section \ref{sec:proofs:constructionAijvec}.

The $L_\infty$ distance $\Vert A^{(\vec{i},\vec{j})} - A \Vert_\infty$ can be easily computed  using: 
\begin{restatable}{corollary}{corollarynormlambdainfty}
\label{corollary:normlambdainfty}
Let   $(\vec{i} = (i_1, i_2, \dots, i_h),\vec{j}  = (j_1 , \dots , j_h))$ be a pair of $h$-tuples   as in Definition \ref{def:vecivecj}. \\
Then, we have:  
\begin{equation}\label{eq:normlambda}
\Vert A^{(\vec{i},\vec{j})} - A \Vert_\infty = \max_{1 \leq \lambda \leq h}\,
\Vert A^{(i_\lambda,j_\lambda)} - A \Vert_\infty    
\end{equation}

\end{restatable}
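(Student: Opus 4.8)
The plan is to reduce the statement to the explicit entrywise description of $A^{(\vec i,\vec j)}$ furnished by Proposition \ref{proposition:SimpleDefAijvec}, and then exploit that the index set ${\cal E}$ attached to $(\vec i,\vec j)$ decomposes as a union of the sets attached to the individual pairs $(i_\lambda,j_\lambda)$. The whole argument is a bookkeeping of maxima over index sets.

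First I would rewrite the left-hand side of (\ref{eq:normlambda}) as a maximum over ${\cal E}$. By Proposition \ref{proposition:SimpleDefAijvec}, the coefficient $a^{(\vec i,\vec j)}_{kl}$ equals $b_k$ when $(k,l)\in{\cal E}$ and equals $a_{kl}$ otherwise; hence every pair outside ${\cal E}$ contributes a zero difference, and the entrywise $L_\infty$ norm collapses to
$$\Vert A^{(\vec{i},\vec{j})}-A\Vert_\infty=\max_{(k,l)\in{\cal E}}\mid b_k-a_{kl}\mid,$$
with the convention $\max\emptyset=0$ covering the degenerate case ${\cal E}=\emptyset$, where $A^{(\vec i,\vec j)}=A$. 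Applying the same reasoning to each single pair — either through the $h=1$ instance of Proposition \ref{proposition:SimpleDefAijvec}, or by reading Lemma \ref{lemma:deltaaijijequalzero}.4 through (\ref{eq:UDij1}) — I obtain, for the set ${\cal E}^{(\lambda)}:={\cal E}_1^{(\lambda)}\cup{\cal E}_2^{(\lambda)}$ associated to the one-element tuple $(i_\lambda,j_\lambda)$,
$$\Vert A^{(i_\lambda,j_\lambda)}-A\Vert_\infty=\max_{(k,l)\in{\cal E}^{(\lambda)}}\mid b_k-a_{kl}\mid.$$

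The crux is then the set identity ${\cal E}=\bigcup_{\lambda=1}^h{\cal E}^{(\lambda)}$, which I would read directly off (\ref{eq:combinedE1E2Eh}): the condition $b_{i_\lambda}>a_{i_\lambda j_\lambda}$ defining ${\cal E}_1$ is imposed separately for each $\lambda$, so ${\cal E}_1=\bigcup_\lambda{\cal E}_1^{(\lambda)}$, while ${\cal E}_2$ is by definition already the union over $\lambda$ of the sets ${\cal E}_2^{(\lambda)}$. Taking unions yields the claim. Splitting a maximum over a finite union of index sets into an outer maximum over $\lambda$ of inner maxima over the ${\cal E}^{(\lambda)}$ then transforms the first display into
$$\Vert A^{(\vec{i},\vec{j})}-A\Vert_\infty=\max_{1\le\lambda\le h}\ \max_{(k,l)\in{\cal E}^{(\lambda)}}\mid b_k-a_{kl}\mid=\max_{1\le\lambda\le h}\Vert A^{(i_\lambda,j_\lambda)}-A\Vert_\infty,$$
which is exactly (\ref{eq:normlambda}).

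I do not expect a genuine obstacle here: once Proposition \ref{proposition:SimpleDefAijvec} is available, the result is a one-line decomposition of a maximum over a union of index sets. The only points demanding care are purely notational — introducing the single-pair sets ${\cal E}_1^{(\lambda)},{\cal E}_2^{(\lambda)},{\cal E}^{(\lambda)}$, which are not written verbatim in the excerpt but are simply the $h=1$ specialisation of (\ref{eq:combinedE1E2Eh}), and handling the empty-maximum convention so that both sides correctly read $0$ when no modification occurs.
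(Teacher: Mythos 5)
Your proof is correct, and it takes a genuinely different route from the paper's. The paper proves the two inequalities separately: for $\max_{1\le\lambda\le h}\Vert A^{(i_\lambda,j_\lambda)}-A\Vert_\infty\le\Vert A^{(\vec i,\vec j)}-A\Vert_\infty$ it invokes the second statement of Proposition \ref{proposition:modOnegAijvec} (namely $\delta^{A^{(\vec i,\vec j)}}(i_\lambda,j_\lambda)=0$ for every $\lambda$) and then applies the domination result, Lemma \ref{lemma:matrixTwithdetaTijzero}, with $T:=A^{(\vec i,\vec j)}$; for the reverse inequality it fixes an entry realizing the norm, locates it in $\mathcal E$ via Proposition \ref{proposition:SimpleDefAijvec}, and performs the same $\mathcal E_1$/$\mathcal E_2$ case analysis you carry out implicitly. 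You instead extract from Proposition \ref{proposition:SimpleDefAijvec} the single identity $\Vert A^{(\vec i,\vec j)}-A\Vert_\infty=\max_{(k,l)\in\mathcal E}\vert b_k-a_{kl}\vert$, note that the $h=1$ specialisation gives the analogous formula over $\mathcal E^{(\lambda)}$ (this is consistent with statement 4 of Lemma \ref{lemma:deltaaijijequalzero}, since $k\in U^A_{ij}$ exactly when $\theta(i,k)\cdot(a_{kj}-b_k)^+>0$), and observe that $\mathcal E=\bigcup_{\lambda=1}^h\mathcal E^{(\lambda)}$ holds by inspection of (\ref{eq:combinedE1E2Eh}) — crucially, because $\mathcal E_1$ and $\mathcal E_2$ are defined directly from $A$ and $b$ rather than from intermediate matrices, so no commutation or iteration argument is needed. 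Splitting the maximum over the union then yields both directions at once. Your argument is shorter and more transparent: it dispenses entirely with the domination machinery (Lemma \ref{lemma:matrixTwithdetaTijzero} and Proposition \ref{proposition:modOnegAijvec}) for this corollary, at the cost of leaning fully on Proposition \ref{proposition:SimpleDefAijvec}, which both proofs need anyway; the paper's two-sided argument has the mild advantage of rehearsing the domination technique that reappears in Theorem \ref{theorem:formulaforLinftydistance}. Your handling of the degenerate cases via the $\max_\emptyset=0$ convention is also sound, and your approach automatically covers repeated column indices in $\vec j$, since Proposition \ref{proposition:SimpleDefAijvec} already absorbs that subtlety.
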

The proof of Corollary \ref{corollary:normlambdainfty}  is in Section \ref{sec:proofs:constructionAijvec}

\begin{example}\label{ex:aijmatrices}
(continued)

We reuse the inconsistent max-min system $(S) : A \Box_{\min}^{\max} x = b$ of Example \ref{ex:incons} where:
\[
A = \begin{bmatrix}
0.04 & 0.73 & 0.5 \\
0.33 & 0.35 & 0.94 \\
0.55 & 0.9  & 0.35
\end{bmatrix}, \quad
b = \begin{bmatrix}
0.3\\[0.5em]
0.42\\[0.5em]
0.76
\end{bmatrix}. \]

Let $\vec{i} = (i_1, i_2) = (1,2)$ and $\vec{j} = (j_1, j_2) = (1,1)$.  We get $\mathcal{E}_1 = \{ (1, 1), (2, 1) \}$ and $\mathcal{E}_2 = \emptyset$. So, we have $\mathcal{E} = \mathcal{E}_1$. 

The matrix $A^{(\vec{i},\vec{j})}$ is:
\[ 
\begin{bmatrix}
    0.30&  0.73&  0.50\\
  0.42&  0.35 & 0.94\\
  0.55&  0.90&  0.35.
\end{bmatrix}
\]
which is equal to the matrices $D$ and $F$ obtained in the end of Example \ref{ex:aijmatrices} for illustrating  Lemma \ref{lemma:commutativeAij}.
\end{example}

\section{Main results on constructing consistent systems by minimally modifying the matrix of an inconsistent max-min system}
\label{sec:mainresults}

In this section, we establish our main results for handling the inconsistency of a max-min system $(S): A \Box_{\min}^{\max} x = b$ by constructing consistent systems close to the system  $(S)$: for  each obtained consistent system  that we obtain, its right-hand side vector is the right-hand side vector $b$ of the system  $(S)$ while its matrix is of the form $A^{(\vec{i}, \vec{j})}$ (Definition \ref{def:vecivecj}) and is obtained  by minimal modifications of the matrix $A$ of the system $(S)$, see Theorem \ref{theorem:theo1consSij} and Theorem \ref{theorem:theo2Aijnorm}. 

From these results, we compute the distance $\mathring{\Delta}_{p}$, see (\ref{eq:deltaAp}),  in a simpler way, see Corollary \ref{corollary:normminJh}. Furthermore, when using the $L_\infty$ norm, i.e., $p = \infty$, we give an explicit analytical formula (Corollary \ref{corollary:formulaforLinftydist}) for computing the distance $\mathring{\Delta}_{\infty}$, see (\ref{eq:deltaAp}). Finally,  we provide a  method to construct 
a non-empty and finite subset of matrices  $A^{(\vec{i}, \vec{j})}$ 
  whose distance to the matrix  $A$ of the inconsistent system is equal to the scalar $\mathring{\Delta}_{\infty}$, see (\ref{eq:lAinfty}).

\subsection{Constructing consistent systems close to the initial inconsistent system \texorpdfstring{$(S)$}{(S)}}

We show  that for some pairs  $(\vec{i} , \vec{j})$,  the matrix $A^{(\vec{i},\vec{j})}$ constructed from $A$ and $b$ with (\ref{eq:constructionAijvec}),  yields a consistent max-min system whose matrix is $A^{(\vec{i},\vec{j})}$ and its right-hand side vector is still the vector $b$ of the system $(S)$:

\begin{restatable}{theorem}{theoremtheoOneconsSij}
\label{theorem:theo1consSij}
We use the non-empty set $N_{\text{inc}}$ with  $h := \operatorname{card}(N_{\text{inc}})$, see (\ref{eq:nconsninc}). Choose an arbitrary enumeration of $N_{\text{inc}}$ as $\{i_1, i_2, \dots, i_h\}$ and set $\vec{i} = (i_1, i_2, \dots, i_h)$. Then, for every $h$-tuple $\vec{j} = (j_1, j_2, \dots, j_h) \in J^h$, the matrix $A^{(\vec{i},\vec{j})}$ constructed with (\ref{eq:constructionAijvec}) yields a consistent system, which is: 
$$
(S^{(\vec{i},\vec{j})}): A^{(\vec{i},\vec{j})} \Box_{\min}^{\max} x = b.
$$ 
 Therefore, the matrix $A^{(\vec{i},\vec{j})}$ belongs to the set $\mathcal{T}$ see (\ref{eq:setT}). \\
 For $p\in\{1 , 2 , \infty\}$, we have: $$\Vert A^{(\vec{i},\vec{j})} - A \Vert_p \geq \mathring{\Delta}_{p},$$ see (\ref{eq:deltaAp}) for the definition of $\mathring{\Delta}_{p}$.
\end{restatable}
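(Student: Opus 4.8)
The plan is to invoke the consistency criterion (\ref{eq:equivDelta}): the system $(S^{(\vec{i},\vec{j})}): A^{(\vec{i},\vec{j})} \Box_{\min}^{\max} x = b$ is consistent if and only if its Chebyshev distance $\Delta(A^{(\vec{i},\vec{j})},b) = \max_{1 \leq s \leq n} \delta^{A^{(\vec{i},\vec{j})}}_s$ vanishes. Since every $\delta^{A^{(\vec{i},\vec{j})}}_s$ lies in $[0,1]$ and hence is nonnegative, and since $\delta^{A^{(\vec{i},\vec{j})}}_s = \min_{1 \leq l \leq m} \delta^{A^{(\vec{i},\vec{j})}}(s,l)$ by (\ref{eq:deltaTi}), it suffices to exhibit, for each row index $s \in I$, a single column $l$ with $\delta^{A^{(\vec{i},\vec{j})}}(s,l) = 0$. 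I would split this verification along the partition $I = N_{\text{cons}} \cup N_{\text{inc}}$ of (\ref{eq:nconsninc}), exploiting that the tuple $\vec{i}$ enumerates $N_{\text{inc}}$ with distinct entries, so that $\{i_1, \dots, i_h\} = N_{\text{inc}}$.

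For a row $s \in N_{\text{inc}}$, there is a unique $\lambda \in \{1, \dots, h\}$ with $s = i_\lambda$, and Proposition \ref{proposition:modOnegAijvec}(2) gives $\delta^{A^{(\vec{i},\vec{j})}}(i_\lambda, j_\lambda) = 0$ directly; hence $\delta^{A^{(\vec{i},\vec{j})}}_s = 0$. For a row $s \in N_{\text{cons}}$, the definition (\ref{eq:nconsninc}) gives $\delta^A_s = \min_{l} \delta^A(s,l) = 0$, so some column $l^{\ast}$ satisfies $\delta^A(s, l^{\ast}) = 0$; this zero is preserved under the construction of $A^{(\vec{i},\vec{j})}$, namely by Proposition \ref{proposition:modOnegAijvec}(3) when $l^{\ast} \in \{j_1, \dots, j_h\}$, and by Proposition \ref{proposition:modOnegAijvec}(4) (which asserts $\delta^{A^{(\vec{i},\vec{j})}}(s,l^{\ast}) = \delta^A(s,l^{\ast})$) when $l^{\ast} \notin \{j_1, \dots, j_h\}$. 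In either case $\delta^{A^{(\vec{i},\vec{j})}}_s = 0$.

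Combining both cases yields $\delta^{A^{(\vec{i},\vec{j})}}_s = 0$ for every $s \in I$, so $\Delta(A^{(\vec{i},\vec{j})}, b) = 0$ and $(S^{(\vec{i},\vec{j})})$ is consistent by (\ref{eq:equivDelta}); by the definition (\ref{eq:setT}) this means $A^{(\vec{i},\vec{j})} \in \mathcal{T}$. The stated inequality is then immediate: Lemma \ref{lemma:DistanceDeltaApmin} rewrites $\mathring{\Delta}_{p} = \min_{T \in \mathcal{T}} \Vert T - A \Vert_p$, and since $A^{(\vec{i},\vec{j})}$ belongs to $\mathcal{T}$, its distance to $A$ cannot be smaller than this minimum, giving $\Vert A^{(\vec{i},\vec{j})} - A \Vert_p \geq \mathring{\Delta}_{p}$ for each $p \in \{1, 2, \infty\}$. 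I do not anticipate a genuine obstacle here, as the substantive work is already encapsulated in Proposition \ref{proposition:modOnegAijvec}; the one point demanding care is that the enumeration covers exactly $N_{\text{inc}}$, guaranteeing that every inconsistent row is repaired by some pair $(i_\lambda, j_\lambda)$ while every already-consistent row retains a surviving zero-column.
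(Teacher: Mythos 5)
Your proposal is correct and follows essentially the same route as the paper's own proof: both establish $\delta^{A^{(\vec{i},\vec{j})}}_s = 0$ for every $s \in I$ by splitting along the partition $I = N_{\text{inc}} \cup N_{\text{cons}}$, applying statement 2 of Proposition \ref{proposition:modOnegAijvec} to the inconsistent rows and statements 3 and 4 to the consistent ones, then concluding via (\ref{eq:equivDelta}) and the definition of $\mathring{\Delta}_{p}$. The only cosmetic difference is your explicit appeal to Lemma \ref{lemma:DistanceDeltaApmin} for the final inequality, which is harmless but not needed, since an infimum over $\mathcal{T}$ already lower-bounds the distance of any member of $\mathcal{T}$.
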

The following result highlights the minimality of the modifications made to obtain the  matrices   $A^{(\vec{i},\vec{j})}$  from $A$ using (\ref{eq:constructionAijvec}) and an $h$-tuple $\vec i$ as in Theorem \ref{theorem:theo1consSij}:

\begin{restatable}{theorem}{theoremtheoTwoAijnorm}
\label{theorem:theo2Aijnorm}
Retain the notations from Theorem~\ref{theorem:theo1consSij} and let 
\[
A = [a_{kl}]_{1 \le k \le n,\; 1 \le l \le m} \quad \text{and} \quad 
 \vec{i} = (i_1, i_2, \dots, i_h) \quad \text{such that } \quad N_{\text{inc}}=\{i_1, i_2, \dots, i_h\}\]
Then, for every consistent system whose right-hand side vector is $b$:
\[
T \Box_{\min}^{\max} x = b,\quad \text{with } T = [t_{kl}]_{1 \le k \le n,\; 1 \le l \le m} \in \mathcal{T},
\]
there exists an $h$-tuple $\vec{j} = (j_1, j_2, \dots, j_h) \in J^h$ such that the system 
\[
A^{(\vec{i},\vec{j})} \Box_{\min}^{\max} x = b
\]
is consistent and, for every $(k,l) \in I \times J$, 
\[
|a^{(\vec{i},\vec{j})}_{kl} - a_{kl}| \le |t_{kl} - a_{kl}| \quad \text{where} \quad 
A^{(\vec{i},\vec{j})} = [a^{(\vec{i},\vec{j})}_{kl}]_{1 \le k \le n,\; 1 \le l \le m}.
\]
We have:  $$\Vert A^{(\vec{i},\vec{j})} - A \Vert_p \leq \Vert T - A \Vert_p\, \quad \text{ for } \quad p\in\{1 , 2 , \infty\}.$$ 
\end{restatable}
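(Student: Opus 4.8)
The plan is to deduce the statement almost entirely from Proposition~\ref{proposition:mod2gAijvec}, whose hypothesis requires that $\delta^T(i_\lambda,j_\lambda)=0$ for every $\lambda\in\{1,\dots,h\}$. Thus the whole argument reduces to producing, from the given consistent matrix $T\in\mathcal{T}$, an $h$-tuple $\vec{j}=(j_1,\dots,j_h)\in J^h$ that meets this hypothesis. Once such a $\vec{j}$ is found, the consistency of $A^{(\vec{i},\vec{j})}\Box_{\min}^{\max}x=b$ comes for free: since $\vec{i}$ enumerates $N_{\text{inc}}$ and $\vec{j}$ is an arbitrary element of $J^h$, Theorem~\ref{theorem:theo1consSij} already guarantees that $A^{(\vec{i},\vec{j})}\in\mathcal{T}$.

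First I would exploit the fact that $T\in\mathcal{T}$, i.e., that the system $T\Box_{\min}^{\max}x=b$ is consistent. The analytical formula (\ref{eq:Delta}) and the equivalence (\ref{eq:equivDelta}) are stated for a generic pair (matrix, right-hand side vector), so applying them with $T$ in place of $A$ gives $\Delta(T,b)=\max_{1\le i\le n}\delta^T_i=0$. Because each $\delta^T_i\ge 0$, this forces $\delta^T_i=0$ for every $i\in I$, and in particular for the $h$ indices $i_1,\dots,i_h$ listed in $\vec{i}$.

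Next, recalling from (\ref{eq:deltaTi}) that $\delta^T_{i_\lambda}=\min_{1\le j\le m}\delta^T(i_\lambda,j)=0$, for each $\lambda\in\{1,\dots,h\}$ I would pick an index $j_\lambda\in J$ attaining this minimum, so that $\delta^T(i_\lambda,j_\lambda)=0$. These choices are made independently row by row, and since Definition~\ref{def:vecivecj} places no distinctness requirement on the components of $\vec{j}$, assembling them into $\vec{j}=(j_1,\dots,j_h)\in J^h$ raises no conflict. By construction, $\delta^T(i_\lambda,j_\lambda)=0$ for all $\lambda$, which is exactly the hypothesis of Proposition~\ref{proposition:mod2gAijvec}.

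Finally, Proposition~\ref{proposition:mod2gAijvec} yields $|a^{(\vec{i},\vec{j})}_{kl}-a_{kl}|\le|t_{kl}-a_{kl}|$ for every $(k,l)\in I\times J$, which is the entrywise claim, and the accompanying norm inequality $\Vert A^{(\vec{i},\vec{j})}-A\Vert_p\le\Vert T-A\Vert_p$ for $p\in\{1,2,\infty\}$ follows immediately, since the entrywise $L_1$, $L_2$ and $L_\infty$ norms in (\ref{eq:entrywise_p_norm}) are monotone in the absolute values of the entries. I do not anticipate a genuine obstacle here; the only point requiring care is checking that the hypothesis of Proposition~\ref{proposition:mod2gAijvec} can be satisfied \emph{simultaneously} for all $\lambda$ by a single tuple $\vec{j}$, and this is guaranteed precisely because the consistency of $T$ makes every $\delta^T_i$ vanish, decoupling the row-wise choices of the $j_\lambda$.
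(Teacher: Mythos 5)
Your proposal is correct and matches the paper's own proof essentially step for step: both derive $\delta^T_{i_\lambda}=0$ from the consistency of $T$ via (\ref{eq:Delta}) and (\ref{eq:equivDelta}), pick $j_\lambda$ with $\delta^T(i_\lambda,j_\lambda)=0$ row by row, invoke Theorem~\ref{theorem:theo1consSij} for consistency of $A^{(\vec{i},\vec{j})}\Box_{\min}^{\max}x=b$, and conclude with Proposition~\ref{proposition:mod2gAijvec} for the entrywise and norm inequalities. Your explicit remark that the row-wise choices of the $j_\lambda$ are decoupled is a welcome clarification the paper leaves implicit, but it is the same argument.
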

The above theorem  allows us to compute the distance $\mathring{\Delta}_{p}$, see (\ref{eq:deltaAp}), using the finite set $J^h$ instead of $\mathcal T$:
\begin{restatable}{corollary}{corollarynormminJh}
\label{corollary:normminJh}
Recall that for $p \in \{1,2,\infty\}$ we defined in (\ref{eq:deltaAp}):
\[
\mathring{\Delta}_{p} := \inf_{T \in \mathcal{T}} \|T - A\|_p.
\]
Then, it follows from Theorems~\ref{theorem:theo1consSij} and \ref{theorem:theo2Aijnorm} that
\begin{equation}\label{eq:DeltaAinfty}
\mathring{\Delta}_{p} = \min_{\vec{j} \in J^h} \|A^{(\vec{i},\vec{j})} - A\|_p.    
\end{equation}
Moreover, this minimum is independent of the particular enumeration $\{i_1, i_2, \dots, i_h\}$ of the set $N_{\text{inc}}$ chosen to define the $h$-tuple $\vec{i} = (i_1, i_2, \dots, i_h)$.
 \end{restatable}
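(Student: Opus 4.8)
The plan is to combine the two preceding theorems, which together pin down $\mathring{\Delta}_{p}$ from both sides, and then to argue the independence of the enumeration using the commutativity already established in Proposition~\ref{proposition:commutativePi}. First I would fix the $h$-tuple $\vec{i} = (i_1,\dots,i_h)$ arising from an arbitrary enumeration of $N_{\text{inc}}$, as in Theorem~\ref{theorem:theo1consSij}. For the inequality $\mathring{\Delta}_{p} \le \min_{\vec{j} \in J^h} \|A^{(\vec{i},\vec{j})} - A\|_p$, I would invoke Theorem~\ref{theorem:theo1consSij}: each matrix $A^{(\vec{i},\vec{j})}$ belongs to $\mathcal{T}$, hence each $\|A^{(\vec{i},\vec{j})} - A\|_p$ is one of the values over which the infimum defining $\mathring{\Delta}_{p}$ is taken; since $J^h$ is finite the infimum over this subfamily is attained at some $\vec{j}$, and $\mathring{\Delta}_{p} = \inf_{T \in \mathcal{T}}\|T-A\|_p$ cannot exceed it.

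For the reverse inequality $\mathring{\Delta}_{p} \ge \min_{\vec{j} \in J^h} \|A^{(\vec{i},\vec{j})} - A\|_p$, I would appeal to Lemma~\ref{lemma:DistanceDeltaApmin}, which guarantees the infimum in (\ref{eq:deltaAp}) is actually a minimum attained at some $T^\star \in \mathcal{T}$. Applying Theorem~\ref{theorem:theo2Aijnorm} to this $T^\star$ produces an $h$-tuple $\vec{j}^\star \in J^h$ with $\|A^{(\vec{i},\vec{j}^\star)} - A\|_p \le \|T^\star - A\|_p = \mathring{\Delta}_{p}$. Since $\min_{\vec{j} \in J^h}\|A^{(\vec{i},\vec{j})} - A\|_p \le \|A^{(\vec{i},\vec{j}^\star)} - A\|_p$, the two bounds close up and yield the equality (\ref{eq:DeltaAinfty}).

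It remains to prove independence from the chosen enumeration of $N_{\text{inc}}$. Here I would observe that any two enumerations $\vec{i}$ and $\vec{i}\triangleleft\pi$ (for a permutation $\pi$ of $\{1,\dots,h\}$) index exactly the same collection of matrices: by Proposition~\ref{proposition:commutativePi}, $A^{(\vec{i}\triangleleft\pi,\,\vec{j}\triangleleft\pi)} = A^{(\vec{i},\vec{j})}$, so as $\vec{j}$ ranges over $J^h$ the tuple $\vec{j}\triangleleft\pi$ also ranges over all of $J^h$, and the two families $\{A^{(\vec{i},\vec{j})} \mid \vec{j}\in J^h\}$ and $\{A^{(\vec{i}\triangleleft\pi,\vec{j})} \mid \vec{j}\in J^h\}$ coincide as sets. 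Consequently the minimum of $\|A^{(\vec{i},\vec{j})}-A\|_p$ over $J^h$ is the same for both enumerations.

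I do not expect any serious obstacle, since this corollary is essentially a bookkeeping synthesis of earlier results; the only point requiring care is to justify passing from the infimum over the (a priori infinite) set $\mathcal{T}$ to a minimum over the finite set $J^h$, which is exactly what the attainment guaranteed by Lemma~\ref{lemma:DistanceDeltaApmin} together with the minimality in Theorem~\ref{theorem:theo2Aijnorm} provides.
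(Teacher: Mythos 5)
Your proposal is correct and takes essentially the same route as the paper: the lower bound $\mathring{\Delta}_{p} \le \min_{\vec{j} \in J^h}\Vert A^{(\vec{i},\vec{j})}-A\Vert_p$ follows from Theorem~\ref{theorem:theo1consSij} ($A^{(\vec{i},\vec{j})}\in\mathcal{T}$), and the upper bound from Theorem~\ref{theorem:theo2Aijnorm}, just as in the paper's sandwich argument. The only (harmless) differences are that you invoke the attained minimizer $T^\star$ from Lemma~\ref{lemma:DistanceDeltaApmin}, whereas the paper bounds $\min_{\vec{j}}\Vert A^{(\vec{i},\vec{j})}-A\Vert_p \le \Vert T-A\Vert_p$ for \emph{every} $T\in\mathcal{T}$ and then takes the infimum (so attainment is not actually needed for this step), and that you make the enumeration-independence explicit via Proposition~\ref{proposition:commutativePi}, a point the paper leaves implicit since the equality $\mathring{\Delta}_{p}=\min_{\vec{j}\in J^h}\Vert A^{(\vec{i},\vec{j})}-A\Vert_p$ holds for any enumeration and its left-hand side does not depend on $\vec{i}$.
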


The proofs of Theorem \ref{theorem:theo1consSij} and  \ref{theorem:theo2Aijnorm} and Corollary \ref{corollary:normminJh}  are given in Subsection \ref{sec:proofs:mainresults}.

\subsection{Computing the distance \texorpdfstring{$\mathring{\Delta}_{\infty}$}{Delta A L infinity norm} by an explicit analytical formula}
\label{subsec:computingdistance}

In the following, using the $L_\infty$ norm i.e, $p = \infty$, we aim to construct a matrix 
$
A^{(\vec{i},\vec{j})}
$, see (\ref{eq:constructionAijvec}), 
such that
\begin{equation}\label{eq:satisfylinfty}
  \mathring{\Delta}_{\infty}=   \|A^{(\vec{i},\vec{j})} - A\|_\infty  ,
\end{equation}

\noindent where $ \mathring{\Delta}_{\infty}$ is defined in (\ref{eq:deltaAp}) and $A$ is the matrix of the inconsistent system $(S): A \Box_{\min}^{\max} x = b$. This construction will allow us to introduce an explicit analytical formula for computing  the distance $\mathring{\Delta}_{\infty}$.

Our first objective is to find a pair $(\vec{i},\vec{j})$ satisfying the equality (\ref{eq:satisfylinfty}).
We process as follows:
\begin{itemize}
    \item We choose an arbitrary enumeration of $N_{\text{inc}}$ as $\{i_1, i_2, \dots, i_h\}$ , see (\ref{eq:nconsninc}), and set $\vec{i} = (i_1, i_2, \dots, i_h)$.

    \item To the $h$-uplet $\vec{i} := (i_1, \dots, i_h)$, we associate an $h$-uplet $\vec{j}  = (j_1, j_2, \dots, j_h)$ whose components satisfy the following requirement: for each $\lambda \in \{1,2,\dots,h\}$, set 
    \begin{equation}\label{eq:levecj}
    j_\lambda := \operatorname{argmin}_{j \in J} \| A^{(i_\lambda, j)} - A \|_\infty, \text{ which means that } \| A^{(i_\lambda, j_\lambda)} - A \|_\infty = \min_{j \in J} \| A^{(i_\lambda, j)} - A \|_\infty.
    \end{equation}
\end{itemize}
The above method allows us to  introduce the following set of pairs $(\vec{i}, \vec{j})$: 

\begin{equation}\label{eq:lAinfty}
L_{A,\infty} = \Bigl\{ (\vec{i},\vec{j}) \,\Bigm|\, 
\begin{array}{l}
\vec{i} = (i_1,\dots,i_h) \text{ is any ordering of } N_{\text{inc}},\\[1mm]
\vec{j} = (j_1,\dots,j_h) \text{ with } j_\lambda \in \mathcal{J}(i_\lambda) \text{ for each } \lambda=1,\dots,h
\end{array}
\Bigr\},    
\end{equation}

\noindent where $\mathcal{J}: I \to 2^J: i \rightarrow  \{ j \in J \mid \|A^{(i,j)} - A\|_\infty = \min\limits_{j' \in J}\|A^{(i,j')} - A\|_\infty \}$. \\

From Theorem \ref{theorem:theo1consSij}, we know that  for any $(\vec{i},\vec{j}) \in L_{A,\infty}$, we have $A^{(\vec{i},\vec{j})} \in {\cal T}$, i.e., 
the system $A^{(\vec{i},\vec{j})} \Box_{\min}^{\max} x = b$ is consistent and therefore $\mathring{\Delta}_{\infty} \leq   \|A^{(\vec{i},\vec{j})} - A\|_\infty$. The following theorem is the main step  to compute $\mathring{\Delta}_{\infty}$ by an explicit analytical formula:   

\begin{restatable}{theorem}{theoremformulaforLinftydist}
\label{theorem:formulaforLinftydistance}
Let $(S) = A \Box_{\min}^{\max} x = b$ be an inconsistent system. Then, 
for any $(\vec i , \vec j)\in L_{A,\infty}$ (see (\ref{eq:lAinfty})), we have: 
\begin{enumerate}
    \item $\mathring{\Delta}_{\infty} = \Vert A^{(\vec i , \vec j)}  - A \Vert_\infty.$
\item $\mathring{\Delta}_{\infty} = \max_{1 \leq \lambda \leq h}\, \Vert A^{(i_\lambda , j_\lambda)}  - A \Vert_\infty $ where $\vec i = (i_1 , \dots , i_h)$ and $\vec j = (j_1 , \dots , j_h).$
\end{enumerate}
\end{restatable}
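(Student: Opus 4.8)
The plan is to reduce the entire statement to the two corollaries already in hand: Corollary~\ref{corollary:normminJh}, which turns $\mathring{\Delta}_{\infty}$ into a minimum over the finite set $J^h$, and Corollary~\ref{corollary:normlambdainfty}, which factors the $L_\infty$ distance of any $A^{(\vec i,\vec j')}$ as a maximum of the single-index distances $\Vert A^{(i_\lambda,j'_\lambda)}-A\Vert_\infty$. I first fix an arbitrary pair $(\vec i,\vec j)\in L_{A,\infty}$. By definition of $L_{A,\infty}$, $\vec i=(i_1,\dots,i_h)$ is an enumeration of $N_{\text{inc}}$ and each component satisfies $j_\lambda\in\mathcal{J}(i_\lambda)$, that is, $\Vert A^{(i_\lambda,j_\lambda)}-A\Vert_\infty=\min_{j\in J}\Vert A^{(i_\lambda,j)}-A\Vert_\infty$. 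Since statement~2 follows from statement~1 by a single application of Corollary~\ref{corollary:normlambdainfty} to $A^{(\vec i,\vec j)}$, the whole task comes down to proving $\mathring{\Delta}_{\infty}=\Vert A^{(\vec i,\vec j)}-A\Vert_\infty$.

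Starting from Corollary~\ref{corollary:normminJh} with $p=\infty$ — legitimate because that corollary guarantees the minimum does not depend on the enumeration $\vec i$ chosen — and then rewriting each norm via Corollary~\ref{corollary:normlambdainfty}, I obtain
$$
\mathring{\Delta}_{\infty}=\min_{\vec j'\in J^h}\Vert A^{(\vec i,\vec j')}-A\Vert_\infty
=\min_{\vec j'\in J^h}\ \max_{1\le\lambda\le h}\Vert A^{(i_\lambda,j'_\lambda)}-A\Vert_\infty.
$$
The main step is the decoupling of this inner min-max. Because the components $j'_1,\dots,j'_h$ of a tuple in $J^h$ may be chosen independently and the $\lambda$-th term of the maximum depends only on $j'_\lambda$, the minimum of the maximum equals the maximum of the minima:
$$
\min_{\vec j'\in J^h}\ \max_{1\le\lambda\le h}\Vert A^{(i_\lambda,j'_\lambda)}-A\Vert_\infty
=\max_{1\le\lambda\le h}\ \min_{j\in J}\Vert A^{(i_\lambda,j)}-A\Vert_\infty.
$$

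I would prove this elementary identity by the two standard inequalities. For ``$\le$'', choosing each $j'_\lambda$ to realize $\min_{j}\Vert A^{(i_\lambda,j)}-A\Vert_\infty$ shows the left side is at most the right. For ``$\ge$'', given any $\vec j'$, the maximum dominates the single term indexed by the $\lambda$ attaining $\max_\lambda\min_j$, which is itself at least that minimum; taking the minimum over $\vec j'$ yields the reverse bound. This is exactly the point where the $L_\infty$ norm is essential: Corollary~\ref{corollary:normlambdainfty} expresses the distance as a maximum over independent single-column contributions, so the optimization separates column by column — a feature the $L_1$ and $L_2$ norms lack, which is why they incur higher cost. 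The only subtlety worth checking is that Corollary~\ref{corollary:normlambdainfty} applies even when the chosen $j'_\lambda$ happen to coincide, but that corollary is stated for arbitrary $\vec j'\in J^h$, so no distinctness of the second-index components is required.

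Finally, I close the argument using the defining property $j_\lambda\in\mathcal{J}(i_\lambda)$ of the fixed pair: each $\Vert A^{(i_\lambda,j_\lambda)}-A\Vert_\infty$ already equals $\min_{j}\Vert A^{(i_\lambda,j)}-A\Vert_\infty$, so that $\max_\lambda\Vert A^{(i_\lambda,j_\lambda)}-A\Vert_\infty=\max_\lambda\min_j\Vert A^{(i_\lambda,j)}-A\Vert_\infty=\mathring{\Delta}_{\infty}$ by the displayed identity. One further application of Corollary~\ref{corollary:normlambdainfty} identifies this maximum with $\Vert A^{(\vec i,\vec j)}-A\Vert_\infty$, giving statement~1, while the chain of equalities is precisely statement~2. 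In short, no genuinely new difficulty arises beyond recognizing the reduction; the substantive work was already absorbed into Corollaries~\ref{corollary:normminJh} and~\ref{corollary:normlambdainfty}, and the remaining content is the separability of the $L_\infty$ optimization.
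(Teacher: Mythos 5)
Your proposal is correct, and it takes a genuinely different route from the paper's proof. The paper does not use the separability argument: it proves statement~1 by direct comparison with an arbitrary $T\in\mathcal{T}$ — for each $\lambda$ it picks $l_\lambda$ with $\delta^T(i_\lambda,l_\lambda)=0$, invokes Lemma~\ref{lemma:matrixTwithdetaTijzero} to get $\Vert A^{(i_\lambda,l_\lambda)}-A\Vert_\infty\le\Vert T-A\Vert_\infty$, and then locates an entry $(k,j_{\widetilde\lambda})$ attaining $\Vert A^{(\vec i,\vec j)}-A\Vert_\infty$ and runs a case analysis on $(k,j_{\widetilde\lambda})\in\mathcal{E}_1$ versus $\mathcal{E}_2$ (via Proposition~\ref{proposition:SimpleDefAijvec}) to bound that entry by $\Vert A^{(i_\lambda,j_\lambda)}-A\Vert_\infty$; in effect it re-proves part of Corollary~\ref{corollary:normlambdainfty} inline rather than citing it for the upper bound. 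You instead compose the two finished corollaries: Corollary~\ref{corollary:normminJh} reduces $\mathring{\Delta}_{\infty}$ to $\min_{\vec j'\in J^h}\max_\lambda\Vert A^{(i_\lambda,j'_\lambda)}-A\Vert_\infty$, and the elementary interchange $\min_{\vec j'}\max_\lambda f_\lambda(j'_\lambda)=\max_\lambda\min_j f_\lambda(j)$ — valid because each term depends only on its own coordinate of $\vec j'$ — finishes both statements at once, given $j_\lambda\in\mathcal{J}(i_\lambda)$. Your dependencies are sound and non-circular (both corollaries are established before the theorem, Corollary~\ref{corollary:normlambdainfty} requires no distinctness of the $j'_\lambda$, and inconsistency guarantees $N_{\text{inc}}\neq\emptyset$ so $L_{A,\infty}$ is populated). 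What each approach buys: the paper's argument is self-contained against $\mathcal{T}$ and directly re-exhibits minimality without passing through Corollary~\ref{corollary:normminJh}, while yours is shorter, more modular, and isolates the actual reason the $L_\infty$ case admits a closed-form answer — the column-wise separability of the max — which the $L_1$ and $L_2$ norms lack, exactly matching the paper's remark about their higher computational cost.
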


We give an explicit analytical formula for computing the distance $\mathring{\Delta}_{\infty}$:
\begin{restatable}{corollary}{corollaryformulaforLinftydist}
\label{corollary:formulaforLinftydist}
For any system $(S) : A \Box_{\min}^{\max} x = b$, 
 we have:
\begin{equation}\label{eq:formulaforLinftydistance}
    \mathring{\Delta}_{\infty} = \max_{i\in I} \, \min_{j\in J} \, \max\Bigl[(b_i - a_{ij})^+, \, \max_{k \in I, k\neq i}\Bigl(\theta(i,k) \cdot (a_{kj} - b_k)^+\Bigr)\Bigr]
\end{equation}

where $\theta: I \times I \mapsto \{0,1\}: \theta(i,k) = \begin{cases}
1 & \text{ if } b_i > b_k\\
0 & \text{ otherwise}
\end{cases}$ and was defined in (\ref{eq:UDij1}).

The formula (\ref{eq:formulaforLinftydistance}) can be reformulated using the set $N_{\text{inc}}$, see (\ref{eq:nconsninc}):
\[
\mathring{\Delta}_{\infty} = \max_{i\in N_{\text{inc}}} \, \min_{j\in J} \, \max\Bigl[(b_i - a_{ij})^+, \, \max_{k \in I, k\neq i}\Bigl(\theta(i,k) \cdot (a_{kj} - b_k)^+\Bigr)\Bigr].
\]
\end{restatable}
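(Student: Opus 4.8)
The plan is to derive Corollary~\ref{corollary:formulaforLinftydist} by combining Theorem~\ref{theorem:formulaforLinftydistance} with the explicit $L_\infty$ distance formula for single-pair auxiliary matrices furnished by Lemma~\ref{lemma:deltaaijijequalzero}(4). First I would invoke Theorem~\ref{theorem:formulaforLinftydistance}(2): for any $(\vec i,\vec j)\in L_{A,\infty}$ we have $\mathring{\Delta}_{\infty} = \max_{1\le\lambda\le h}\Vert A^{(i_\lambda,j_\lambda)}-A\Vert_\infty$, where $\vec i=(i_1,\dots,i_h)$ is an enumeration of $N_{\text{inc}}$. By the definition of $L_{A,\infty}$ in (\ref{eq:lAinfty}), each $j_\lambda$ is chosen so that $\Vert A^{(i_\lambda,j_\lambda)}-A\Vert_\infty = \min_{j\in J}\Vert A^{(i_\lambda,j)}-A\Vert_\infty$. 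Substituting this in gives
\begin{equation*}
\mathring{\Delta}_{\infty} = \max_{\lambda=1}^{h}\,\min_{j\in J}\,\Vert A^{(i_\lambda,j)}-A\Vert_\infty = \max_{i\in N_{\text{inc}}}\,\min_{j\in J}\,\Vert A^{(i,j)}-A\Vert_\infty,
\end{equation*}
the last equality holding because $\{i_1,\dots,i_h\}=N_{\text{inc}}$.

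Next I would plug in the closed form for $\Vert A^{(i,j)}-A\Vert_\infty$ supplied by Lemma~\ref{lemma:deltaaijijequalzero}(4), namely
\begin{equation*}
\Vert A^{(i,j)}-A\Vert_\infty = \max\Bigl[(b_i-a_{ij})^+,\,\max_{k\in I,\,k\neq i}\bigl(\theta(i,k)\cdot(a_{kj}-b_k)^+\bigr)\Bigr].
\end{equation*}
This immediately yields the $N_{\text{inc}}$ version of the formula stated at the end of the corollary. To reach the form (\ref{eq:formulaforLinftydistance}) indexed over all of $I$, I would argue that extending the outer maximum from $N_{\text{inc}}$ to $I$ does not change the value. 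The point is that for $i\in N_{\text{cons}}$ we have $\delta_i^A=0$, i.e. $\min_{j\in J}\delta^A(i,j)=0$; I would then show that the inner quantity $\min_{j\in J}\Vert A^{(i,j)}-A\Vert_\infty$ equals $\min_{j\in J}\delta^A(i,j)$, so it is also $0$ for such $i$. Since the value contributed by any $i\in N_{\text{cons}}$ is $0$ and $\mathring{\Delta}_{\infty}\ge 0$ (indeed $\mathring{\Delta}_{\infty}>0$ as the system is inconsistent), adding these zero terms to the outer maximum leaves it unchanged, giving the $I$-indexed formula.

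The key identification underlying the last step is that the bracketed expression in Lemma~\ref{lemma:deltaaijijequalzero}(4) coincides exactly with $\delta^A(i,j)$ from (\ref{eq:deltaTij}): expanding $\delta^A(i,j)=\max[(b_i-a_{ij})^+,\max_k \sigma_G(b_i,a_{kj},b_k)]$ and rewriting $\sigma_G(b_i,a_{kj},b_k)$ via the equivalent description in (\ref{eq:UDij1}) shows the $\sigma_G$ term reduces, for the purpose of strict positivity and of the $L_\infty$ modification size, to $\theta(i,k)\cdot(a_{kj}-b_k)^+$ — and I would verify that the auxiliary matrix's construction in Definition~\ref{def:matAij} sets each modified entry exactly to $b_k$, so the incurred change $|a^{(i,j)}_{kj}-a_{kj}|$ is precisely $(a_{kj}-b_k)^+$ on $U^A_{ij}$ and the change in row $i$ is $(b_i-a_{ij})^+$. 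The main obstacle I anticipate is precisely this reconciliation: the Chebyshev-distance formula uses $\sigma_G$ with its factor of $\tfrac12$ inside $\min\bigl(\tfrac{(x-z)^+}{2},(y-z)^+\bigr)$, whereas the $L_\infty$ modification magnitude in Lemma~\ref{lemma:deltaaijijequalzero}(4) carries no such halving. I would need to argue carefully that although the numerical values of $\sigma_G(b_i,a_{kj},b_k)$ and $\theta(i,k)(a_{kj}-b_k)^+$ differ, they have the same support (both positive under identical conditions $b_i>b_k$ and $a_{kj}>b_k$), and that for the $L_\infty$ distance what matters is the actual entry displacement to $b_k$, not the $\sigma_G$ value — so the formula genuinely must use $\theta(i,k)(a_{kj}-b_k)^+$ rather than $\sigma_G$. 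Once this entry-displacement accounting is pinned down, the corollary follows by direct substitution.
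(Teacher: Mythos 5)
Your proposal is correct in substance and follows essentially the same route as the paper: for the inconsistent case the paper likewise combines Theorem \ref{theorem:formulaforLinftydistance} (its second statement, obtained via Corollary \ref{corollary:normlambdainfty}) with the definition (\ref{eq:lAinfty}) of $L_{A,\infty}$ and statement 4 of Lemma \ref{lemma:deltaaijijequalzero}, and it then disposes of the indices $i \in N_{\text{cons}}$ by exactly your support argument, formalized as the implication (\ref{eq:deltanul}) that $\delta^A_i = 0$ forces $\min_{j\in J}\max\bigl[(b_i-a_{ij})^+,\,\max_{k\neq i}\theta(i,k)(a_{kj}-b_k)^+\bigr] = 0$. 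Two caveats. First, the corollary is asserted for \emph{any} system $(S)$, whereas Theorem \ref{theorem:formulaforLinftydistance}, your starting point, is stated only for inconsistent systems; your proof explicitly assumes inconsistency, so it does not cover the consistent case. The paper handles that case separately: there $\mathring{\Delta}_{\infty} = 0$ and every $i \in I$ satisfies $\delta^A_i = 0$, so the right-hand side vanishes by the same zero-support observation (and the $N_{\text{inc}}$-indexed version holds with $\max_\emptyset = 0$). This is an easy repair using machinery you already have, but as written the proof is incomplete for the statement as claimed. Second, your intermediate assertion that $\min_{j\in J}\Vert A^{(i,j)}-A\Vert_\infty$ \emph{equals} $\min_{j\in J}\delta^A(i,j)$ is false as a general identity: the factor $\tfrac{1}{2}$ inside $\sigma_G$ gives $\delta^A(i,j) \le \Vert A^{(i,j)}-A\Vert_\infty$, often strictly (e.g.\ $\sigma_G(b_i,a_{kj},b_k)$ can equal $\tfrac{(b_i-b_k)^+}{2} < (a_{kj}-b_k)^+$). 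As your final paragraph itself recognizes, only the vanishing sets coincide — $\delta^A(i,j)=0$ iff $\Vert A^{(i,j)}-A\Vert_\infty = 0$, since $\delta^A(i,j)=0$ makes $U^A_{ij}=\emptyset$ and $b_i \le a_{ij}$, hence $A^{(i,j)}=A$ — and that weaker statement is all the argument needs, so it should be phrased that way rather than as an equality of minima.
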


The formula (\ref{eq:formulaforLinftydistance}) of $\mathring{\Delta}_{\infty}$ is useful for estimating to which extent the system $(S)$ is inconsistent, with respect to its matrix $A$. Analogously, using $\Delta$, see (\ref{eq:Delta}), we can measure how inconsistent the system $(S)$ is, with respect to its right-hand side vector $b$. It is easy to see that we have $\Delta \leq \mathring{\Delta}_{\infty}$ for any max-min system $(S)$.

\begin{example}
(continued)
We reuse the inconsistent max-min system $(S) : A \Box_{\min}^{\max} x = b$ of Example \ref{ex:incons} and the matrices of Example \ref{ex:matAij} where $J = \{1,2,3\}$ and:
\[
A = \begin{bmatrix}
0.04 & 0.73 & 0.5 \\
0.33 & 0.35 & 0.94 \\
0.55 & 0.9  & 0.35
\end{bmatrix}, \quad
b = \begin{bmatrix}
0.3\\[0.5em]
0.42\\[0.5em]
0.76
\end{bmatrix}.
\]
We obtained
\[
N_{\text{cons}} = \{1\}, \quad N_{\text{inc}} = \{2, 3\}.
\]
We define two vectors: 
\[
\vec{i} = (i_1,i_2) = (2,3)
\]
(from the elements of $N_{\text{inc}}$) and 
\[
\vec{j} = (j_1,j_2) \in J^2.
\]
Since $|J|=3$ and the length of $\vec{i}$ is $2$, there are $3^2=9$ possible choices for $\vec{j}$. For each such pair $(\vec{i},\vec{j})$, we update $A$ in two steps:
\begin{itemize}
  \item Step $\lambda=1$: Update using the pair $(i_1,j_1)$.
  \item Step $\lambda=2$: Update using the pair $(i_2,j_2)$.
\end{itemize}
Below, we detail the nine cases.

\noindent {Case 1:} $\vec{j} = (1,1)$
\begin{itemize}
  \item $\lambda=1$: For $(i_1,j_1) = (2,1)$, update the (2,1)-entry from $0.33$ to $0.42$.
  \item $\lambda=2$: For $(i_2,j_2) = (3,1)$, update the (3,1)-entry from $0.55$ to $0.76$.
\end{itemize}
The final matrix is
\[
A^{(\vec{i},\vec{j})} = \begin{bmatrix}
0.04 & 0.73 & 0.5 \\
0.42 & 0.35 & 0.94 \\
0.76 & 0.9  & 0.35
\end{bmatrix},
\]
and we have $\| A^{(\vec{i},\vec{j})} - A \|_{\infty} = 0.21.$

\noindent {Case 2:} $\vec{j} = (1,2)$
\begin{itemize}
  \item $\lambda=1$: For $(2,1)$, update the (2,1)-entry from $0.33$ to $0.42$.
  \item $\lambda=2$: For $(3,2)$, update (1,2)-entry from $0.73$ to $0.3$.
\end{itemize}
The final matrix is
\[
A^{(\vec{i},\vec{j})} = \begin{bmatrix}
0.04 & 0.3 & 0.5 \\
0.42 & 0.35 & 0.94 \\
0.55 & 0.9  & 0.35
\end{bmatrix},
\]
and we have $\| A^{(\vec{i},\vec{j})} - A \|_{\infty} = 0.43.$

\noindent {Case 3:} $\vec{j} = (1,3)$
\begin{itemize}
  \item $\lambda=1$: For $(2,1)$, update the $(2,1)$-entry from $0.33$ to $0.42$.
  \item $\lambda=2$: For $(3,3)$, update:
  \[
  \text{the (3,3)-entry from } 0.35 \text{ to } 0.76,\quad
  \text{the (1,3)-entry from } 0.5 \text{ to } 0.3,\quad
   \text{the (2,3)-entry from } 0.94 \text{ to } 0.42.
  \]
\end{itemize}
The final matrix is
\[
A^{(\vec{i},\vec{j})} = \begin{bmatrix}
0.04 & 0.73 & 0.3 \\
0.42 & 0.35 & 0.42 \\
0.55 & 0.9  & 0.76
\end{bmatrix},
\]
and we have $
\| A^{(\vec{i},\vec{j})} - A \|_{\infty} = 0.52.
$

\noindent {Case 4:} $\vec{j} = (2,1)$
\begin{itemize}
  \item $\lambda=1$: For $(2,2)$, update:
  \[
  \text{the (2,2)-entry from } 0.35 \text{ to } 0.42,\quad
   \text{the (1,2)-entry from } 0.73 \text{ to } 0.3.
  \]
  \item $\lambda=2$: For $(3,1)$, update the (3,1)-entry from $0.55$ to $0.76$.
\end{itemize}
The final matrix is
\[
A^{(\vec{i},\vec{j})} = \begin{bmatrix}
0.04 & 0.3 & 0.5 \\
0.33 & 0.42 & 0.94 \\
0.76 & 0.9  & 0.35
\end{bmatrix},
\]
and we have $
\| A^{(\vec{i},\vec{j})} - A \|_{\infty} = 0.43.
$

\noindent {Case 5:} $\vec{j} = (2,2)$
\begin{itemize}
  \item $\lambda=1$: For $(2,2)$, update:
  \[
  \text{the (2,2)-entry from } 0.35 \text{ to } 0.42,\quad
 \text{the (1,2)-entry from } 0.73 \text{ to } 0.3.
  \]
  \item $\lambda=2$: For $(3,2)$, no change is made.
\end{itemize}
The final matrix is
\[
A^{(\vec{i},\vec{j})} = \begin{bmatrix}
0.04 & 0.3 & 0.5 \\
0.33 & 0.42 & 0.94 \\
0.55 & 0.9  & 0.35
\end{bmatrix},
\]
and we have $\| A^{(\vec{i},\vec{j})} - A \|_{\infty} = 0.43.$

\noindent {Case 6:} $\vec{j} = (2,3)$
\begin{itemize}
  \item $\lambda=1$: For $(2,2)$, update:
  \[
   \text{the (2,2)-entry from } 0.35 \text{ to } 0.42,\quad
   \text{the (1,2)-entry from } 0.73 \text{ to } 0.3.
  \]
  \item $\lambda=2$: For $(3,3)$, update:
  \[
  \text{the (3,3)-entry from } 0.35 \text{ to } 0.76,\quad
  \text{the (1,3)-entry from } 0.5 \text{ to } 0.3,\quad
\text{the (2,3)-entry from } 0.94 \text{ to } 0.42.
  \]
\end{itemize}
The final matrix is
\[
A^{(\vec{i},\vec{j})} = \begin{bmatrix}
0.04 & 0.3 & 0.3 \\
0.33 & 0.42 & 0.42 \\
0.55 & 0.9  & 0.76
\end{bmatrix},
\]
and we have $\| A^{(\vec{i},\vec{j})} - A \|_{\infty} = 0.52.$

\noindent {Case 7:} $\vec{j} = (3,1)$
\begin{itemize}
  \item $\lambda=1$: For $(2,3)$, update the (1,3)-entry  from $0.5$ to $0.3$.
  \item $\lambda=2$: For $(3,1)$, update the (3,1)-entry from $0.55$ to $0.76$.
\end{itemize}
The final matrix is
\[
A^{(\vec{i},\vec{j})} = \begin{bmatrix}
0.04 & 0.73 & 0.3 \\
0.33 & 0.35 & 0.94 \\
0.76 & 0.9  & 0.35
\end{bmatrix},
\]
and we have $\| A^{(\vec{i},\vec{j})} - A \|_{\infty} = 0.21.$

\noindent {Case 8:} $\vec{j} = (3,2)$
\begin{itemize}
  \item $\lambda=1$: For $(2,3)$, update the (1,3)-entry from $0.5$ to $0.3$.
  \item $\lambda=2$: For $(3,2)$, update (1,2)-entry from $0.73$ to $0.3$.
\end{itemize}
The final matrix is
\[
A^{(\vec{i},\vec{j})} = \begin{bmatrix}
0.04 & 0.3 & 0.3 \\
0.33 & 0.35 & 0.94 \\
0.55 & 0.9  & 0.35
\end{bmatrix},
\]
and we have $\| A^{(\vec{i},\vec{j})} - A \|_{\infty} = 0.43.$\\
\noindent {Case 9:}  $\vec{j} = (3,3)$
\begin{itemize}
  \item $\lambda=1$: For $(2,3)$, update the (1,3)-entry from $0.5$ to $0.3$.
  \item $\lambda=2$: For $(3,3)$, update:
  \[
   \text{the (3,3)-entry from } 0.35 \text{ to } 0.76,\quad
   \text{the (2,3)-entry from } 0.94 \text{ to } 0.42.
  \]
\end{itemize}
The final matrix is
\[
A^{(\vec{i},\vec{j})} = \begin{bmatrix}
0.04 & 0.73 & 0.3 \\
0.33 & 0.35 & 0.42 \\
0.55 & 0.9  & 0.76
\end{bmatrix},
\]
and we have $\| A^{(\vec{i},\vec{j})} - A \|_{\infty} = 0.52.$ 

Therefore, we constructed nine consistent systems of the form $A^{(\vec{i}, \vec{j})} \Box_{\min}^{\max}x =b$, which use the same right-hand side vector: the right-hand side vector $b$ of the original inconsistent system $A \Box_{\min}^{\max}x =b$.

    The best vector pairs (minimizing $\| A^{(\vec{i},\vec{j})} - A \|_{\infty}$) yield a value of $0.21$. In our cases, these are:
\[
\big((2,3),\,(1,1)\big) \quad \text{and} \quad \big((2,3),\,(3,1)\big).
\]
Moreover, by choosing
\[
j_\lambda = \operatorname*{argmin}_{j \in J^h} \| A^{(i_\lambda,j)} - A \|_{\infty},
\]
we obtain these vector pairs which results in the best norm $\| A^{(\vec{i},\vec{j})} - A \|_{\infty} = 0.21$.
\end{example}

\section{Analogous results for a system of min-max fuzzy relational equations}
\label{sec:minmaxres}
In \cite{baaj2024maxmin}, we showed how to switch from a system of min-max fuzzy relational equations  to a system of max-min fuzzy relational equations. This equivalence is based on the transformation $t \rightarrow 1 - t$, which allowed us to show that studying a system of max-min fuzzy relational equations is equivalent to studying a system of min-max fuzzy relational equations, see \cite{baaj2024maxmin} for more details.

In this section, using the same notations as in \cite{baaj2024maxmin}, we begin by reminding how to solve a system of min-max fuzzy relational equations. 
Given an inconsistent system of  min-max fuzzy relational equations, we study how to obtain  consistent min-max systems close to the inconsistent system by  minimally modifying the matrix of the inconsistent system. We introduce new tools for a system of min-max fuzzy relational equations corresponding to those already introduced for a system of max-min fuzzy relational equations and we show their  correspondences in Table \ref{tab:correspondences}.

\subsection{Solving systems of \texorpdfstring{min-max}{min-max} fuzzy relational equations}
Let us use:
\begin{notation}\mbox{}
\label{not:minmaxbasic}
\begin{itemize}
    \item To any matrix $A = [a_{ij}]$, we associate the matrix  $A^{\circ} := [a_{ij}^\circ]$ where $a_{ij}^\circ:= 1 - a_{ij}$.  Clearly, 
      we have $(A^{\circ})^{\circ} = A$.
    \item The min-max matrix product $\Box^{\min}_{\max}$ uses the function $\max$ as the product and the function $\min$ as the addition and for  any matrices $A$ and $B$ of size $(n , p)$ and $(p ,m)$ respectively, we have:
    \begin{equation}\label{eq:cinq1}
     (A \Box_{\min}^{\max} B)^\circ = A^\circ \Box_{\max}^{\min} B^\circ.   
    \end{equation}
    \item The $\max-\epsilon$ matrix product $\Box_{\epsilon}^{\max}$ uses the $\epsilon$-product  defined by: \[ x \epsilon  y = \left\{\begin{array}{rrl}
y & \text{if}  & x < y\\
0 & \text{if}  & x \geq  y \\\end{array}\right. \text{ in } [0, 1].\]  as the product and the function $\max$ as the addition.
\end{itemize}
\end{notation}

Let us use the following system of min-max fuzzy relational equations:
\begin{equation}\label{eq:minmaxsys}
    G \Box_{\max}^{\min} x = d
\end{equation}
\noindent where
\[
G = [g_{ij}]_{1 \leq i \leq n,\,1 \leq j \leq m} \in [0,1]^{n \times m}, \quad d = [d_i]_{1 \leq i \leq n}, \quad \text{and} \quad x = [x_j]_{1 \leq j \leq m} \in [0,1]^{m} \text{ is the unknown vector}.
\]

\noindent To check if the min-max system $G \Box_{\max}^{\min} x = d$  is consistent, we compute the following vector: \begin{equation}\label{eq:rlowtestsol}
r = G^t \Box_{\epsilon}^{\max} d,\end{equation}
\noindent where $G^t$ is the transpose of $G$  and we use the $\max-\epsilon$ matrix product $\Box_{\epsilon}^{\max}$ (Notation \ref{not:minmaxbasic}). 
\noindent The vector $r$ is the potential lowest solution of the system $G \Box_{\max}^{\min} x = d$. 

\noindent Sanchez's result \cite{sanchez1976resolution} ((reminded in (\ref{eq:consiste1}))  for  solving of a system max-min fuzzy relational equations becomes  the following one for solving a system of min-max fuzzy relational equations:
\begin{equation}\label{eq:consiste1Minmax}
    \text{The system } G \Box_{\max}^{\min} x = d \text{ is consistent}\Longleftrightarrow G \Box_{\max}^{\min} r = d. 
\end{equation}
\noindent The structure of the solution set  of a  system of min-max fuzzy relational equations is as follows. If the system $G \Box_{\max}^{\min} x = d$ is consistent,  the vector $r = G^t \Box_{\epsilon}^{\max} d$  is the lowest solution of the system, and the system  has a finite number of maximal solutions.

To handle the inconsistency of a min-max system, the author of \cite{baaj2024maxmin} studied the Chebyshev distance associated with the right-hand side vector $d$ of the system $G \Box_{\max}^{\min} x = d$:
\begin{equation}
     \nabla = \nabla(G,d) = \min_{w \in{\cal W} }\Vert d - w\Vert_{\infty} \text{ where }   {\cal W} = \{w\in [0 , 1]^{n \times 1}\mid \text{the system }  G \Box_{\max}^{\min} x =  w \text{ is consistent}\}.
\end{equation}
 \cite{baaj2024maxmin} computes by an explicit analytical formula the Chebyshev distance $\nabla$:
\begin{equation}\label{eq:nablapremier}
    \nabla  =  \max_{1 \leq i \leq n}\,\nabla_i^G
\end{equation}
where for $i = 1, 2, \dots n$: 
\begin{equation} \label{eq:Nablamini} 
\nabla_i^G =  \min_{1 \leq j \leq m}\,\max[ ( g_{ij}-d_i)^+,  \max_{1 \leq k \leq n}\,  \,\sigma_\epsilon\,(d_i, g_{kj}, d_k)]
\end{equation}
\noindent and
\begin{equation}\label{eq:sigmaespi}
    \sigma_\epsilon\,(u,v,w) = \min(\frac{(w-u)^+}{2}, (w - v)^+) = \sigma_G(u^\circ , v^\circ , w^\circ).  
\end{equation}
where $x^\circ := 1 - x$ for any $x\in[0 , 1]$ and the function $\sigma_G$ is defined in (\ref{eq:sigmaG}).

\noindent
Similarly to the case of an inconsistent max-min system,  the author of  \cite{baaj2024maxmin}  studies how to obtain approximate solutions of an inconsistent min-max system $G \Box_{\max}^{\min} x = d$ and  Chebyshev approximations of the right-hand side vector $d$ of this system. 

\subsection{Constructing consistent systems by minimally modifying the matrix of an inconsistent min-max system}
\label{subsec:minmaxsysresults}
In this section, we study how to minimally modify the matrix $G$ of size $(n,m)$ of an inconsistent min-max  system $G \Box_{\max}^{\min} x = d$, see (\ref{eq:minmaxsys}), in order to obtain a consistent min-max system, while keeping the right-hand side vector $d$ unchanged. This study is analogous to that carried out for a max-min system $A \Box_{\min}^{\max} x = b$, see Section \ref{sec:minimallymod} and Section \ref{sec:mainresults}.

\noindent Table \ref{tab:correspondences} summarizes the main tools introduced for a max-min system $A \Box_{\min}^{\max} x = b$ and those for a min-max system $G \Box_{\min}^{\min} x = d$, introduced in what follows. In the case where $G:= A^\circ$ and $d:= b^\circ$ (Notation \ref{not:minmaxbasic}), we establish explicit correspondences between the tools of these  systems.  The rest of this section introduces the tools for a min-max system $G \Box_{\min}^{\min} x = d$ and we justify the correspondences with the tools for a max-min system.

\begin{table}[H]
\footnotesize
\centering
\begin{tabular}{c|c|c|c|}
\hhline{~---}
& \multicolumn{1}{c|}{System: $A \Box_{\min}^{\max} x = b$} & \multicolumn{1}{c|}{System: $G \Box_{\max}^{\min} x = d$} & 
\multicolumn{1}{c|}{
\cellcolor[HTML]{C0C0C0}
\begin{tabular}[c]{@{}l@{}}
Relation \\
 iff $G = A^\circ$ and $d= b^\circ$
\end{tabular}}\\\hline

\multicolumn{1}{|l|}{
\begin{tabular}[c]{@{}l@{}}Potential greatest/lowest\\ solution\end{tabular}}   &   \begin{tabular}[c]{@{}l@{}} $e=A^t\Box_{\rightarrow_G}^{\min} b$ \\ (greatest solution) \end{tabular}     & \begin{tabular}[c]{@{}l@{}} $ r=G^t\Box_{\epsilon}^{\max} d $\\ (lowest solution) \end{tabular}        &  \cellcolor[HTML]{C0C0C0}$r= e^\circ$  \\ \hline

\multicolumn{1}{|l|}{\begin{tabular}[c]{@{}l@{}}Chebyshev distance\\ associated  with \\the right-hand side vector\end{tabular}}                                                                & $\Delta=\Delta(A,b) = \max\limits_{1\leq i\leq n}\delta_i^A $        & $\nabla=\nabla(G,d) = \max\limits_{1 \leq i \leq n} \nabla_i^G$        &                    \cellcolor[HTML]{C0C0C0}            $\nabla(G,d)=\Delta(A,b)$        \\ \hline

\multicolumn{1}{|l|}{\begin{tabular}[c]{@{}l@{}}Subsets of equation indexes\\ $\quad$\end{tabular}}  &{$N_{\text{cons}}\text{ and }N_{\text{inc}}$}& {$M_{\text{cons}}\text{ and }M_{\text{inc}}$}& {\cellcolor[HTML]{C0C0C0}
\begin{tabular}[c]{@{}l@{}}$M_{\text{cons}} = N_{\text{cons}} $\\ $M_{\text{inc}} = N_{\text{inc}} $
\end{tabular}} \\ \hline

\multicolumn{1}{|l|}{\begin{tabular}[c]{@{}l@{}}Set of matrices \\of consistent systems\\
with fixed right-hand side vector \end{tabular}}                                                                & $\mathcal{T}$        & $\mathcal{Q}$        &                    \cellcolor[HTML]{C0C0C0}            $\mathcal{Q} = \mathcal{T}^\circ$        \\ \hline

\multicolumn{1}{|l|}{\begin{tabular}[c]{@{}l@{}}Distance associated  with \\the matrix of the system\end{tabular}}                                                                & $\mathring{\Delta}_{p}$        & $\mathring{\nabla}_{p}$        &                    \cellcolor[HTML]{C0C0C0}            $\mathring{\Delta}_{p} = \mathring{\nabla}_{p}$        \\ \hline

\multicolumn{1}{|l|}{\begin{tabular}[c]{@{}l@{}}Auxiliary matrices\\ $\quad$\end{tabular}}  &{$A^{(i , j)}$}& {$G^{(i , j)}$}& {\cellcolor[HTML]{C0C0C0}$G^{(i , j)} = [A^{(i , j)}]^{\circ}$} \\ \hline

\multicolumn{1}{|l|}{\begin{tabular}[c]{@{}l@{}}Modified matrices\\ $\quad$\end{tabular}}  &{$A^{(\vec{i} , \vec{j})}$}& {$G^{(\vec{i} , \vec{j})}$}& {\cellcolor[HTML]{C0C0C0}$G^{(\vec{i} , \vec{j})} = [A^{(\vec{i} , \vec{j})}]^{\circ}$} \\ \hline

\multicolumn{1}{|l|}{\begin{tabular}[c]{@{}l@{}}Set of best pairs of vectors \\(associated with $L_\infty$ norm)\\ $\quad$\end{tabular}}  &{$L_{A,\infty}$}& {$P_{G,\infty}$}& {\cellcolor[HTML]{C0C0C0}$P_{G,\infty} = L_{A,\infty}$} \\ \hline
\end{tabular}
\vspace{1em}
\caption{Tools of the systems $A \Box_{\min}^{\max} x = b$ and $G \Box_{\max}^{\min} x = d$ and their relations iff $G = A^\circ$ and $d= b^\circ$.}
\label{tab:correspondences}
\end{table}

We introduce the following set, which is formed by  the matrices of size $(n,m)$ governing the  consistent min-max  systems which use the same right-hand side vector: the right-hand side vector $d$ of the min-max system $G \Box_{\max}^{\min} x = d$:

\begin{equation}\label{eq:setQ}
    {\cal Q} := \{Q\in [0 , 1]^{n \times m}\mid \text{the system }  Q \Box_{\max}^{\min} x = d  \text{ is consistent}\}.
\end{equation}

\noindent For each $p \in \{1,2,\infty\}$, we define the distance from the matrix $G$ to the set $\mathcal{Q}$ as:
\begin{equation}\label{eq:nablaAp}
    \mathring{\nabla}_{p} := \inf_{Q \in \mathcal{Q}} \|Q - G\|_p.
\end{equation}
Here, $\|\cdot\|_p$ denotes the matrix norm with exponent $p$. The set ${\cal Q}$ and the distance $\mathring{\nabla}_{p}$ are defined  for the min-max system $G \Box_{\max}^{\min} x = d$  analogously to the set $\mathcal{T}$ and the distance $\mathring{\Delta}_{p}$ for the max-min system $A \Box_{\min}^{\max} x = b$, see (\ref{eq:setT}) and (\ref{eq:deltaAp}).

\noindent
We rely on the max-min system $A \Box_{\min}^{ \max} x = b$ equivalent to the min-max system $G \Box_{\max}^{\min} x = d$ in the following sense:
\begin{definition}\label{def:equivsys}
    To the min-max  system $G \Box_{\max}^{\min} x = d$, see (\ref{eq:minmaxsys}),  we associate the following $\max$-$\min$  system:
\begin{equation}\label{eq:min-max/max-min}
 A \Box_{\min}^{ \max} x = b \quad \text{where} \quad A:= G^\circ  \quad \text{and} \quad b:= d^\circ.
\end{equation}
\end{definition}

\noindent
We notice that we have $\Vert A  - B \Vert_p = \Vert A^\circ - B^\circ \Vert_p$ for any matrices $A$ and $B$ of size $(n , m)$ (see Notation \ref{not:minmaxbasic}). Therefore, we    easily deduce from  (\ref{eq:cinq1}) and   Lemma \ref{lemma:DistanceDeltaApmin}   that: 

\begin{restatable}{lemma}{lemmanablaApmin}
\label{lemma:nablaApmin}
Set $A:= G^\circ$ and $b:= d^\circ$, then we have:
\begin{enumerate}
    \item $\mathcal{Q} = \mathcal{T}^\circ$  where $\mathcal{T}^\circ := \{T^\circ\,\mid\, T \in {\cal T}\}$
    , see (\ref{eq:setT}) and (\ref{eq:setQ}).
    \item $\mathring{\nabla}_{p} = \mathring{\Delta}_{p} = \min_{Q \in \mathcal{Q}} \|Q - G\|_p$ 
    , see (\ref{eq:deltaAp}) and  (\ref{eq:nablaAp}).
\end{enumerate}
\end{restatable}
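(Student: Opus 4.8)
The plan is to reduce both items to the already-established max-min results through the involution $t \mapsto 1-t$, i.e.\ the matrix operation $(\cdot)^\circ$, by exploiting the identity (\ref{eq:cinq1}) together with the fact that $(\cdot)^\circ$ preserves each entrywise $p$-norm. Throughout I set $A := G^\circ$ and $b := d^\circ$, so that $G = A^\circ$ and $d = b^\circ$ by the involutivity $(X^\circ)^\circ = X$ recorded in Notation \ref{not:minmaxbasic}.

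For the first item, I would first establish the pointwise equivalence: for every $Q \in [0,1]^{n \times m}$ and every $x \in [0,1]^{m \times 1}$,
\[
Q \Box_{\max}^{\min} x = d \iff Q^\circ \Box_{\min}^{\max} x^\circ = b.
\]
This follows by applying (\ref{eq:cinq1}) to the matrices $Q^\circ$ and $x^\circ$, which yields $(Q^\circ \Box_{\min}^{\max} x^\circ)^\circ = Q \Box_{\max}^{\min} x$; applying $(\cdot)^\circ$ to both sides of $Q \Box_{\max}^{\min} x = d$ and using $d^\circ = b$ then gives the claimed equivalence. Since $x \mapsto x^\circ$ is a bijection of $[0,1]^{m \times 1}$, the min-max system $Q \Box_{\max}^{\min} x = d$ admits a solution if and only if the max-min system $Q^\circ \Box_{\min}^{\max} x = b$ does; that is, $Q \in \mathcal{Q}$ iff $Q^\circ \in \mathcal{T}$. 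By involutivity this is exactly $Q \in \mathcal{T}^\circ$, so $\mathcal{Q} = \mathcal{T}^\circ$.

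For the second item, I would use the change of variable $Q = T^\circ$: by the first item, $T = Q^\circ$ ranges over $\mathcal{T}$ precisely as $Q$ ranges over $\mathcal{Q}$. Combining this with the norm invariance $\|Q - G\|_p = \|T^\circ - A^\circ\|_p = \|T - A\|_p$ stated just before the lemma, I get $\inf_{Q \in \mathcal{Q}} \|Q - G\|_p = \inf_{T \in \mathcal{T}} \|T - A\|_p$, i.e.\ $\mathring{\nabla}_{p} = \mathring{\Delta}_{p}$. Finally, Lemma \ref{lemma:DistanceDeltaApmin} guarantees that the infimum defining $\mathring{\Delta}_{p}$ is attained, hence is a minimum; transporting the attaining matrix back through $(\cdot)^\circ$ shows the infimum over $\mathcal{Q}$ is attained as well, which yields $\mathring{\nabla}_{p} = \min_{Q \in \mathcal{Q}} \|Q - G\|_p$.

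The only delicate point is the bookkeeping in the consistency equivalence: one must apply (\ref{eq:cinq1}) to the pair $(Q^\circ, x^\circ)$ rather than to $(Q, x)$, keep track of which side carries the outer $(\cdot)^\circ$, and then invoke the bijectivity of $x \mapsto x^\circ$ on the solution cube so that consistency — an existential statement over $x$ — transfers correctly. Everything else is a direct substitution, and no genuinely new estimate is needed, since both the norm invariance and the inf-equals-min statement are already available.
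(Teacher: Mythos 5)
Your proof is correct and takes essentially the same route the paper intends: the paper leaves this lemma unproved, remarking only that it follows easily from the norm invariance $\Vert X - Y \Vert_p = \Vert X^\circ - Y^\circ \Vert_p$, the identity (\ref{eq:cinq1}), and Lemma \ref{lemma:DistanceDeltaApmin}, which are exactly the three ingredients you assemble. Your explicit treatment of the consistency transfer --- applying (\ref{eq:cinq1}) to the pair $(Q^\circ, x^\circ)$ and invoking the bijectivity of $x \mapsto x^\circ$ so that the existential statement carries over --- is a faithful and correct filling-in of the details the paper omits.
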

\qed

\noindent
We reuse the sets of indices $I = \{1,2,\cdots,n\}$ and $J = \{1,2,\dots,m\}$ to put some notations:
\begin{notation}\label{not:inconsminmax}

  For all $(i ,j)\in I \times J$, using the matrix $G$ and the right-hand side vector $d$  of the system $G \Box_{\max}^{\min} x = d$ we put: 
  \begin{equation}\label{eq:vgij}
      V^G_{ij} := \{k\in I\,\mid \, \sigma_\epsilon(d_i , g_{kj} , d_k) > 0\},
  \end{equation}
  \noindent where $\sigma_\epsilon$ is defined in (\ref{eq:sigmaespi}). Thus, we have: 
  \begin{equation}\label{eq:VequalU}
   V^G_{ij} = U^A_{ij}   
  \end{equation}
  where   $U^A_{ij}$ is the set associated with the $\max$-$\min$ system (\ref{eq:min-max/max-min}) as in  (\ref{eq:UDij}).

From the formula for computing the Chebyshev distance $\nabla$ associated with the right-hand side vector $d$ of the system $G \Box_{\max}^{\min} x = d$, see (\ref{eq:nablapremier}), we set:
\begin{equation}\label{eq:mconsninc}
    M_{\text{cons}} := \{ i \in I \mid \nabla_i = 0\} \quad \text{ and } \quad  M_{\text{inc}} := \{ i \in I \mid \nabla_i > 0\} \text{ and } h := \text{card}(M_{\text{inc}}).
\end{equation}

Thus, we have:
\begin{equation}\label{eq:M=N}
M_{\text{cons}} = N_{\text{cons}} \quad \text{and} \quad  M_{\text{inc}}=N_{\text{inc}}  
\end{equation}
where $N_{\text{cons}}$ and $N_{\text{inc}}$ are   the sets associated with the max-min system (\ref{eq:min-max/max-min}) as in (\ref{eq:nconsninc}).
\end{notation}

For any pair $(i,j) \in I \times J$, we define an  auxiliary matrix $G^{(i,j)}$ using the set $V^G_{ij}$:
\begin{definition}
 \noindent For any pair $(i,j) \in I \times J$, we construct an auxiliary matrix $G^{(i,j)} = [g^{(i,j)}_{kl}]_{1 \leq k \leq n , 1 \leq l \leq m}$ by:
\begin{subequations}\label{eq:Gijklgroup}
\begin{equation}\label{eq:Gijkl1}
g^{(i,j)}_{ij} = \begin{cases}
    d_i \quad & \text{if } \quad      d_i < g_{ij}    \\
 g_{ij}   \quad & \text{if } \quad d_i \geq g_{ij}
\end{cases} \quad  \text{and for } \quad \forall k \in I\backslash\{i\}, \quad         g^{(i , j)}_{kj} = \begin{cases}
d_k  \quad &  \text{if } \quad k \in V^G_{ij}  \\
 g_{kj}   \quad & \text{if } \quad k \notin V^G_{ij}  
 \end{cases}.     
\end{equation}
\begin{equation}\label{eq:Gijkl2}
\forall k \in I \, \forall l\in J \backslash\{j\}, 
  \quad  g^{(i , j)}_{kl} = g_{kl}.   
\end{equation}
\end{subequations}   
\end{definition}

The above definitions of the coefficients of  the auxiliary matrix $G^{(i,j)}$  correspond to the definitions (\ref{eq:Aijkl1}) and (\ref{eq:Aijkl2}) of the coefficients of the auxiliary matrix $A^{(i,j)}$ associated with  the  max-min system   $A \Box_{\min}^{ \max} x = b$ (\ref{eq:min-max/max-min}): 
\begin{equation}\label{eq:auxGA}
\text{for all $(i , j)\in I \times J$ , we have } \quad   G^{(i,j)} = [A^{(i,j)}]^\circ. 
\end{equation}

We generalize the construction of auxiliary matrices $G^{(i , j)}$ associated with pairs $(i , j)\in I \times J$ to pairs of index uplets $(\vec i , \vec j)\in I^h \times J^h$  where $1 \leq h \leq n$:
\begin{definition}\label{def:minmaxvecivecj}
For any  $h$-tuple  $\vec{i} = (i_1, i_2, \dots, i_h)$ of two-by-two distinct indices of the set $I$ and an  $h$-tuple  $\vec{j} = (j_1, j_2, \dots, j_h)$ of   indices of the set $J$, we associate  the  matrix  denoted $G^{(\vec{i},\vec{j})}$ which is constructed by successively applying the modifications indicated by the pairs $(i_1,j_1), (i_2,j_2), \dots, (i_h,j_h)$, as follows:
\begin{equation}\label{eq:constructionGijvec}
G^{(\vec{i} , \vec{j})} =  \begin{cases}
 G^{(i_1 , j_1)}    &   \text{if }       h = 1,\hfill     \\
 Y^{(i_h , j_h)}   & \text{if } h > 1 \,\text{and} \, Y := G^{(i_1 , j_1) \ast (i_2 , j_2) \ast \dots \ast(i_{h-1} , j_{h-1})}.
 \end{cases}
 \end{equation}
\end{definition}
  This construction can also be written as
$$
G^{(\vec{i},\vec{j})} = (((G^{(i_1,j_1)})^{(i_2,j_2)})^{(i_3,j_3)} \cdots )^{(i_h,j_h)}.
$$ 

By induction on $h$, we easily generalize  (\ref{eq:auxGA}):
\begin{equation}\label{eq:AuxGA}
  G^{(\vec i,\vec j)} = [A^{(\vec i,\vec j)}]^\circ
\end{equation}
where $A^{(\vec i,\vec j)}$  is introduced  in Definition \ref{def:vecivecj}.

We give a simpler construction of the matrices $G^{(\vec{i}, \vec{j})}$. For any  $h$-tuple  $\vec{i} = (i_1, i_2, \dots, i_h)$ of two-by-two distinct indices of the set $I$ and an  $h$-tuple  $\vec{j} = (j_1, j_2, \dots, j_h)$ of   indices of the set $J$, using the sets $V^G_{ij}$ defined in  (\ref{eq:vgij}),  we construct the following sets: 
\begin{subequations}\label{eq:fhminusonesets}
\begin{flalign}
 {\cal F}_1 &:= \{ (i_\lambda, j_\lambda) \in I \times J \mid \lambda \in \{1,\dots,h\} \text{ and }  g_{i_\lambda j_\lambda} > d_{i_\lambda} \},\\[1mm]
 {\cal F}_2 &:= \bigcup_{\lambda=1}^h \{ (k, j_\lambda) \mid k \in V^G_{i_\lambda j_\lambda} \} \subseteq I \times J,\\[1mm]
  {\cal F} &:=  {\cal F}_1 \cup  {\cal F}_2 \subseteq I \times J.
\end{flalign}
\end{subequations}
\noindent Using the equalities $V^G_{ij} = U^A_{ij}$ for all $(i,j) \in I \times J$, see (\ref{eq:VequalU}), we deduce:
\begin{equation}\label{eq:F=E}
{\cal F}_1 = {\cal E}_1, \quad  {\cal F}_2 = {\cal E}_2  \quad \text{and} \quad 
{\cal F} = {\cal E}
 \end{equation}

 where the sets $\mathcal{E}_1, \mathcal{E}_2$ and $\mathcal{E}$, see (\ref{eq:combinedE1E2Eh}), are associated with the  max-min system   $A \Box_{\min}^{ \max} x = b$ equivalent to the min-max  $G \Box_{\max}^{\min} x = d$ (see Definition \ref{def:equivsys}). Thus, using  (\ref{eq:AuxGA}) and  Proposition \ref{proposition:SimpleDefAijvec}, we easily get:
\begin{proposition}\label{eq:AuxGF}
For any  $h$-tuple  $\vec{i} = (i_1, i_2, \dots, i_h)$ of two-by-two distinct indices of the set $I$ and an  $h$-tuple  $\vec{j} = (j_1, j_2, \dots, j_h)$ of   indices of the set $J$, we have:    \begin{equation}\label{eq:constructionGijvecWithF}
    \forall (k , l)\in I \times J, \, g^{(\vec i , \vec j)}_{kl} = 
\begin{cases}
 d_{k} \quad & \text{if } \quad      (k , l) \in {\cal F}    \\
 g_{kl}   \quad & \text{if } \quad  (k , l) \notin {\cal F}
 \end{cases}.
\end{equation} 
\end{proposition}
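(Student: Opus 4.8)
The plan is to combine the entrywise correspondence (\ref{eq:AuxGA}) between $G^{(\vec i , \vec j)}$ and $A^{(\vec i , \vec j)}$, the explicit coefficient formula for $A^{(\vec i , \vec j)}$ from Proposition \ref{proposition:SimpleDefAijvec}, and the set identity (\ref{eq:F=E}); the claimed formula then follows by transporting each case through the involution $t \mapsto 1 - t$. No genuine new argument is needed, since every ingredient is already established.

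First I would recall that the operation $\circ$ acts entrywise, so the matrix identity $G^{(\vec i , \vec j)} = [A^{(\vec i , \vec j)}]^\circ$ of (\ref{eq:AuxGA}) reads, at the level of coefficients, as $g^{(\vec i , \vec j)}_{kl} = 1 - a^{(\vec i , \vec j)}_{kl}$ for every $(k,l) \in I \times J$. Next, I would apply Proposition \ref{proposition:SimpleDefAijvec} to the max-min system $A \Box_{\min}^{\max} x = b$ associated with $G \Box_{\max}^{\min} x = d$ via Definition \ref{def:equivsys}, which gives $a^{(\vec i , \vec j)}_{kl} = b_k$ when $(k,l) \in {\cal E}$ and $a^{(\vec i , \vec j)}_{kl} = a_{kl}$ otherwise.

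It then remains to pass these two cases through the complement. Since $A := G^\circ$ and $b := d^\circ$ (Definition \ref{def:equivsys}), we have $b_k = 1 - d_k$ and $a_{kl} = 1 - g_{kl}$, hence $1 - b_k = d_k$ and $1 - a_{kl} = g_{kl}$. Substituting into $g^{(\vec i , \vec j)}_{kl} = 1 - a^{(\vec i , \vec j)}_{kl}$ yields $g^{(\vec i , \vec j)}_{kl} = d_k$ when $(k,l) \in {\cal E}$ and $g^{(\vec i , \vec j)}_{kl} = g_{kl}$ otherwise. Finally, invoking the identity ${\cal F} = {\cal E}$ from (\ref{eq:F=E}) lets me replace ${\cal E}$ by ${\cal F}$ in the case condition, which is exactly the claimed formula (\ref{eq:constructionGijvecWithF}).

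I do not expect a real obstacle: the statement is a direct transport of Proposition \ref{proposition:SimpleDefAijvec} across the involution $\circ$, and all required pieces—the entrywise duality (\ref{eq:AuxGA}), the coefficient formula for $A^{(\vec i , \vec j)}$, and the set equality (\ref{eq:F=E})—are already in hand. The only point demanding a little care is the bookkeeping of the complement, namely verifying that $1 - b_k$ collapses to $d_k$ and $1 - a_{kl}$ to $g_{kl}$; once this is checked, the two cases match the proposition verbatim.
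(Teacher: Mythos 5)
Your proposal is correct and follows exactly the paper's own route: the paper likewise derives the formula by combining the entrywise duality $G^{(\vec i , \vec j)} = [A^{(\vec i , \vec j)}]^\circ$ from (\ref{eq:AuxGA}), the coefficient formula of Proposition \ref{proposition:SimpleDefAijvec} applied to the equivalent max-min system, and the set equality ${\cal F} = {\cal E}$ from (\ref{eq:F=E}). Your write-up merely makes explicit the bookkeeping ($1 - b_k = d_k$, $1 - a_{kl} = g_{kl}$) that the paper leaves as ``we easily get,'' so there is no gap and no divergence of method.
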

\qed
\noindent

\noindent Using the  max-min system   $A \Box_{\min}^{ \max} x = b$ equivalent to the min-max system  $G \Box_{\max}^{\min} x = d$ (see Definition \ref{def:equivsys}), we deduce the main results for the min-max system  $G \Box_{\max}^{\min} x = d$:
\begin{restatable}{theorem}{theoremminmaxtheoOneConsistentGij}
\label{theorem:minmaxtheoOneConsistentGij}
We  use the set $M_{\text{inc}}$ where  $h := \operatorname{card}(M_{\text{inc}})$, see (\ref{eq:mconsninc}). Choose an arbitrary enumeration of $M_{\text{inc}}$ as $\{i_1, i_2, \dots, i_h\}$ and set $\vec{i} = (i_1, i_2, \dots, i_h)$. Then, for every $h$-tuple $\vec{j} = (j_1, j_2, \dots, j_h) \in J^h$, the matrix $G^{(\vec{i},\vec{j})}$ constructed with (\ref{eq:constructionGijvec}) yields a consistent system, which is: 
$$
G^{(\vec{i},\vec{j})} \Box_{\max}^{\min} x = d.
$$ 
 Therefore, the matrix $G^{(\vec{i},\vec{j})}$ belongs to the set $\mathcal{Q}$, see (\ref{eq:setQ}), i.e., $G^{(\vec{i},\vec{j})} \in \mathcal{Q}$.\\
For $p\in\{1 , 2 , \infty\}$, we have: $$\Vert G^{(\vec{i},\vec{j})} - G \Vert_p \geq \mathring{\nabla}_{p},$$ see (\ref{eq:nablaAp}) for the definition of $\mathring{\nabla}_{p}$. 
\end{restatable}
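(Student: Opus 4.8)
The plan is to reduce the entire statement to its max-min counterpart, Theorem~\ref{theorem:theo1consSij}, via the involution $t \mapsto 1-t$ encoded by the $\circ$ operation. First I would set $A := G^\circ$ and $b := d^\circ$, as in Definition~\ref{def:equivsys}, so that the min-max system $G \Box_{\max}^{\min} x = d$ corresponds to the max-min system $A \Box_{\min}^{\max} x = b$. By (\ref{eq:M=N}) we have $M_{\text{inc}} = N_{\text{inc}}$, hence any enumeration $\{i_1,\dots,i_h\}$ of $M_{\text{inc}}$ with $h = \operatorname{card}(M_{\text{inc}})$ is simultaneously a valid enumeration of $N_{\text{inc}}$ of the same cardinality; thus the $h$-tuple $\vec{i}$ fed into the min-max construction is exactly an admissible $\vec{i}$ for Theorem~\ref{theorem:theo1consSij}.

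Next, for an arbitrary $\vec{j} \in J^h$, I would apply Theorem~\ref{theorem:theo1consSij} to the associated max-min system, which tells us that $A^{(\vec{i},\vec{j})} \in \mathcal{T}$, i.e.\ the system $A^{(\vec{i},\vec{j})} \Box_{\min}^{\max} x = b$ is consistent, and moreover that $\|A^{(\vec{i},\vec{j})} - A\|_p \geq \mathring{\Delta}_{p}$ for each $p \in \{1,2,\infty\}$. To transport consistency back to the min-max side, I would invoke the correspondence $G^{(\vec{i},\vec{j})} = [A^{(\vec{i},\vec{j})}]^\circ$ from (\ref{eq:AuxGA}) together with the set identity $\mathcal{Q} = \mathcal{T}^\circ$ from Lemma~\ref{lemma:nablaApmin}. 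Since $A^{(\vec{i},\vec{j})} \in \mathcal{T}$, its image $[A^{(\vec{i},\vec{j})}]^\circ = G^{(\vec{i},\vec{j})}$ lies in $\mathcal{T}^\circ = \mathcal{Q}$, which is precisely the assertion that $G^{(\vec{i},\vec{j})} \Box_{\max}^{\min} x = d$ is consistent; equivalently, one may apply $\circ$ to the consistency relation directly through (\ref{eq:cinq1}).

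Finally, for the norm inequality I would use the isometry $\Vert X - Y \Vert_p = \Vert X^\circ - Y^\circ \Vert_p$ noted earlier in the section, which yields
$$
\|G^{(\vec{i},\vec{j})} - G\|_p = \|[A^{(\vec{i},\vec{j})}]^\circ - A^\circ\|_p = \|A^{(\vec{i},\vec{j})} - A\|_p ,
$$
and combine this with the bound $\|A^{(\vec{i},\vec{j})} - A\|_p \geq \mathring{\Delta}_{p}$ from Theorem~\ref{theorem:theo1consSij} and the distance identity $\mathring{\nabla}_{p} = \mathring{\Delta}_{p}$ from Lemma~\ref{lemma:nablaApmin} to conclude $\|G^{(\vec{i},\vec{j})} - G\|_p \geq \mathring{\nabla}_{p}$.

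Because every ingredient --- the index-set identity, the auxiliary-matrix correspondence, the set identity $\mathcal{Q} = \mathcal{T}^\circ$, the distance identity, and the norm isometry --- is already established earlier, there is no genuinely hard analytic step here; the only care needed is bookkeeping, namely verifying that each $\circ$ is applied to the correct object and that taking the complement twice returns the original matrix, so that $[A^{(\vec{i},\vec{j})}]^\circ$ indeed recovers $G^{(\vec{i},\vec{j})}$ and conversely. The argument is therefore a pure translation that mirrors the proof of Theorem~\ref{theorem:theo1consSij} under the dictionary of Table~\ref{tab:correspondences}.
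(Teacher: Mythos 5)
Your proposal is correct and coincides with the paper's own (very condensed) proof, which likewise derives the result from the identity $M_{\text{inc}}=N_{\text{inc}}$ in (\ref{eq:M=N}), the correspondence $G^{(\vec i,\vec j)}=[A^{(\vec i,\vec j)}]^\circ$ in (\ref{eq:AuxGA}), and Theorem \ref{theorem:theo1consSij}, with Lemma \ref{lemma:nablaApmin} supplying $\mathcal{Q}=\mathcal{T}^\circ$ and $\mathring{\nabla}_{p}=\mathring{\Delta}_{p}$. You have merely made explicit the bookkeeping (the norm isometry $\Vert X - Y\Vert_p = \Vert X^\circ - Y^\circ\Vert_p$ and the transport of consistency via (\ref{eq:cinq1})) that the paper leaves implicit.
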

The proof relies on  (\ref{eq:M=N}),  (\ref{eq:AuxGA}) and  Theorem \ref{theorem:theo1consSij}.
\qed

\noindent
The following result highlights the minimality of the modifications made to obtain the  matrices   $G^{(\vec{i},\vec{j})}$  from $G$ using (\ref{eq:constructionGijvec}) and an $h$-tuple $\vec i$ as in Theorem \ref{theorem:minmaxtheoOneConsistentGij}:
\begin{restatable}{theorem}{theoremtheoTwoGijnorm}
\label{theorem:theo2Gijnorm}
Retain the notations from Theorem~\ref{theorem:minmaxtheoOneConsistentGij} and let 
\[
G = [g_{kl}]_{1 \le k \le n,\; 1 \le l \le m} \quad \text{and} \quad 
 \vec{i} = (i_1, i_2, \dots, i_h) \quad \text{such that } \quad M_{\text{inc}}=\{i_1, i_2, \dots, i_h\}.\]
Then, for every consistent min-max system whose right-hand side vector is $d$:
\[
T' \Box_{\max}^{\min} x = d,\quad \text{with } T' = [t'_{kl}]_{1 \le k \le n,\; 1 \le l \le m} \in \mathcal{Q},
\]
there exists an $h$-tuple $\vec{j} = (j_1, j_2, \dots, j_h) \in J^h$ such that the system 
\[
G^{(\vec{i},\vec{j})} \Box_{\max}^{\min} x = d
\]
is consistent and, for every $(k,l) \in I \times J$, 
\[
|g^{(\vec{i},\vec{j})}_{kl} - g_{kl}| \le |t'_{kl} - g_{kl}| \quad \text{where} \quad 
G^{(\vec{i},\vec{j})} = [g^{(\vec{i},\vec{j})}_{kl}]_{1 \le k \le n,\; 1 \le l \le m}.
\]
We have:  $$\Vert G^{(\vec{i},\vec{j})} - G \Vert_p \leq \Vert T' - G \Vert_p\, \quad \text{ for } \quad p\in\{1 , 2 , \infty\}.$$ 
\end{restatable}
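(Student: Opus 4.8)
The plan is to reduce the statement entirely to the max-min case already settled in Theorem~\ref{theorem:theo2Aijnorm}, using the complementation $t \mapsto 1-t$ and the equivalence between the min-max system $G \Box_{\max}^{\min} x = d$ and its associated max-min system $A \Box_{\min}^{\max} x = b$ with $A := G^\circ$ and $b := d^\circ$ (Definition~\ref{def:equivsys}). Concretely, let $T' \in \mathcal{Q}$ be the matrix of an arbitrary consistent min-max system $T' \Box_{\max}^{\min} x = d$. By Lemma~\ref{lemma:nablaApmin} we have $\mathcal{Q} = \mathcal{T}^\circ$, hence the matrix $T := (T')^\circ$ belongs to $\mathcal{T}$; equivalently, the max-min system $T \Box_{\min}^{\max} x = b$ is consistent.

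First I would apply Theorem~\ref{theorem:theo2Aijnorm} to this max-min system. By (\ref{eq:M=N}) the set $N_{\text{inc}}$ associated with $(A,b)$ coincides with $M_{\text{inc}}$, so the $h$-tuple $\vec{i} = (i_1, \dots, i_h)$ enumerating $M_{\text{inc}}$ is exactly an admissible enumeration of $N_{\text{inc}}$. Theorem~\ref{theorem:theo2Aijnorm} then furnishes an $h$-tuple $\vec{j} = (j_1, \dots, j_h) \in J^h$ such that $A^{(\vec{i},\vec{j})} \Box_{\min}^{\max} x = b$ is consistent and
\[
|a^{(\vec{i},\vec{j})}_{kl} - a_{kl}| \le |t_{kl} - a_{kl}| \quad \text{for every } (k,l) \in I \times J .
\]

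The final step is to transport both conclusions back to the min-max setting. For consistency, I would invoke (\ref{eq:cinq1}): applying $\circ$ to any solution $v$ of $A^{(\vec{i},\vec{j})} \Box_{\min}^{\max} x = b$ produces a solution $v^\circ$ of $[A^{(\vec{i},\vec{j})}]^\circ \Box_{\max}^{\min} x = d$, and since $[A^{(\vec{i},\vec{j})}]^\circ = G^{(\vec{i},\vec{j})}$ by (\ref{eq:AuxGA}), the system $G^{(\vec{i},\vec{j})} \Box_{\max}^{\min} x = d$ is consistent. For the entrywise bound, I would use that complementation preserves absolute differences: from $g_{kl} = 1 - a_{kl}$, $g^{(\vec{i},\vec{j})}_{kl} = 1 - a^{(\vec{i},\vec{j})}_{kl}$ (again (\ref{eq:AuxGA})) and $t'_{kl} = 1 - t_{kl}$ we obtain $|g^{(\vec{i},\vec{j})}_{kl} - g_{kl}| = |a^{(\vec{i},\vec{j})}_{kl} - a_{kl}|$ and $|t'_{kl} - g_{kl}| = |t_{kl} - a_{kl}|$, so the desired inequality follows immediately from the displayed one. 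The norm inequality then follows at once, either by aggregating the entrywise bounds or directly from the isometry $\|X - Y\|_p = \|X^\circ - Y^\circ\|_p$, which yields $\|G^{(\vec{i},\vec{j})} - G\|_p = \|A^{(\vec{i},\vec{j})} - A\|_p \le \|T - A\|_p = \|T' - G\|_p$.

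I do not expect a genuine obstacle here: the entire analytical difficulty is front-loaded into the max-min Theorem~\ref{theorem:theo2Aijnorm}, and the present proof is in essence a verification that each ingredient of that theorem survives the $\circ$-correspondence. The only points requiring care are the precise matching of index sets via $M_{\text{inc}} = N_{\text{inc}}$ and the correct transfer of consistency through (\ref{eq:cinq1}); both are already established earlier in this section, so the argument is a clean dualization rather than a new computation.
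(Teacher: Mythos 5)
Your proof is correct and follows essentially the same route as the paper, which proves this theorem by dualization via (\ref{eq:M=N}), (\ref{eq:AuxGA}) and Theorem~\ref{theorem:theo2Aijnorm}; your write-up simply spells out the details (using $\mathcal{Q} = \mathcal{T}^\circ$ from Lemma~\ref{lemma:nablaApmin}, transferring consistency through (\ref{eq:cinq1}), and noting that $t \mapsto 1-t$ preserves absolute differences and $\|\cdot\|_p$) that the paper leaves implicit.
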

The proof relies on  (\ref{eq:M=N}),  (\ref{eq:AuxGA}) and  Theorem \ref{theorem:theo2Aijnorm}.
\qed

\noindent
The above theorem  allows us to compute the distance $\mathring{\nabla}_{p}$, see (\ref{eq:nablaAp}), using the finite set $J^h$ instead of $\mathcal Q$:
\begin{restatable}{corollary}{corollarynormminJhG}
\label{corollary:normminJhG}
Recall that for $p \in \{1,2,\infty\}$ we defined in (\ref{eq:nablaAp}):
\[
\mathring{\nabla}_{p} := \inf_{Q \in \mathcal{Q}} \|Q - G\|_p.
\]
Then, it follows from Theorems~\ref{theorem:minmaxtheoOneConsistentGij} and \ref{theorem:theo2Gijnorm} that
\begin{equation}\label{eq:nablaAinfty}
\mathring{\nabla}_{p} = \min_{\vec{j} \in J^h} \|G^{(\vec{i},\vec{j})} - G\|_p.    
\end{equation}
Moreover, this minimum is independent of the particular enumeration $\{i_1, i_2, \dots, i_h\}$ of the set $M_{\text{inc}}$ chosen to define the $h$-tuple $\vec{i} = (i_1, i_2, \dots, i_h)$.
 \end{restatable}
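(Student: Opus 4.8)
The plan is to prove (\ref{eq:nablaAinfty}) by the same two-inequality argument that yields Corollary \ref{corollary:normminJh} in the max-min case, now invoking the two min-max theorems already established; the independence from the enumeration will then be immediate. Throughout I fix, as in Theorem \ref{theorem:minmaxtheoOneConsistentGij}, an arbitrary enumeration $\{i_1,\dots,i_h\}$ of $M_{\text{inc}}$ and set $\vec{i} = (i_1,\dots,i_h)$ with $h = \operatorname{card}(M_{\text{inc}})$; note that the minimum in (\ref{eq:nablaAinfty}) ranges over the finite set $J^h$, so it is attained.

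For the first inequality I would use Theorem \ref{theorem:minmaxtheoOneConsistentGij}: for every $\vec{j} \in J^h$ the matrix $G^{(\vec i,\vec j)}$ lies in $\mathcal{Q}$, so by the definition (\ref{eq:nablaAp}) of $\mathring{\nabla}_{p}$ as an infimum over $\mathcal{Q}$ we have $\|G^{(\vec i,\vec j)} - G\|_p \geq \mathring{\nabla}_{p}$; taking the minimum over $\vec{j} \in J^h$ gives $\min_{\vec{j} \in J^h} \|G^{(\vec i,\vec j)} - G\|_p \geq \mathring{\nabla}_{p}$. For the reverse inequality I would use Theorem \ref{theorem:theo2Gijnorm}: given any $T' \in \mathcal{Q}$, there exists $\vec{j} \in J^h$ with $\|G^{(\vec i,\vec j)} - G\|_p \leq \|T' - G\|_p$, whence $\min_{\vec{j} \in J^h} \|G^{(\vec i,\vec j)} - G\|_p \leq \|T' - G\|_p$; since $T'$ was arbitrary in $\mathcal{Q}$, passing to the infimum over $T' \in \mathcal{Q}$ yields $\min_{\vec{j} \in J^h} \|G^{(\vec i,\vec j)} - G\|_p \leq \mathring{\nabla}_{p}$. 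Combining the two inequalities gives the claimed equality (\ref{eq:nablaAinfty}).

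For the independence statement, I would observe that the scalar $\mathring{\nabla}_{p}$ defined in (\ref{eq:nablaAp}) makes no reference to any enumeration of $M_{\text{inc}}$, whereas the equality just proved holds for \emph{every} admissible $\vec{i}$, that is, for every enumeration; hence the value $\min_{\vec{j} \in J^h} \|G^{(\vec i,\vec j)} - G\|_p$ coincides with the enumeration-free quantity $\mathring{\nabla}_{p}$ and is therefore the same for all enumerations. I expect no genuine obstacle here: the argument is a verbatim transcription of the proof of Corollary \ref{corollary:normminJh}, and the only care needed is to call on the min-max theorems \ref{theorem:minmaxtheoOneConsistentGij} and \ref{theorem:theo2Gijnorm} (rather than their max-min counterparts) and to use the definition (\ref{eq:nablaAp}) of $\mathring{\nabla}_{p}$ consistently. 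As an alternative route, the whole statement follows at once via the involution $t \mapsto 1-t$: Lemma \ref{lemma:nablaApmin} gives $\mathring{\nabla}_{p} = \mathring{\Delta}_{p}$, equation (\ref{eq:AuxGA}) combined with the norm invariance $\|X - Y\|_p = \|X^\circ - Y^\circ\|_p$ gives $\|G^{(\vec i,\vec j)} - G\|_p = \|A^{(\vec i,\vec j)} - A\|_p$ for each $\vec{j}$, and Corollary \ref{corollary:normminJh} then transports directly using $M_{\text{inc}} = N_{\text{inc}}$ from (\ref{eq:M=N}).
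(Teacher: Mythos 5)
Your proof is correct and matches the paper's approach: the paper leaves this corollary with only a \qed, as it follows from Theorems \ref{theorem:minmaxtheoOneConsistentGij} and \ref{theorem:theo2Gijnorm} by exactly the two-inequality argument you give, which is a verbatim transcription of the paper's proof of Corollary \ref{corollary:normminJh}. Your alternative route via $t \mapsto 1-t$, Lemma \ref{lemma:nablaApmin}, (\ref{eq:AuxGA}) and (\ref{eq:M=N}) is also sound and is precisely the mechanism the paper itself uses to establish the min-max theorems you invoke.
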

\qed

\noindent In the rest of this section, using the $L_\infty$ norm i.e, $p = \infty$, we introduce a method for constructing a matrix 
$G^{(\vec{i},\vec{j})}
$, see (\ref{eq:constructionGijvec}), 
such that
\begin{equation}\label{eq:satisfylinftyG}
  \mathring{\nabla}_{\infty}=   \|G^{(\vec{i},\vec{j})} - G\|_\infty  ,
\end{equation}

\noindent  This construction will allow us to introduce an explicit analytical formula (Corollary \ref{corollary:formulaforLinftydistG}) for computing  the distance $\mathring{\nabla}_{\infty}$, see (\ref{eq:nablaAp}).

We rely on the  max-min system   $A \Box_{\min}^{ \max} x = b$ equivalent to the min-max  $G \Box_{\max}^{\min} x = d$ (see Definition \ref{def:equivsys}). We note that we have  $\mathring{\nabla}_{\infty} = \mathring{\Delta}_{\infty}$  
 where $\mathring{\Delta}_{\infty}$    is the distance associated with the system $A \Box_{\min}^{\max} x = b$, see (\ref{eq:deltaAp}).

\noindent
To find a pair $(\vec{i},\vec{j})$ satisfying the equality (\ref{eq:satisfylinftyG}), 
we process as follows:
\begin{itemize}
    \item We choose an arbitrary enumeration of $M_{\text{inc}}$ as $\{i_1, i_2, \dots, i_h\}$ , see (\ref{eq:mconsninc}), and set $\vec{i} = (i_1, i_2, \dots, i_h)$.

    \item To the $h$-uplet $\vec{i} := (i_1, \dots, i_h)$, we associate an $h$-uplet $\vec{j}  = (j_1, j_2, \dots, j_h)$ whose components satisfy the following requirement: for each $\lambda \in \{1,2,\dots,h\}$, set 
    \begin{equation}\label{eq:levecjG}
    j_\lambda := \operatorname{argmin}_{j \in J} \| G^{(i_\lambda, j)} - G \|_\infty, \text{ which means that } \| G^{(i_\lambda, j_\lambda)} - G \|_\infty = \min_{j \in J} \| G^{(i_\lambda, j)} - G \|_\infty.
    \end{equation}
\end{itemize}
The above method allows us to  introduce the following set of pairs $(\vec{i}, \vec{j})$ : 

\begin{equation}\label{eq:lAinftyG}
P_{G,\infty} = \Bigl\{ (\vec{i},\vec{j}) \,\Bigm|\, 
\begin{array}{l}
\vec{i} = (i_1,\dots,i_h) \text{ is any ordering of } M_{\text{inc}},\\[1mm]
\vec{j} = (j_1,\dots,j_h) \text{ with } j_\lambda \in \mathcal{J}'(i_\lambda) \text{ for each } \lambda=1,\dots,h
\end{array}
\Bigr\},    
\end{equation}

\noindent where $\mathcal{J}': I \to 2^J: i \rightarrow  \{ j \in J \mid \|G^{(i,j)} - G\|_\infty = \min_{j' \in J}\|G^{(i,j')} - G\|_\infty \}$. \\

\noindent
From (\ref{eq:M=N}) and (\ref{eq:auxGA}), we easily deduce the equality 
$P_{G,\infty} = L_{A,\infty}$ where $L_{A,\infty}$ is define in (\ref{eq:lAinfty}).

From Theorem \ref{theorem:minmaxtheoOneConsistentGij}, we know that  for any $(\vec{i},\vec{j}) \in P_{G,\infty}$, we have $G^{(\vec{i},\vec{j})} \in {\cal Q}$, i.e., 
the system $G^{(\vec{i},\vec{j})} \Box_{\min}^{\max} x = d$ is consistent and therefore $\mathring{\nabla}_{\infty} \leq   \|G^{(\vec{i},\vec{j})} - G\|_\infty$. We easily deduce
from  the second statment of Lemma \ref{lemma:nablaApmin} , (\ref{eq:AuxGA}) and  Theorem \ref{theorem:formulaforLinftydistance},    
the following theorem which is the main step  to compute $\mathring{\nabla}_{\infty}$ by an explicit analytical formula:   

\begin{restatable}{theorem}{theoremformulaforLinftydistG}
\label{theorem:formulaforLinftydistanceG}
Let $G \Box_{\max}^{\min} x = d$ be an inconsistent system. Then, 
for any $(\vec i , \vec j)\in P_{G,\infty}$, we have: 
\begin{enumerate}
    \item $\mathring{\nabla}_{\infty} = \Vert G^{(\vec i , \vec j)}  - G \Vert_\infty.$
\item $\mathring{\nabla}_{\infty} = \max_{1 \leq \lambda \leq h}\, \Vert G^{(i_\lambda , j_\lambda)}  - G \Vert_\infty $ where $\vec i = (i_1 , \dots , i_h)$ and $\vec j = (j_1 , \dots , j_h).$
\end{enumerate}
\end{restatable}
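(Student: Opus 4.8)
The plan is to reduce the whole statement to the already-established max-min result, Theorem~\ref{theorem:formulaforLinftydistance}, by transporting it through the involution $t \mapsto 1-t$ encoded by the operation $(\cdot)^\circ$ (Notation~\ref{not:minmaxbasic}). First I would set $A := G^\circ$ and $b := d^\circ$, so that $A \Box_{\min}^{\max} x = b$ is the max-min system equivalent to $G \Box_{\max}^{\min} x = d$ in the sense of Definition~\ref{def:equivsys}. Because $(\cdot)^\circ$ is an involution and, by (\ref{eq:cinq1}), intertwines the two matrix products, the given min-max system is inconsistent if and only if the associated max-min system is inconsistent; hence the hypotheses of Theorem~\ref{theorem:formulaforLinftydistance} are met for the pair $(A,b)$. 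I would then collect the three correspondences already recorded in this subsection: the equality $P_{G,\infty} = L_{A,\infty}$ (deduced above from (\ref{eq:M=N}) and (\ref{eq:auxGA})), the equality $\mathring{\nabla}_{\infty} = \mathring{\Delta}_{\infty}$ (the second statement of Lemma~\ref{lemma:nablaApmin}), and the matrix identity $G^{(\vec i, \vec j)} = [A^{(\vec i, \vec j)}]^\circ$ from (\ref{eq:AuxGA}). The only computational ingredient is the norm invariance $\Vert X - Y \Vert_p = \Vert X^\circ - Y^\circ \Vert_p$ noted in the text, which holds because each entry of $X^\circ - Y^\circ$ is the negative of the corresponding entry of $X - Y$, and the $L_p$ norms depend only on the absolute values of the entries.

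For the first statement I would take any $(\vec i, \vec j) \in P_{G,\infty}$. Since $P_{G,\infty} = L_{A,\infty}$, we have $(\vec i, \vec j) \in L_{A,\infty}$, so Theorem~\ref{theorem:formulaforLinftydistance}(1) applied to $(A,b)$ gives $\mathring{\Delta}_{\infty} = \Vert A^{(\vec i, \vec j)} - A \Vert_\infty$. Applying the norm invariance with $X = A^{(\vec i, \vec j)}$ and $Y = A$, and using $A^\circ = G$ together with $[A^{(\vec i, \vec j)}]^\circ = G^{(\vec i, \vec j)}$, I would rewrite the right-hand side as $\Vert G^{(\vec i, \vec j)} - G \Vert_\infty$. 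Combined with $\mathring{\nabla}_{\infty} = \mathring{\Delta}_{\infty}$, this yields $\mathring{\nabla}_{\infty} = \Vert G^{(\vec i, \vec j)} - G \Vert_\infty$, which is the first claim.

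The second statement follows in exactly the same way from Theorem~\ref{theorem:formulaforLinftydistance}(2): that result gives $\mathring{\Delta}_{\infty} = \max_{1 \leq \lambda \leq h} \Vert A^{(i_\lambda, j_\lambda)} - A \Vert_\infty$, and applying the norm invariance to each term together with the single-pair identity $G^{(i_\lambda, j_\lambda)} = [A^{(i_\lambda, j_\lambda)}]^\circ$ from (\ref{eq:auxGA}) turns each $\Vert A^{(i_\lambda, j_\lambda)} - A \Vert_\infty$ into $\Vert G^{(i_\lambda, j_\lambda)} - G \Vert_\infty$. Since no step requires a new estimate, I do not anticipate any genuine obstacle here; the only point demanding minor care is confirming that the two hypotheses of Theorem~\ref{theorem:formulaforLinftydistance}, namely inconsistency of the associated max-min system and the membership $(\vec i, \vec j) \in L_{A,\infty}$, are faithfully inherited through $(\cdot)^\circ$, which is immediate from the correspondences collected in Table~\ref{tab:correspondences}.
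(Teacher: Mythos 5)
Your proposal is correct and follows exactly the paper's own route: the paper proves Theorem~\ref{theorem:formulaforLinftydistanceG} precisely by transporting Theorem~\ref{theorem:formulaforLinftydistance} through the involution $(\cdot)^\circ$, invoking the second statement of Lemma~\ref{lemma:nablaApmin} (giving $\mathring{\nabla}_{\infty} = \mathring{\Delta}_{\infty}$), the identity $G^{(\vec i,\vec j)} = [A^{(\vec i,\vec j)}]^\circ$ from (\ref{eq:AuxGA}), the equality $P_{G,\infty} = L_{A,\infty}$, and the norm invariance $\Vert X - Y \Vert_p = \Vert X^\circ - Y^\circ \Vert_p$. Your write-up merely makes explicit the same deductions the paper leaves as immediate, including the correct observation that inconsistency is preserved under the correspondence of Definition~\ref{def:equivsys}.
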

\qed

\noindent We give an explicit analytical formula for computing the distance $\mathring{\nabla}_{\infty}$ using Corollary \ref{corollary:formulaforLinftydist}:
\begin{restatable}{corollary}{corollaryformulaforLinftydistG}
\label{corollary:formulaforLinftydistG}
For any system $G \Box_{\max}^{\min} x = d$, 
 we have:
\begin{equation}\label{eq:formulaforLinftydistanceG}
    \mathring{\nabla}_{\infty} = \max_{i\in I} \, \min_{j\in J} \, \max\Bigl[(g_{ij} - d_i)^+, \, \max_{k \in I, k\neq i}\Bigl(\theta'(i,k) \cdot (d_k - g_{kj})^+\Bigr)\Bigr]
\end{equation}

where $\theta': I \times I \mapsto \{0,1\}: \theta'(i,k) = \begin{cases}
1 & \text{ if } d_i < d_k\\
0 & \text{ otherwise}
\end{cases}.$

The formula (\ref{eq:formulaforLinftydistanceG}) can be reformulated using the set $M_{\text{inc}}$, see (\ref{eq:mconsninc}):
\[
\mathring{\nabla}_{\infty} = \max_{i\in M_{\text{inc}}} \, \min_{j\in J} \, \max\Bigl[(g_{ij} - d_i)^+, \, \max_{k \in I, k\neq i}\Bigl(\theta'(i,k) \cdot (d_k - g_{kj})^+\Bigr)\Bigr].
\]
\end{restatable}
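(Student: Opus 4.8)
The plan is to reduce everything to the already-established max-min formula via the complementation correspondence. First I would set $A := G^\circ$ and $b := d^\circ$, that is, $a_{ij} = 1 - g_{ij}$ and $b_i = 1 - d_i$ for all $(i,j) \in I \times J$, so that the min-max system $G \Box_{\max}^{\min} x = d$ corresponds to the max-min system $A \Box_{\min}^{\max} x = b$ of Definition \ref{def:equivsys}. By the second statement of Lemma \ref{lemma:nablaApmin}, we have $\mathring{\nabla}_{\infty} = \mathring{\Delta}_{\infty}$, so it suffices to rewrite the right-hand side of the formula of Corollary \ref{corollary:formulaforLinftydist} in terms of $G$ and $d$.

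Then I would perform the termwise substitution. The first inner term becomes $(b_i - a_{ij})^+ = \bigl((1 - d_i) - (1 - g_{ij})\bigr)^+ = (g_{ij} - d_i)^+$, and the second inner term becomes $(a_{kj} - b_k)^+ = \bigl((1 - g_{kj}) - (1 - d_k)\bigr)^+ = (d_k - g_{kj})^+$. For the indicator, $\theta(i,k) = 1$ iff $b_i > b_k$ iff $1 - d_i > 1 - d_k$ iff $d_i < d_k$, which is precisely $\theta'(i,k)$. Substituting these three identities into the formula of Corollary \ref{corollary:formulaforLinftydist} yields exactly the claimed expression for $\mathring{\nabla}_{\infty}$ of (\ref{eq:formulaforLinftydistanceG}).

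Finally, the reformulation over $M_{\text{inc}}$ follows from the equality $M_{\text{inc}} = N_{\text{inc}}$, see (\ref{eq:M=N}), together with the analogous reformulation of Corollary \ref{corollary:formulaforLinftydist} over $N_{\text{inc}}$. Since no step involves a genuine choice, there is no real obstacle here: the whole argument is a direct translation under the involution $t \mapsto 1 - t$. The only point requiring care is checking that the positive-part operation interacts with the complementation exactly as written, that is, that the differences flip sign so that each $(\cdot)^+$ term lands on the correct orientation (the $(g_{ij} - d_i)^+$ replacing $(b_i - a_{ij})^+$, and the $(d_k - g_{kj})^+$ replacing $(a_{kj} - b_k)^+$). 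This sign-bookkeeping is the step I would verify explicitly, but it is entirely routine given $(x - y)^+ = \bigl((1-y) - (1-x)\bigr)^+$.
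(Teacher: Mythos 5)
Your proposal is correct and follows exactly the route the paper intends: the paper derives this corollary directly from Corollary \ref{corollary:formulaforLinftydist} via the correspondence $A = G^\circ$, $b = d^\circ$ (Definition \ref{def:equivsys}), the equality $\mathring{\nabla}_{\infty} = \mathring{\Delta}_{\infty}$ from Lemma \ref{lemma:nablaApmin}, and $M_{\text{inc}} = N_{\text{inc}}$ from (\ref{eq:M=N}), leaving the termwise substitutions $(b_i - a_{ij})^+ = (g_{ij} - d_i)^+$, $(a_{kj} - b_k)^+ = (d_k - g_{kj})^+$ and $\theta = \theta'$ implicit. Your explicit sign-bookkeeping is precisely the verification the paper's terse \qed presupposes.
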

\qed

The distance  $\mathring{\nabla}_{\infty}$ is useful for estimating to which extent the min-max system $G \Box_{\max}^{\min} x = d$ is inconsistent, with respect to its matrix $G$. Analogously, using $\nabla$, see (\ref{eq:nablapremier}), we can measure how inconsistent the min-max system $G \Box_{\max}^{\min} x = d$ is with respect to its right-hand side vector $d$. It is easy to see that we have $\nabla \leq \mathring{\nabla}_{\infty}$ for the min-max system $G \Box_{\max}^{\min} x = d$.

\section{Conclusion}
In this article, we handled the inconsistency of a system of max-min fuzzy relational equations by modifying the matrix governing the system.  Our main results (Theorem \ref{theorem:theo1consSij} and Theorem \ref{theorem:theo2Aijnorm}) allows us to obtain 
consistent systems that closely approximate the original inconsistent system in the following sense: the right-hand side vector of each consistent system is that of the inconsistent system, and the coefficients of the matrix governing each consistent system are obtained by modifying, exactly and minimally, the entries of the original matrix that must be corrected to achieve consistency, while leaving all other entries unchanged.  The solutions of the obtained consistent systems can be considered as approximate solutions of the  inconsistent system. We studied the distance  $\mathring{\Delta}_{p}$ (in terms of a norm among $L_1$, $L_2$ or $L_\infty$) between the matrix of the inconsistent system and the set formed by the matrices of consistent systems that use the same right-hand side vector as the inconsistent system, see (\ref{eq:deltaAp}). We showed in Theorem \ref{theorem:formulaforLinftydistance}  that our method allows us to directly compute  matrices of consistent systems that use the same right-hand side vector as the inconsistent system whose distance in terms of $L_\infty$ norm  to the matrix of the inconsistent system is minimal (the computational costs are higher when using $L_1$ norm or $L_2$ norm, see Corollary \ref{corollary:normminJh}).   We also give an explicit analytical formula for computing  $\mathring{\Delta}_{\infty}$, see Corollary \ref{corollary:formulaforLinftydist}. Finally, we translated our results for min-max systems.

This work may be useful for solving inconsistency issues in max-min systems involved in max-min learning methods, such as learning the rule parameters of possibilistic rule-based systems \cite{baaj2022learning}, learning associate memories \cite{sussner2006implicative}, or learning  capacities of Sugeno integrals according to training data  \cite{baaj2024Sug}. It can also be useful for applications based on max-min systems, e.g. spatial analysis \cite{di2011spatial}, diagnostic problems \cite{dubois1995fuzzy}.

\clearpage
\bibliographystyle{abbrvnat} 
\bibliography{ref} 
\appendix

\clearpage

\section{Proofs}
\label{sec:proofs}

\subsection{Proof of the result in Section \ref{sec:minimallymod}}
\label{sec:proof:minimallymod}
\lemmaDistanceDeltaApmin*
\begin{proof}
By applying (\ref{eq:Delta}) and (\ref{eq:equivDelta}) to any matrix $T$, we deduce the equality:   $$\mathcal{T} = \{T \in [0,1]^{n \times m} \mid \text{ for all } i \in \{1,2,\dots,n\} \text{ we have }  \delta^T_i = 0 \}.$$  
Using Lemma \ref{lemma:cont}, we conclude that  $\mathcal{T}$ 
is a non-empty closed   subset of $[0,1]^{n \times m}$. Since $[0,1]^{n \times m}$ is compact and the norm $\|\cdot\|_p$ is continuous, the function $T \mapsto \|T-A\|_p$ attains its minimum on $\mathcal{T}$. Therefore,
$$
\mathring{\Delta}_{p} = \min_{T \in \mathcal{T}} \|T-A\|_p.
$$
\end{proof}

\subsection{Proof of the results in Section \ref{sec:auxiliarymatrix}}
\label{sec:proofs:auxiliarymatrix}

\lemmadeltaaijijequalzero*
\begin{proof}
Let $A^{(i,j)} = [a^{(i,j)}_{kl}]$. We remind that: 
$$\delta^{A^{(i,j)}}(s , l) = \max[ (b_s - a^{(i,j)}_{sl})^+,  \max_{1 \leq k \leq n}\,  \sigma_G\,(b_s, a^{(i,j)}_{kl}, b_k)]$$ 
and for every $k$ we have  $a^{(i,j)}_{kj} \in \{ b_k, a_{kj} \}$, see (\ref{eq:aijklisbkorakl}). 

\begin{enumerate}

\item First, we show that $b_i \le a^{(i,j)}_{ij}$. This inequality clearly follows from 
$$
a^{(i,j)}_{ij} = \begin{cases}
b_i & \text{if } b_i > a_{ij} \\[1mm]
a_{ij} & \text{if } b_i \le a_{ij}
\end{cases},
  $$
see (\ref{eq:Aijkl1}).
Moreover, for every $k$ we have: 
$$
\sigma_G(b_i, a^{(i,j)}_{kj}, b_k) = 0.
$$
This is evident (\ref{eq:sigmaG}) for $k = i$. For $k \neq i$, by (\ref{eq:Aijkl1}), we have:
$$
a^{(i,j)}_{kj} = \begin{cases}
b_k & \text{if } k \in U^A_{ij} \\[1mm]
a_{kj} & \text{if } k \notin U^A_{ij}
\end{cases}.
$$
If $k \in U^A_{ij}$ then 
$$
\sigma_G(b_i, a^{(i,j)}_{kj}, b_k) = \sigma_G(b_i, b_k, b_k) = 0.
$$
and if $k \notin U^A_{ij}$ then 
$$
\sigma_G(b_i, a^{(i,j)}_{kj}, b_k) = \sigma_G(b_i, a_{kj}, b_k),
$$
and by the definition  (\ref{eq:UDij}) of $U^A_{ij}$ we have 
$$
\sigma_G(b_i, a_{kj}, b_k) = 0.
$$
Thus, $\delta^{A^{(i,j)}}(i,j) = 0$.

\item Next, suppose that for $1 \le s \le n$ we have: 
$$
\delta^A(s,j) :=  \max[ (b_s - a_{sj})^+,  \max_{1 \leq k \leq n}\,  \sigma_G\,(b_s, a_{kj}, b_k)] = 0.
$$
We show that $\delta^{A^{(i,j)}}(s,j) = 0$. We may assume that $s \neq i$.

\noindent
As we have   $b_s \le a_{sj}$ and $\sigma_G(b_s, a_{kj}, b_k) = 0 $ for all $k\in I$, 
 to establish the inequality $b_s \le a^{(i,j)}_{sj}$ and that we have $\sigma_G(b_s, a^{(i,j)}_{kj}, b_k) = 0$ for all $k\in I$,  
it suffices to use (\ref{eq:sigmaG}) and  note   that $
a^{(i,j)}_{kj} \in \{ b_k, a_{kj} \}$, see (\ref{eq:aijklisbkorakl}).

\item By (\ref{eq:Aijkl2}),   for every $l \neq j$,    we have 
$a^{(i,j)}_{kl} = a_{kl} \text{ for all } k\in I.$ Thus by definition (\ref{eq:deltaTij}), we have: 
$$\delta^{A^{(i,j)}}(s,l) = \delta^A(s,l).$$

\item This equality is a consequence of the definition     of the coefficients of the matrix $A^{(i,j)}$, see (\ref{eq:Aijkl1}) and (\ref{eq:Aijkl2}),   and of the following relationship based on (\ref{eq:UDij1}):
\[ \max_{k\in I , k \not= i} \theta(i , k)\cdot (a_{kj} - b_k)^+  =
\max_{k\in U^A_{ij}} \mid a^{(i,j)}_{kj} - a_{kj}\mid \quad \text{with} \quad \max_\emptyset = 0.\]
 \end{enumerate}
\end{proof}

\lemmamatrixTwithdetaTijzero*
\begin{proof}
It is clear that the first statement implies the second one.

To prove the first statement, 
It suffices to verify the inequality for $l = j$ (see (\ref{eq:Aijklgroup}b)) and for $k \in I$ such that $a^{(i,j)}_{kj} \neq a_{kj}$.

\noindent
Using the hypothesis $\delta^T(i,j) = 0$, 
  i.e., $b_i \le t_{ij}$ and $\sigma_G(b_i,t_{kj},b_k) = 0$ for all $k \in I$, see (\ref{eq:deltaTij}), we observe:

\begin{itemize}
    \item If $k=i$, then from (\ref{eq:Aijklgroup}a) we deduce that $b_i > a_{ij}$ and $b_i = a^{(i,j)}_{ij}$, hence
    $$
    a_{ij} < b_i = a^{(i,j)}_{ij} \le t_{ij} \quad \text{which implies} \quad |a^{(i,j)}_{ij} - a_{ij}| \le |t_{ij} - a_{ij}|.
    $$
    
    \item If $k\neq i$, then by (\ref{eq:Aijklgroup}a) we have $k \in U^A_{ij}$, i.e.,
    $$
    \sigma_G(b_i,a_{kj},b_k) = \min\Bigl(\frac{(b_i-b_k)^+}{2},(a_{kj}-b_k)^+\Bigr) > 0,
    $$
    and $a^{(i,j)}_{kj} = b_k$. The hypothesis $\sigma_G(b_i,t_{kj},b_k) = 0$ then implies that $t_{kj} \le b_k$, so that
    $$
    t_{kj} \le b_k = a^{(i,j)}_{kj} < a_{kj} \quad \text{which implies} \quad |a^{(i,j)}_{kj} - a_{kj}| \le |t_{kj} - a_{kj}|.
    $$
\end{itemize}    
\end{proof}

\lemmacommutativeAij*
\begin{proof}
\noindent
Note that the matrix $A^{(i,j)}$ is obtained by modifying the column with index $j$ in the matrix $A$ (see (\ref{eq:Aijklgroup}a) and (\ref{eq:Aijklgroup}b)). Consequently, if $j_1 \neq j_2$, with such modification,    it is clear that the equality $D = F$ holds.

In the case where $j_1 = j_2$, we set $j := j_1 = j_2$ and show $D = F$ by computing the coefficients explicitly. We introduce the following notations for the matrices $C, D, E$, and $F$ of size $(n,m)$:
\begin{itemize}
    \item $C := A^{(i_1, j)} = [c_{kl}]$ and $D := C^{(i_2, j)} = [d_{kl}]$.
    \item $E := A^{(i_2, j)} = [e_{kl}]$ and $F := E^{(i_1, j)} = [f_{kl}]$.
\end{itemize}

To prove $D = F$, it suffices to compare the coefficients in the $j$-th column of $C, D, E$ and $F$, and show that the $j$-th column of $D$ equals the $j$-th column of $F$.

In order to establish $d_{i_2 j} = f_{i_2 j}$ and $d_{i_1 j} = f_{i_1 j}$, it is enough to prove $d_{i_2 j} = f_{i_2 j}$ and then, by swapping the roles of $i_1$ and $i_2$, we deduce $d_{i_1 j} = f_{i_1 j}$. We now show $d_{i_2 j} = f_{i_2 j}$.

According to (\ref{eq:Aijklgroup}a), we have
$$
d_{i_2 j} = 
\begin{cases}
b_{i_2} & \text{if } b_{i_2} > c_{i_2 j}\\
c_{i_2 j} & \text{if } b_{i_2} \le c_{i_2 j}
\end{cases}
$$
and, since $i_2 \neq i_1$,
$$
c_{i_2 j} = 
\begin{cases}
b_{i_2} & \text{if } i_2 \in U^A_{i_1 j}\\
a_{i_2 j} & \text{if } i_2 \notin U^A_{i_1 j}
\end{cases}.
$$

Moreover, again by (\ref{eq:Aijklgroup}a):
$$
f_{i_2 j} = 
\begin{cases}
b_{i_2} & \text{if } i_2 \in U^E_{i_1 j}\\
e_{i_2 j} & \text{if } i_2 \notin U^E_{i_1 j}
\end{cases},
\quad
e_{i_2 j} = 
\begin{cases}
b_{i_2} & \text{if } b_{i_2} > a_{i_2 j},\\
a_{i_2 j} & \text{if } b_{i_2} \le a_{i_2 j}
\end{cases}.
$$

\noindent\text{Case 1:} $b_{i_2} > a_{i_2 j}$.
\begin{itemize}
    \item We get $e_{i_2 j} = b_{i_2}$, so $f_{i_2 j} = b_{i_2}$.
    \item Since $(a_{i_2 j} - b_{i_2})^+ = 0$,
    from (\ref{eq:sigmaG}) and (\ref{eq:UDij}),  it follows that $i_2 \notin U^A_{i_1 j}$, thus $c_{i_2 j} = a_{i_2 j} < b_{i_2}$, which implies $d_{i_2 j} = b_{i_2} = f_{i_2 j}$.
\end{itemize}

\noindent\text{Case 2:} $b_{i_2} \le a_{i_2 j}$. Then $e_{i_2 j} = a_{i_2 j}$. We distinguish two subcases:
\begin{itemize}
    \item If $i_2 \in U^A_{i_1 j}$, then $c_{i_2 j} = b_{i_2}$, so $d_{i_2 j} = b_{i_2}$. Since $e_{i_2 j} = a_{i_2 j}$ and $i_2 \in U^A_{i_1 j}$, it follows that $i_2 \in U^E_{i_1 j}$, hence $f_{i_2 j} = b_{i_2} = d_{i_2 j}$.
    \item If $i_2 \notin U^A_{i_1 j}$, then $c_{i_2 j} = a_{i_2 j} \ge b_{i_2}$, so $d_{i_2 j} = a_{i_2 j}$. But $e_{i_2 j} = a_{i_2 j}$ and $i_2 \notin U^A_{i_1 j}$ imply $i_2 \notin U^E_{i_1 j}$, so $f_{i_2 j} = e_{i_2 j} = a_{i_2 j} = d_{i_2 j}$.
\end{itemize}

Thus we have shown $d_{i_2 j} = f_{i_2 j}$. By interchanging $i_1$ and $i_2$, we also obtain $d_{i_1 j} = f_{i_1 j}$.

Next, let $k \notin \{ i_1, i_2 \}$. We will prove $d_{k j} = f_{k j}$ by considering the two cases $k \in U^A_{i_1 j}$ or $k \notin U^A_{i_1 j}$. From (\ref{eq:Aijkl1}), we have
$$
d_{k j} = 
\begin{cases}
b_k  & \text{if } k \in U^C_{i_2 j}\\
c_{k j} & \text{if } k \notin U^C_{i_2 j}
\end{cases},
\quad
c_{k j} = 
\begin{cases}
b_k & \text{if } k \in U^A_{i_1 j}\\
a_{k j} & \text{if } k \notin U^A_{i_1 j}
\end{cases}.
$$
and
$$
f_{k j} = 
\begin{cases}
b_k & \text{if } k \in U^E_{i_1 j}\\
e_{k j} & \text{if } k \notin U^E_{i_1 j}
\end{cases},
\quad
e_{k j} = 
\begin{cases}
b_k & \text{if } k \in U^A_{i_2 j}\\
a_{k j} & \text{if } k \notin U^A_{i_2 j}
\end{cases}.
$$

\noindent\text{Case A:} $k \in U^A_{i_1 j}$.  
Then $\sigma_G(b_{i_1}, a_{k j}, b_k) = \min\!\Bigl(\tfrac{(b_{i_1}-b_k)^+}{2}, (a_{k j}-b_k)^+\Bigr) > 0$, which implies $b_{i_1} > b_k$ and $a_{k j} > b_k$. Therefore $c_{k j} = b_k$ and $d_{k j} = b_k$.
\begin{itemize}
    \item If $k \in U^E_{i_1 j}$, then $f_{k j} = b_k = d_{k j}$.
    \item If $k \notin U^E_{i_1 j}$, then  $f_{k j} = e_{k j}$  and since 
    $\sigma_G(b_{i_1}, e_{k j}, b_k) = 0$, we get   $ e_{k j} \le b_k$. As by (\ref{eq:aijklisbkorakl}), we have $e_{kj} \in\{b_k , a_{kj}\}$, from     $b_k < a_{k j}$, we obtain  $e_{k j} = b_k$, hence $f_{k j} =   e_{k j} = b_k = d_{k j}$.
\end{itemize}

\noindent\text{Case B:} $k \notin U^A_{i_1 j}$.  
Then $\sigma_G(b_{i_1}, a_{k j}, b_k) = \min\!\Bigl(\tfrac{(b_{i_1}-b_k)^+}{2}, (a_{k j}-b_k)^+\Bigr) = 0$ and  $c_{k j} = a_{k j}$.
\begin{itemize}
    \item If $k \in U^A_{i_2 j}$, then $e_{k j} = b_k$ and from (\ref{eq:sigmaG}) and  (\ref{eq:UDij}), we deduce $a_{kj} > b_k$ and $k \in U^C_{i_2 j}$. Hence $d_{k j} = b_k < a_{k j}$ and $b_{i_1} \le b_k$, which implies $k \notin U^E_{i_1 j}$, so finally $f_{k j} = e_{k j} = b_k = d_{k j}$.
    \item If $k \notin U^A_{i_2 j}$, then $e_{k j} = a_{k j} = c_{k j}$, so by (\ref{eq:UDij}) , we deduce $k \notin U^E_{i_1 j}$ and  $k \notin U^C_{i_2 j}$, leading to $d_{k j} = c_{k j} = e_{k j} = f_{k j}$.
\end{itemize}    
\end{proof}

\subsection{Proof of the results in Section \ref{sec:constructionAijvec}}
\label{sec:proofs:constructionAijvec}

\propositionaijklbkorakl*
\begin{proof}
Let 
$$
A^{(\vec{i},\vec{j})} = [a^{(\vec{i},\vec{j})}_{kl}]_{1\le k\le n,\; 1\le l\le m}.
$$
By induction on $h$, we easily obtain the following two results:

- From (\ref{eq:Aijklgroup}a)  , (\ref{eq:Aijklgroup}b) and (\ref{eq:constructionAijvec}), we deduce:
\begin{equation}\label{eq:Aijkl3h}
\forall (k,l)\in I\times J,\quad a^{(\vec{i},\vec{j})}_{kl} \in \{b_k,\, a_{kl}\}.
\end{equation}

- From (\ref{eq:Aijklgroup}b)  and (\ref{eq:constructionAijvec}), we deduce:
\begin{equation}\label{eq:Aijkl2h}
\forall k\in I,\; \forall l\notin\{j_1,j_2,\dots,j_h\},\quad a^{(\vec{i},\vec{j})}_{kl} = a_{kl}.
\end{equation}    
\end{proof}

\propositioncommutativePi*

\begin{proof}
The proof of (\ref{eq:mainavecipivecjpi}) is based on the case $h=2$ established in Lemma \ref{lemma:commutativeAij} and on the following two results:

\begin{itemize}
    \item Let $\vec{i} = (i_1, \dots, i_h) \in I^h$ be a tuple of pairwise distinct indices and $\vec{j} = (j_1, \dots, j_h) \in J^h$. For any pair of permutations $(\pi, \pi')$ of the set $\{1, 2, \dots, h\}$, we have
    \begin{equation}\label{eq:mainaction}
    \vec{i} \triangleleft (\pi \circ \pi') =   (\vec{i} \triangleleft \pi) \triangleleft \pi'
    ,\quad
    \vec{j} \triangleleft (\pi \circ \pi') =   (\vec{j} \triangleleft \pi) \triangleleft \pi'.
    \end{equation}
    Indeed, set $\vec{i'} := \vec{i} \triangleleft \pi = (i'_1, \dots, i'_h)$. For every $\lambda \in \{1, 2, \dots, h\}$, we have $i'_\lambda = i_{\pi(\lambda)}$. It follows that
    \[
    (\vec{i} \triangleleft \pi) \triangleleft \pi' = \vec{i'} \triangleleft \pi' = (i'_{\pi'(1)}, i'_{\pi'(2)}, \dots, i'_{\pi'(h)}) = (i_{\pi(\pi'(1))}, i_{\pi(\pi'(2))}, \dots, i_{\pi(\pi'(h))}) = \vec{i} \triangleleft (\pi \circ \pi').
    \]
    The proof of the equality $\vec{j} \triangleleft (\pi \circ \pi') =   (\vec{j} \triangleleft \pi) \triangleleft \pi'$ is identical to that of $\vec{i} \triangleleft (\pi \circ \pi') =   (\vec{i} \triangleleft \pi) \triangleleft \pi'$.
    
    \item Let $\pi, \pi'$ be two permutations of the set $\{1, 2, \dots, h\}$. Suppose that for every $\vec{i} = (i_1, \dots, i_h) \in I^h$ (with pairwise distinct indices) and every $\vec{j} = (j_1, \dots, j_h) \in J^h$, we have the equality (\ref{eq:mainavecipivecjpi}) for $\pi$ and $\pi'$:
    \[
    A^{(\vec{i}, \vec{j})} = A^{(\vec{i} \triangleleft \pi, \, \vec{j} \triangleleft \pi)}
    \quad \text{and} \quad
    A^{(\vec{i}, \vec{j})} = A^{(\vec{i} \triangleleft \pi', \, \vec{j} \triangleleft \pi')}.
    \]
    Then, for every $\vec{i} = (i_1, \dots, i_h) \in I^h$ (with pairwise distinct indices) and every $\vec{j} = (j_1, \dots, j_h) \in J^h$, we have
    \[
    A^{(\vec{i}, \vec{j})} = A^{(\vec{i} \triangleleft (\pi \circ \pi'), \, \vec{j} \triangleleft (\pi \circ \pi'))}.
    \]
    Indeed, set $\vec{i'} = \vec{i} \triangleleft \pi$ and $\vec{j'} = \vec{j} \triangleleft \pi$. By hypothesis, we have 
    \[
    A^{(\vec{i'}, \vec{j'})} = A^{(\vec{i'} \triangleleft \pi', \, \vec{j'} \triangleleft \pi')}.
    \]
    Then, by (\ref{eq:mainaction}),
    \[
    A^{(\vec{i} \triangleleft (\pi \circ \pi'), \, \vec{j} \triangleleft (\pi \circ \pi'))} = A^{(\vec{i'} \triangleleft \pi', \, \vec{j'} \triangleleft \pi')} = A^{(\vec{i'}, \vec{j'})} = A^{(\vec{i} \triangleleft \pi, \, \vec{j} \triangleleft \pi)} = A^{(\vec{i}, \vec{j})}.
    \]
\end{itemize}

By a classical result in permutation group theory, every permutation $\pi$ of the set $\{1, 2, \dots, h\}$ is a composition of transpositions of the form $\tau = (\lambda, \lambda+1)$ with $1 \le \lambda \le h-1$ (see \cite{sagan2013symmetric}). 

\noindent
For each $1 \le \lambda \le h-1$, the transposition $\tau = (\lambda, \lambda+1)$ is the permutation of the set $\{1, 2, \dots, h\}$ defined for every $\lambda' \in \{1, 2, \dots, h\}$ by
\[
\tau_{\lambda'} =
\begin{cases}
\lambda' & \text{if } \lambda' \notin \{\lambda, \lambda+1\} \\[1mm]
\lambda+1 & \text{if } \lambda' = \lambda \\[1mm]
\lambda & \text{if } \lambda' = \lambda+1.
\end{cases}
\]

We have shown that in order to obtain (\ref{eq:mainavecipivecjpi}) for every permutation $\pi$, it suffices to establish it for every transposition $\tau = (\lambda, \lambda+1)$ with $1 \le \lambda \le h-1$.

Set
\[
C := 
\begin{cases}
A, & \text{if } \lambda = 1 \\[1mm]
A^{(i_1, j_1)} \ast \dots \ast A^{(i_{\lambda-1}, j_{\lambda-1})}, & \text{if } \lambda > 1,
\end{cases}
\quad
D := C^{(i_\lambda, j_\lambda) \ast (i_{\lambda+1}, j_{\lambda+1})},
\quad
E := C^{(i_{\lambda+1}, j_{\lambda+1}) \ast (i_\lambda, j_\lambda)}.
\]
Then we have, see
(\ref{eq:constructionAijvec}) and (\ref{eq:vecpi})  :
\[
A^{(\vec{i}, \vec{j})} = 
\begin{cases}
D, & \text{if } \lambda+1 = h \\[1mm]
D^{(i_{\lambda+2}, j_{\lambda+2}) \ast \dots \ast (i_h, j_h)}, & \text{if } \lambda+1 < h,
\end{cases}
\quad
A^{(\vec{i} \triangleleft \tau, \, \vec{j} \triangleleft \tau)} = 
\begin{cases}
E, & \text{if } \lambda+1 = h \\[1mm]
E^{(i_{\lambda+2}, j_{\lambda+2}) \ast \dots \ast (i_h, j_h)}, & \text{if } \lambda+1 < h.
\end{cases}
\]
By applying Lemma \ref{lemma:commutativeAij} to the matrix $C$, we obtain the equality $D = E$, which implies that
\[
A^{(\vec{i}, \vec{j})} = A^{(\vec{i} \triangleleft \tau, \, \vec{j} \triangleleft \tau)}.
\]    
\end{proof}

\propositionmodOnegAijvec*
\begin{proof}
To prove the four statements, we use that for every $\lambda\in\{1,2,\dots,h\}$, we have
\begin{equation}\label{eq:mainilambdajlambda}
A^{(\vec{i},\vec{j})} := A^{(i_1,j_1) \ast (i_2,j_2) \ast \dots \ast (i_h,j_h)} = C^{(i_\lambda,j_\lambda)},\quad C := A^{(\vec{i'},\vec{j'})},
\end{equation}
with the $(h-1)$-tuples 
$$
\vec{i'} := (i_1,\dots,\widehat{i_\lambda},\dots,i_h)\quad \text{and} \quad \vec{j'} := (j_1,\dots,\widehat{j_\lambda},\dots,j_h)
$$
obtained from the $h$-tuples 
$$
\vec{i} = (i_1,\dots,i_\lambda,\dots,i_h)\quad \text{and} \quad \vec{j} = (j_1,\dots,j_\lambda,\dots,j_h)
$$
by removing the $\lambda$-th component (see Proposition \ref{proposition:commutativePi}).
\begin{enumerate}

\item For $h=1$, we have $A^{(\vec{i},\vec{j})} = A^{(i_1,j_1)}$, and the result follows from (\ref{eq:aijklisbkorakl}).

For $h>1$, set 
$$
C := A^{(i_1,j_1) \ast (i_2,j_2) \ast \dots \ast (i_{h-1},j_{h-1})} = [c_{kl}]_{1\le k\le n,\;1\le l\le m}.
$$
By definition, $A^{(\vec{i},\vec{j})} = C^{(i_h,j_h)}$ and 
by the induction hypothesis, for all $(k,l)\in I\times J$, we have $c_{kl}\in\{b_k,a_{kl}\}$.
Using   (\ref{eq:aijklisbkorakl}), we have  $
a^{(\vec{i},\vec{j})}_{kl}\in\{b_k,c_{kl}\} $. Then, it follows that for every $(k,l)\in I\times J$,
$$
a^{(\vec{i},\vec{j})}_{kl}\in \{b_k,a_{kl}\}.
$$

\item Let $\lambda\in\{1,2,\dots,h\}$. Set $A^{(\vec{i},\vec{j})}=C^{(i_\lambda,j_\lambda)}$ (see (\ref{eq:mainilambdajlambda})). Then, by statement  2 of Lemma \ref{lemma:deltaaijijequalzero}, we deduce that
$$
0=\delta^{C^{(i_\lambda,j_\lambda)}}(i_\lambda,j_\lambda)=\delta^{A^{(\vec{i},\vec{j})}}(i_\lambda,j_\lambda).
$$

\item For $h=1$, we have $A^{(\vec{i},\vec{j})}=A^{(i_1,j_1)}$ and $l = j_1$. By statement  2 of Lemma \ref{lemma:deltaaijijequalzero}, we deduce that
$$
\delta^{A^{(\vec{i},\vec{j})}}(s,l)= \delta^{A^{(i_1,j_1)}}(s,j_1)  = 0.
$$

For $h>1$, we distinguish the following two cases:

\begin{itemize}
    \item Suppose that $\{j_1,j_2,\dots,j_h\}=\{l\}$. In this case,
    $$
    A^{(\vec{i},\vec{j})}:=A^{(i_1,l)\ast(i_2,l)\ast\dots\ast(i_h,l)}= C^{(i_h,l)}\quad\text{with}\quad C:=A^{(i_1,l)\ast\dots\ast(i_{h-1},l)}.
    $$
    By the induction hypothesis applied to the matrix $C=A^{(\vec{i'},\vec{j'})}$ obtained (see (\ref{eq:constructionAijvec})) by $(h-1)$ compositions defined by $\vec{i'}:=(i_1,\dots,i_{h-1})$ and $\vec{j'}:=(l,l,\dots,l)$, we have $\delta^C(s,l)=0$. By applying statement 2 of Lemma \ref{lemma:deltaaijijequalzero}, we conclude that
    $$
    \delta^{A^{(\vec{i},\vec{j})}}(s,l)=\delta^{C^{(i_h,l)}}(s,l)=0.
    $$
    
    \item Suppose that $\{j_1,j_2,\dots,j_h\}\setminus\{l\}\neq\emptyset$. Using (\ref{eq:mainavecipivecjpi}), we may assume that there exists $1\le h'<h$ such that 
    $$
    \forall \lambda\in\{1,\dots,h'\},\quad j_\lambda=l\quad\text{and}\quad l\notin\{j_{h'+1},\dots,j_h\}.
    $$
    By the induction hypothesis applied to the matrix 
    $$
    C:=A^{(i_1,l)\ast(i_2,l)\ast\dots\ast(i_{h'},l)}
    $$
    obtained in  (\ref{eq:constructionAijvec}) by $h'<h$ compositions defined by $ (i_1,\dots,i_{h'})$ and $ (l,l,\dots,l)$, we have $\delta^C(s,l)=0$. From (\ref{eq:constructionAijvec}), we deduce that 
    $$
    A^{(\vec{i},\vec{j})}=C^{(\vec{i'},\vec{j'})},
    $$
    with 
    $$
    \vec{i'}:=(i_{h'+1},\dots,i_h)\quad \text{and}\quad \vec{j'}:=(j_{h'+1},\dots,j_h).
    $$
    Since $l\notin\{j_{h'+1},\dots,j_h\}$, the matrices $C$ and $C^{(\vec{i'},\vec{j'})}=A^{(\vec{i},\vec{j})}$
      have the same $l$-th column,  see the second statement of Proposition \ref{proposition:aijklbkorakl}. Hence, by  (\ref{eq:deltaTij}) we get: 
    $$
    \delta^{A^{(\vec{i},\vec{j})}}(s,l)=\delta^{C}(s,l)=0.
    $$
\end{itemize}

\item We observe that the matrices $A^{(\vec{i},\vec{j})}$ and $A$ have the same $l$-th column, see Proposition \ref{proposition:aijklbkorakl}.  Hence,
$$
\delta^{A^{(\vec{i},\vec{j})}}(s,l)=\delta^{A}(s,l).
$$    
\end{enumerate}
\end{proof}

\propositionmodtwogAijvec*
\begin{proof}
We proceed by induction on $h$. Lemma \ref{lemma:matrixTwithdetaTijzero} establishes the case $h=1$.

Suppose that $h > 1$. Let
$$
C := A^{(i_1,j_1) \ast (i_2,j_2) \ast \dots \ast (i_{h-1},j_{h-1})} = [c_{kl}]_{1\le k\le n,\; 1\le l\le m}.
$$
We have
$$
A^{(\vec{i},\vec{j})} = C^{(i_h,j_h)},
$$
(see (\ref{eq:constructionAijvec})). Since $\delta^T(i_\lambda,j_\lambda)=0$ for all $\lambda\in\{1,2,\dots,h-1\}$, by the induction hypothesis we have
\begin{equation}\label{eq:hyprecmain2}
\forall (k,l)\in I\times J,\quad |c_{kl} - a_{kl}| \le |t_{kl} - a_{kl}|.
\end{equation}
It follows from (\ref{eq:hyprecmain2}) that, in order to establish (\ref{eq:ilambdajlambdaGNew}), it suffices to consider the cases where
$$
a^{(\vec{i},\vec{j})}_{kl} \neq c_{kl}.
$$
But, by applying (\ref{eq:Aijklgroup}a) and (\ref{eq:Aijklgroup}b) to the matrix $C$ and using the equality $A^{(\vec{i},\vec{j})} = C^{(i_h,j_h)}$, we obtain
\begin{equation}\label{eq:Cijkl2n}
a^{(\vec{i},\vec{j})}_{i_h j_h} =
\begin{cases}
b_{i_h} & \text{if } b_{i_h} > c_{i_h j_h} \\[1mm]
c_{i_h j_h} & \text{if } b_{i_h} \le c_{i_h j_h},
\end{cases}
\quad\text{and for } k\neq i_h,\quad
a^{(\vec{i},\vec{j})}_{k j_h} =
\begin{cases}
b_k & \text{if } k\in U^C_{i_h j_h} \\[1mm]
c_{k j_h} & \text{if } k\notin U^C_{i_h j_h}.
\end{cases}
\end{equation}
Furthermore,
\begin{equation}\label{eq:Cijkl3n}
\forall k\in I,\;\forall l\in J\setminus\{j_h\},\quad a^{(\vec{i},\vec{j})}_{kl} = c_{kl}.
\end{equation}
This implies that
$$
\{(k,l)\in I\times J \mid a^{(\vec{i},\vec{j})}_{kl} \neq c_{kl}\} \subseteq \bigl(\{i_h\}\cup U^C_{i_h j_h}\bigr)\times\{j_h\}.
$$
It remains to show that
$$
\bigl|a^{(\vec{i},\vec{j})}_{kl} - a_{kl}\bigr| \le |t_{kl} - a_{kl}|
$$
in the case where $l=j_h$, $k\in\{i_h\}\cup U^C_{i_h j_h}$, and $a^{(\vec{i},\vec{j})}_{k j_h} \neq c_{k j_h}$.

\vspace{2mm}
\noindent\text{Step 1.} Suppose that $k=i_h$ and $a^{(\vec{i},\vec{j})}_{i_h j_h}\neq c_{i_h j_h}$. Then, from (\ref{eq:Cijkl2n}) we deduce that
$$
a^{(\vec{i},\vec{j})}_{i_h j_h} = b_{i_h} \quad \text{and} \quad b_{i_h} > c_{i_h j_h}.
$$
By applying the first statement of Proposition \ref{proposition:modOnegAijvec} to the matrix $C$, we obtain $c_{i_h j_h} = a_{i_h j_h}$. But $\delta^T(i_h,j_h)=0$ implies $b_{i_h}\le t_{i_h j_h}$, see (\ref{eq:deltaTij}),  hence
$$
a_{i_h j_h} = c_{i_h j_h} < b_{i_h} = a^{(\vec{i},\vec{j})}_{i_h j_h} \le t_{i_h j_h},
$$
which implies
$$
\bigl|a^{(\vec{i},\vec{j})}_{i_h j_h} - a_{i_h j_h}\bigr| \le |t_{i_h j_h} - a_{i_h j_h}|.
$$

\vspace{2mm}
\noindent\text{Step 2.} Suppose that $k\in U^C_{i_h j_h}$. That is, 
$$
\sigma_G(b_{i_h}, c_{k j_h}, b_k) = \min\Bigl(\frac{(b_{i_h}-b_k)^+}{2},(c_{k j_h}-b_k)^+\Bigr) > 0.
$$
Then, we have, successively:
\begin{itemize}
    \item $b_{i_h} > b_k$ and $c_{k j_h} > b_k$. Since by the first statement of Proposition \ref{proposition:modOnegAijvec},  we have $c_{k j_h}\in\{b_k,a_{k j_h}\}$, we deduce that $c_{k j_h} = a_{k j_h}$.
    \item The hypothesis $\delta^T(i_h,j_h)=0$ implies
    $$
    \sigma_G(b_{i_h}, t_{k j_h}, b_k) = \min\Bigl(\frac{(b_{i_h}-b_k)^+}{2},(t_{k j_h}-b_k)^+\Bigr) = 0.
    $$
    Since $b_{i_h} > b_k$, it follows that $t_{k j_h} \le b_k$.
    \item From (\ref{eq:Cijkl2n}), we deduce that $a^{(\vec{i},\vec{j})}_{k j_h} = b_k$.
\end{itemize}
Finally, we obtain:
$$
t_{k j_h} \le b_k = a^{(\vec{i},\vec{j})}_{k j_h} < c_{k j_h} = a_{k j_h},
$$
which implies
$$
\bigl|a^{(\vec{i},\vec{j})}_{k j_h} - a_{k j_h}\bigr| \le |t_{k j_h} - a_{k j_h}|.
$$

This completes the proof.    
\end{proof}

\propositionSimpleDefAijvec*
\begin{proof}
For every $(k,l) \in I \times J$ , set  
\begin{equation}\label{eq:atilde}
\widetilde a_{kl} = \begin{cases}
b_k & \text{if } (k,l) \in {\cal E} ,\\[1mm]
a_{kl} & \text{if } (k,l) \notin {\cal E} .
\end{cases}    
\end{equation}
To  prove that  $a^{(\vec{i},\vec{j})}_{kl}    = \widetilde a_{kl}$ for every $(k,l) \in I \times J$, we proceed by induction on $h$.

\noindent
Before performing our induction, note that for 
every $(k,l) \in I \times J$ such that $l\notin\{j_1 , j_2  , \dots j_h\}$, we  have 
$a^{(\vec{i},\vec{j})}_{kl} = a_{kl}  =\widetilde a_{kl}$ : this result  is deduced from the second statement of Proposition \ref{proposition:aijklbkorakl} and the fact that $(k , l)\notin{\cal E}$. 

\noindent
$\bullet \,$ If $h = 1$, then we have:
\[ {\cal E}_1 = \begin{cases}
\{(i_1 , j_1)\} & \text{if } b_{i_1} > a_{i_1 j_1} ,\\[1mm]
\emptyset & \text{if } b_{i_1} \leq  a_{i_1 j_1}
\end{cases}, \quad  {\cal E}_2 = U^A_{i_1 j_1} \times \{j_1\}\quad 
 \text{and}\quad {\cal E} = {\cal E}_1 \cup {\cal E}_2, \,\,  \text{see (\ref{eq:combinedE1E2Eh})}.\]
As we have $A^{(\vec{i},\vec{j})} = A^{(i_1,j_1)}$, then by using  (\ref{eq:atilde}),  (\ref{eq:Aijkl1}) and 
(\ref{eq:Aijkl2}), one can easily check that for every $k\in I$, we have 
$a^{(\vec{i},\vec{j})}_{kj_1} =  \widetilde a_{k j_1}$.

\noindent
$\bullet\,$ Suppose that for all pairs $(\vec{i},\vec{j})$ of $(h -1)$ tuples  as in Definition \ref{def:vecivecj}, 
we have $a^{(\vec{i},\vec{j})}_{kl}    =\widetilde a_{kl}$ for 
every $(k,l) \in I \times J$.

\noindent 
Let   $(\vec{i} = (i_1, i_2, \dots, i_h),\vec{j}  = (j_1 , \dots , j_h))$ be a pair of $h$-tuples   as in Definition \ref{def:vecivecj} and ${\cal E}_1$, ${\cal E}_2$, and ${\cal E}$  be the sets associated to the pair  $(\vec{i},\vec{j})$ as in (\ref{eq:combinedE1E2Eh}). We will use that for every $\lambda\in\{1 , 2 , \dots , h\}$, we have, see    (\ref{eq:constructionAijvec}) and  (\ref{eq:mainavecipivecjpi}):
\begin{equation}\label{eq:intt1}
 A^{(\vec{i},\vec{j})} = C^{(i_\lambda,j_\lambda)} \quad \text{where} \quad C = 
A^{(\vec{i'},\vec{j'})}   
\end{equation}
and $(\vec{i'},\vec{j'})$ is the pair obtained by removing the $\lambda$-th component from $(\vec{i},\vec{j})$ whose  corresponding sets ${\cal E}'_1$, ${\cal E}'_2$, and ${\cal E}'$ are defined in (\ref{eq:ehminusonesets}).

Let $(k , l)$  be a pair       in $I \times \{ j_1 , \dots , j_h \}$ and let  us prove that $a^{(\vec{i},\vec{j})}_{kl}    =\widetilde a_{kl}$:

\noindent \begin{itemize}
  
 \item If $(k , l)\in {\cal E}_1$, there is an index $\lambda\in\{1 , \dots , h\}$ such that:
\[(k , l) = (i_\lambda , j_\lambda), \quad  b_{i_\lambda} >a _{i_\lambda j_\lambda}
\quad \text{and} \quad \widetilde a_{kl} = b_{i_\lambda}, \,\,  \text{see (\ref{eq:combinedE1E2Eh}) and  (\ref{eq:atilde}).   }\]
By setting  $A^{(\vec{i},\vec{j})} = C^{(i_\lambda,j_\lambda)}$ with 
 $C = A^{(\vec{i'},\vec{j'})}$ as in (\ref{eq:intt1}), we get from (\ref{eq:Aijkl1}):
\[ a^{(\vec{i},\vec{j})}_{kl} = \begin{cases}
b_{i_\lambda}   & \text{if } b_{i_\lambda} > c_{i_\lambda j_\lambda} ,\\[1mm]
c_{i_\lambda j_\lambda}  & \text{if } b_{i_\lambda} \leq c_{i_\lambda j_\lambda} 
\end{cases}\quad \text{where} \quad C = [c_{k' l'}]_{1 \leq k' \leq n , 1 \leq l' \leq m}.\]
If the inequality  $b_{i_\lambda} \leq c_{i_\lambda j_\lambda}$ holds , then we get
$a_{i_\lambda j_\lambda} < b_{i_\lambda} \leq c_{i_\lambda j_\lambda}$. As from the first statement of Proposition \ref{proposition:aijklbkorakl}, we have 
$c_{i_\lambda j_\lambda} \in\{b_{i_\lambda} ,  a_{i_\lambda j_\lambda} \}$, we conclude that in this case $     b_{i_\lambda} = c_{i_\lambda j_\lambda} = a^{(\vec{i},\vec{j})}_{kl}$ so in both cases, we have  $a^{(\vec{i},\vec{j})}_{kl} = b_{i_\lambda} = \widetilde a_{kl}$.

\noindent
 \item If $(k , l)\in {\cal E}_2$,
 there is an index $\lambda\in\{1 , \dots , h\}$ such that:
\[l =   j_\lambda, \quad  k\in   U^A_{i_\lambda j_\lambda} 
\quad \text{and} \quad \widetilde a_{kl} = b_k, \,\,  \text{see (\ref{eq:combinedE1E2Eh}) and  (\ref{eq:atilde}).   }\]
As $k \not= i_\lambda$, see  (\ref{eq:sigmaG}) and (\ref{eq:UDij}),     
by setting  $A^{(\vec{i},\vec{j})} = C^{(i_\lambda,j_\lambda)}$ with 
 $C = A^{(\vec{i'},\vec{j'})}$ as in (\ref{eq:intt1}), we get from (\ref{eq:Aijkl2}):
\[ a^{(\vec{i},\vec{j})}_{kl} = \begin{cases}
b_{k}   & \text{if } k\in   U^C_{i_\lambda j_\lambda} ,\\[1mm]
c_{k j_\lambda}  & \text{if } k\notin   U^C_{i_\lambda j_\lambda} 
\end{cases}\quad \text{where} \quad C = [c_{k' l'}]_{1 \leq k' \leq n , 1 \leq l' \leq m}\]
As 
$k\in   U^A_{i_\lambda j_\lambda} $, we have   $b_{i_\lambda } > b_k$ and $a_{k j_\lambda} > b_k$, see (\ref{eq:sigmaG}), so if  $k\notin   U^C_{i_\lambda j_\lambda}$, we deduce   
     $c_{k j_\lambda} \leq b_k < a_{k j_\lambda}$.   As from the first statement of Proposition \ref{proposition:aijklbkorakl}, we have 
$c_{k j_\lambda} \in\{b_k ,  a_{k j_\lambda} \}$, we conclude that in this case $     b_k = c_{k j_\lambda} = a^{(\vec{i},\vec{j})}_{k j_\lambda}$ so in both cases, we have  $a^{(\vec{i},\vec{j})}_{k j_\lambda } = b_{k} = \widetilde a_{kj_\lambda}$.

 \noindent
  \item If $(k , l)\notin {\cal E}$, we know from (\ref{eq:atilde}) that  $\widetilde a_{kl} = a_{kl}$. As we suppose that $l\in\{j_1 , j_2 , \dots , j_h\}$, there is an index $\lambda\in\{1 , \dots , h\}$ such that $l = j_\lambda$. Let us write   $A^{(\vec{i},\vec{j})} = C^{(i_\lambda,j_\lambda)}$ with 
 $C = A^{(\vec{i'},\vec{j'})}$ as in (\ref{eq:intt1}), then from Lemma \ref{lemma:IncEhEprimeh}, we get that $(k , l) = (k , j_\lambda)\notin {\cal E'}$, so by applying the induction hypothesis to the pair $(\vec{i'} , \vec{j'}) $ , we obtain that $c_{k j_\lambda} = a_{k j_\lambda}$.

 \noindent
In both cases $k = i_\lambda$ or $k \not= i_\lambda$, we claim  that 
$a^{(\vec{i},\vec{j})}_{k j_\lambda} = c_{k j_\lambda} = a_{kj_\lambda}$:

\noindent
\begin{itemize}
\item if  $k = i_\lambda$, then $(k , j_\lambda)\notin {\cal E}$ implies that $b_{i_\lambda} \leq a_{i_\lambda j_\lambda}$. As we have $c_{i_\lambda j_\lambda} = a_{i_\lambda j_\lambda}$, so $b_{i_\lambda} \leq c_{i_\lambda j_\lambda} $, then  from  (\ref{eq:Aijkl1}) applied to the matrix $C$ and the pair $(i_\lambda , j_\lambda)$, we get:
\[ a^{(\vec{i},\vec{j})}_{i_\lambda  j_\lambda} = c_{i_\lambda j_\lambda} = a_{i_\lambda j_\lambda}.\]

\noindent
\item if  $k \not= i_\lambda$, then $(k , j_\lambda)\notin {\cal E}$  implies that $k\notin U^A_{i_\lambda j_\lambda}$ , see (\ref{eq:combinedE1E2Eh}). From the equality  $c_{k j_\lambda} = a_{kj_\lambda}$ , we get  from  (\ref{eq:UDij}) that 
$k\notin U^C_{i_\lambda j_\lambda}$ and from  (\ref{eq:Aijkl2})    applied to the matrix $C$ and the pair $(i_\lambda , j_\lambda)$, we get:
\[ a^{(\vec{i},\vec{j})}_{k  j_\lambda} = c_{k j_\lambda} = a_{k j_\lambda}.\]
\end{itemize}
\end{itemize}
\end{proof}

\corollarynormlambdainfty*
\begin{proof}
By the second statement of Proposition \ref{proposition:modOnegAijvec}, for every $\lambda\in\{1 , \dots , h\}$,   we have  $\delta^{A^{(\vec{i},\vec{j})}}(i_\lambda, j_\lambda) = 0$. Using the second statement of Lemma \ref{lemma:matrixTwithdetaTijzero} with $T:= A^{(\vec{i},\vec{j})}$, we deduce the inequality $\Vert A^{(i_\lambda,j_\lambda)} - A \Vert_\infty \leq \Vert A^{(\vec{i},\vec{j})} - A \Vert_\infty$ for every $\lambda\in\{1 , \dots , h\}$. 

We have proven the  inequality 
$\max_{\lambda\in\{1 , \dots , h\}}\, \Vert A^{(i_\lambda,j_\lambda)} - A \Vert_\infty \, \leq \Vert A^{(\vec{i},\vec{j})} - A \Vert_\infty$.

As  (\ref{eq:normlambda}) is clear if $A^{(\vec{i},\vec{j})} = A$, we suppose that 
$\Vert A^{(\vec{i},\vec{j})} - A \Vert_\infty > 0$.

Let us   fix $(k , v)\in I \times J$  such that $0 < \Vert A^{(\vec{i},\vec{j})} - A \Vert_\infty = \mid a^{(\vec{i},\vec{j})}_{k v} - a _{k v} \mid  $. 
 
From the second statement of Proposition \ref{proposition:aijklbkorakl},  we deduce that $v\in\{j_1 , j_2 , \dots , j_h\}$.
Fix  $\widetilde\lambda\in\{1 , 2 , \dots ,h\}$    such that  $v = j_{\widetilde\lambda}$ and note that with this additional notation, we have: 
\begin{equation}\label{eq:int2_b}
0 < \Vert A^{(\vec{i},\vec{j})} - A \Vert_\infty = \mid a^{(\vec{i},\vec{j})}_{k j_{\widetilde\lambda}} - a _{k j_{\widetilde\lambda}} \mid.     
\end{equation}

Using  Proposition \ref{proposition:SimpleDefAijvec} with the set ${\cal E}$ associated with $(\vec{i},\vec{j})$, we assert that $(k , j_{\widetilde\lambda})\in{\cal E}$. As ${\cal E} = {\cal E}_1 \cup {\cal E}_2$, we distinguish  the following two cases:

\begin{itemize}
\item if $(k, j_{\widetilde\lambda})\in {\cal E}_1$, then we have $a^{(\vec{i},\vec{j})}_{k j_{\widetilde\lambda}} =  b_k $. Moreover, 
there exists some $\lambda\in\{1 , 2 , \dots ,h\}$ such that 
$$k = i_{\lambda}, \quad j_{\widetilde\lambda} = j_{\lambda} ,   \quad \text{and} \quad 
   b_{i_{\lambda}} > a_{i_{\lambda} j_{\lambda}}
 $$
Then, we have:
\begin{align}
\Vert A^{(\vec{i},\vec{j})} - A \Vert_\infty = \mid a^{(\vec{i},\vec{j})}_{k j_{\widetilde\lambda}} - a _{k j_{\widetilde\lambda}} \mid & = \mid b_{i_\lambda} - a_{i_\lambda  j_\lambda}\mid = \mid a^{(i_\lambda,j_\lambda)}_{i_\lambda j_\lambda}- a_{i_\lambda  j_\lambda} \mid\nonumber\\
& \leq \Vert A^{(i_{\lambda},j_{\lambda})} - A\Vert_\infty\nonumber\\
& \leq \max_{1 \leq  \lambda' \leq h}\,\Vert A^{(i_{\lambda'},j_{\lambda'})} - A\Vert_\infty. 
\nonumber
\end{align}
\item if $(k, j_{\widetilde\lambda})\in {\cal E}_2$,  then we have $a^{(\vec{i},\vec{j})}_{k j_{\widetilde\lambda}} =  b_k $. Moreover,     there exists some $\lambda\in\{1 , 2 , \dots ,h\}$ such that 
$$k \in  U^A_{i_{\lambda} j_{\lambda}} \quad \text{and} \quad    j_{\widetilde\lambda} = j_{\lambda}.   $$
Then, we have:
\begin{align}
\Vert A^{(\vec{i},\vec{j})} - A \Vert_\infty = \mid a^{(\vec{i},\vec{j})}_{k j_{\widetilde\lambda}} - a _{k j_{\widetilde\lambda}} \mid & = \mid b_{k} - a_{k  j_\lambda}\mid = \mid a^{(i_\lambda,j_\lambda)}_{k j_\lambda}- a_{k  j_\lambda} \mid\nonumber\\
& 
\leq \Vert A^{(i_{\lambda},j_{\lambda})} - A\Vert_\infty\nonumber\\
& \leq \max_{1 \leq  \lambda' \leq h}\,\Vert A^{(i_{\lambda'},j_{\lambda'})} - A\Vert_\infty. 
\nonumber 
\end{align}
\end{itemize}
We have established (\ref{eq:normlambda}).
\end{proof}

\subsection{Proof of the results in Section \ref{sec:mainresults}}
\label{sec:proofs:mainresults}

\theoremtheoOneconsSij*
\begin{proof}
We must show that for every $s\in I$, we have, see (\ref{eq:deltaTi}), (\ref{eq:Delta}), (\ref{eq:equivDelta}):
$$
\delta^{A^{(\vec{i},\vec{j})}}_s = 0.
$$
We have the partition, see (\ref{eq:nconsninc}):
$$
I = N_{\text{inc}} \cup N_{\text{cons}},
$$ 
with 
$$
N_{\text{inc}} = \{i_1, \dots, i_h\} \quad \text{and} \quad \vec{i} := (i_1, \dots, i_h).
$$

\begin{itemize}
    \item If $s\in N_{\text{inc}}$, then there exists a unique $\lambda\in\{1,2,\dots,h\}$ such that $s = i_\lambda$. By statement 2 of Proposition \ref{proposition:modOnegAijvec}, it follows that
    $$
    \delta^{A^{(\vec{i},\vec{j})}}_s \le \delta^{A^{(\vec{i},\vec{j})}}(s,j_\lambda) = \delta^{A^{(\vec{i},\vec{j})}}(i_\lambda,j_\lambda) = 0.
    $$
    
    \item If $s\in N_{\text{cons}}$, then $\delta^A_s = 0$. Let $l\in J$ be such that $\delta^A(s,l) = 0$. Then, by statements 3 and 4 of Proposition \ref{proposition:modOnegAijvec}, we deduce that
    $$
    \delta^{A^{(\vec{i},\vec{j})}}_s \le \delta^{A^{(\vec{i},\vec{j})}}(s,l) = \delta^A(s,l) = 0.
    $$
\end{itemize}    
\end{proof}

\theoremtheoTwoAijnorm*
\begin{proof}
It is an immediate consequence of Proposition \ref{proposition:mod2gAijvec} and Theorem \ref{theorem:theo1consSij}.

We use the partition 
$$
I = N_{\text{inc}} \cup N_{\text{cons}},
$$
with 
$$
N_{\text{inc}} = \{i_1, \dots, i_h\} \quad \text{and} \quad \vec{i} := (i_1, \dots, i_h).
$$
For every $\lambda \in \{1,2,\dots,h\}$, since $\delta^T_{i_\lambda} = 0$, we choose $j_\lambda \in J$ such that 
$$
\delta^T(i_\lambda, j_\lambda) = 0,
$$ 
and form the $h$-tuple $(j_1, \dots, j_h)$ ,   see (\ref{eq:deltaTi}), (\ref{eq:Delta}), (\ref{eq:equivDelta}).

By Theorem \ref{theorem:theo1consSij}, we know that the system 
$$
A^{(\vec{i},\vec{j})} \Box_{\min}^{\max} x = b
$$
is consistent.

By Proposition \ref{proposition:mod2gAijvec}, we have
$$
\forall (k,l) \in I \times J,\quad \bigl| a^{(\vec{i},\vec{j})}_{kl} - a_{kl} \bigr| \le \bigl| t_{kl} - a_{kl} \bigr|.
$$    
\end{proof}

\corollarynormminJh*
\begin{proof}
Let $p\in\{1,2,\infty\}$. Define
$$
\Delta' = \min_{\vec{j}\in J^h} \Vert A^{(\vec{i},\vec{j})} - A \Vert_p.
$$
We must show that $\Delta' = \mathring{\Delta}_{p}$, where
$$
\mathring{\Delta}_{p} = \min_{T\in{\cal T}} \Vert T - A \Vert_p.
$$

\begin{itemize}
    \item By Theorem \ref{theorem:theo1consSij}, for every $\vec{j}\in J^h$ we have 
    $$
    A^{(\vec{i},\vec{j})} \in {\cal T},
    $$
    so that
    $$
    \Vert A^{(\vec{i},\vec{j})} - A \Vert_p \ge \mathring{\Delta}_{p}.
    $$
    Hence, $\Delta' \ge \mathring{\Delta}_{p}$.
    
    \item By Theorem \ref{theorem:theo2Aijnorm}, for every matrix $T\in{\cal T}$ there exists an $h$-tuple $\vec{j}\in J^h$ such that the matrix 
    $$
    A^{(\vec{i},\vec{j})} = [a^{(\vec{i},\vec{j})}_{kl}]
    $$
    satisfies
    $$
    \forall (k,l)\in I\times J,\quad \bigl|a^{(\vec{i},\vec{j})}_{kl} - a_{kl}\bigr| \le \bigl|t_{kl} - a_{kl}\bigr|.
    $$
    For each of the three norms considered, it then follows that
    $$
    \Delta' \le \Vert A^{(\vec{i},\vec{j})} - A \Vert_p \le \Vert T - A \Vert_p,
    $$
    and therefore,
    $$
    \Delta' \le \min_{T\in{\cal T}} \Vert T - A \Vert_p = \mathring{\Delta}_{p}.
    $$
\end{itemize}

Thus, we deduce that $\Delta' = \mathring{\Delta}_{p}$.    
\end{proof}

\theoremformulaforLinftydist*
\begin{proof}
First, we note that the second statement can be deduced from the first one and   Corollary \ref{corollary:normlambdainfty}.

By the definition (\ref{eq:lAinfty}) of the set $L_{A,\infty}$, Theorem \ref{theorem:theo1consSij} and the inconsistency of the system $(S)$, we deduce that we have:
\[ 0 < \mathring{\Delta}_{\infty} \leq \Vert A^{(\vec{i},\vec{j})} - A \Vert_\infty.\]
To prove the first statement, one must check that for any matrix $T\in{\cal T}$, i.e., the system $T \Box_{\min}^{\max} x = b$
 is consistent, we have $\Vert A^{(\vec{i},\vec{j})} - A \Vert_\infty \leq\Vert T - A \Vert_\infty$.

Let $T $ be any matrix in the set $\mathcal T$. By the consistency  of the system $T \Box_{\min}^{\max} x = b$, we have: 
\[ \forall i\in I   \,\,   \exists l\in J \,\, \text{such that} \,\, \delta^T(i , l) = 0, \]   
see (\ref{eq:deltaTi}), (\ref{eq:Delta})and (\ref{eq:equivDelta}).

For each $\lambda \in \{1 , 2 , \dots , h\}$, we choose an index $l_\lambda\in J$
 such that $\delta^T(i_\lambda , l_\lambda) = 0$ and construct (Definition \ref{def:matAij}) the matrix 
 $A^{(i_\lambda , l_\lambda)} = [a^{(i_\lambda , l_\lambda)}_{kv}]$ which, by Lemma \ref{lemma:matrixTwithdetaTijzero}, satisfies:
\begin{equation}\label{eq:int0}
\forall (k , v)\in I \times J\,,\, \mid a^{(i_\lambda , l_\lambda)}_{kv} - a_{kv} \mid \leq \Vert A^{(i_\lambda , l_\lambda)} - A \Vert_\infty \leq \Vert T - A \Vert_\infty    
\end{equation}

As $(\vec{i},\vec{j})\in L_{A,\infty}$ we deduce  from the definition (\ref{eq:lAinfty}) of the set $L_{A,\infty}$ and 
(\ref{eq:int0}): 
\begin{equation}\label{eq:int1}
\forall \lambda \in \{1 , 2 , \dots , h\},\,   \forall (k , v)\in I \times J,\, \mid a^{(i_\lambda,j_\lambda)}_{k v}  - a_{k v} \mid \leq  \Vert A^{(i_\lambda,j_\lambda)} - A \Vert_\infty \leq \Vert A^{(i_\lambda,l_\lambda)} - A \Vert_\infty \leq \Vert T - A \Vert_\infty.    
\end{equation}

Let us   fix $(k , v)\in I \times J$  such that $0 < \Vert A^{(\vec{i},\vec{j})} - A \Vert_\infty = \mid a^{(\vec{i},\vec{j})}_{k v} - a _{k v} \mid  $. 
 
From the second statement of Proposition \ref{proposition:aijklbkorakl},  we deduce that $v\in\{j_1 , j_2 , \dots , j_h\}$.
Fix  $\widetilde\lambda\in\{1 , 2 , \dots ,h\}$    such that  $v = j_{\widetilde\lambda}$ and note that with this additional notation, we have: 
\begin{equation}\label{eq:int2_a}
0 < \Vert A^{(\vec{i},\vec{j})} - A \Vert_\infty = \mid a^{(\vec{i},\vec{j})}_{k j_{\widetilde\lambda}} - a _{k j_{\widetilde\lambda}} \mid.     
\end{equation}

Using  Proposition \ref{proposition:SimpleDefAijvec} with the set ${\cal E}$ associated with $(\vec{i},\vec{j})$, we assert that $(k , j_{\widetilde\lambda})\in{\cal E}$. As ${\cal E} = {\cal E}_1 \cup {\cal E}_2$, we distinguish the following two cases:
\begin{itemize}
  \item  If $(k, j_{\widetilde\lambda})\in {\cal E}_1$, then we have $a^{(\vec{i},\vec{j})}_{k j_{\widetilde\lambda}} =  b_k $. Moreover, 
there exists some $\lambda\in\{1 , 2 , \dots ,h\}$ such that 
$$k = i_{\lambda}, \quad j_{\widetilde\lambda} = j_{\lambda} ,   \quad \text{and} \quad 
   b_{i_{\lambda}} > a_{i_{\lambda} j_{\lambda}}.
 $$
Then, we have:
\begin{align}
\Vert A^{(\vec{i},\vec{j})} - A \Vert_\infty = \mid a^{(\vec{i},\vec{j})}_{k j_{\widetilde\lambda}} - a _{k j_{\widetilde\lambda}} \mid & = \mid b_{i_\lambda} - a_{i_\lambda  j_\lambda}\mid = \mid a^{(i_\lambda,j_\lambda)}_{i_\lambda j_\lambda}- a_{i_\lambda  j_\lambda} \mid, \quad \text{see       (\ref{eq:Aijkl1}) }\nonumber\\
& \leq \Vert A^{(i_\lambda,j_\lambda)} - A \Vert_\infty \leq \Vert A^{(i_\lambda,l_\lambda)} - A \Vert_\infty \leq \Vert T - A \Vert_\infty, \quad \text{see (\ref{eq:int1})}.\nonumber
\end{align}

\item If $(k, j_{\widetilde\lambda})\in {\cal E}_2$,  then we have $a^{(\vec{i},\vec{j})}_{k j_{\widetilde\lambda}} =  b_k $. Moreover,     there exists some $\lambda\in\{1 , 2 , \dots ,h\}$ such that: 
$$k \in  U^A_{i_{\lambda} j_{\lambda}} \quad \text{and} \quad    j_{\widetilde\lambda} = j_{\lambda}.$$
Then, we have:
\begin{align}
\Vert A^{(\vec{i},\vec{j})} - A \Vert_\infty = \mid a^{(\vec{i},\vec{j})}_{k j_{\widetilde\lambda}} - a _{k j_{\widetilde\lambda}} \mid & = \mid b_{k} - a_{k  j_\lambda}\mid = \mid a^{(i_\lambda,j_\lambda)}_{k j_\lambda}- a_{k  j_\lambda} \mid, \quad \text{see       (\ref{eq:Aijkl1}) }\nonumber\\
& \leq \Vert A^{(i_\lambda,j_\lambda)} - A \Vert_\infty \leq \Vert A^{(i_\lambda,l_\lambda)} - A \Vert_\infty \leq \Vert T - A \Vert_\infty,\,\, \text{see (\ref{eq:int1})}\nonumber.
\end{align}
\end{itemize}
\end{proof}
\corollaryformulaforLinftydist*
\begin{proof}
We distinguish  the following two cases:
\begin{itemize}
    
\item Suppose that the system $(S)$ is consistent. As in this case 
$\mathring{\Delta}_{\infty} = \min_{T \in{\cal T}}\, \Vert T - A \Vert_\infty = 0$, we must prove that for all $i\in I$ we have: 
\[\min_{j\in J} \,\max\Bigl[(b_i - a_{ij})^+, \, \max_{k \in I, k\neq i}\Bigl(\theta(i,k) \cdot (a_{kj} - b_k)^+\Bigr)\Bigr] = 0.\]
By (\ref{eq:Delta}) and (\ref{eq:equivDelta}), we know that  for all $i\in I$, we have:
\[ \delta^A_i :=  \min_{j\in J}\delta^A(i ,j) := \min_{j\in J}\,\max [(b_i - a_{ij})^+ , \max_{k\in I} \sigma_G(b_i , a_{kj} ,b_k) ] = 0,\]
see (\ref{eq:deltaTi}) and (\ref{eq:deltaTij}).

As $\sigma_G(x , y , z) = \min(\dfrac{(x - z)^+ }{2} , (y - z)^+)$  for all $x , y , z$ in $[0 , 1]$, the following implication is true:
\begin{equation}\label{eq:deltanul}
\delta^A_i = 0 \Longrightarrow \min_{j\in J} \,\max[(b_i - a_{ij})^+, \, \max_{k \in I, k\neq i}(\theta(i,k) \cdot (a_{kj} - b_k)^+)] = 0.    
\end{equation}
We haven proven  (\ref{eq:formulaforLinftydistance}) in the consistent case.

\item Suppose that the system $(S)$ is inconsistent. We take a pair $(\vec i , \vec j)\in L_{A,\infty}$     so by the first statement of  Theorem \ref{theorem:formulaforLinftydistance}, we have 
$$\mathring{\Delta}_{\infty} = \Vert A^{(\vec{i},\vec{j})} - A \Vert_\infty.$$  
As $\vec i = (i_1 , \dots , i_h)$ where $\{i_1 , \dots , i_h\}$ is an enumeration of the set $N_{\text{inc}} = \{ i\in I \, \mid \, \delta^A_i > 0\}$, we deduce from Corollary \ref{corollary:normlambdainfty}, the definition  (\ref{eq:lAinfty}) of the set $L_{A,\infty}$ and  the fourth statement of Lemma \ref{lemma:deltaaijijequalzero}  that  we have:
\begin{align}
\Vert A^{(\vec{i},\vec{j})} - A \Vert_\infty & = \max_{1 \leq \lambda \leq h} 
\Vert A^{( i_\lambda ,j_\lambda)} - A \Vert_\infty\nonumber\\
&= \max_{1 \leq \lambda \leq h} \min_{j\in J} \Vert A^{( i_\lambda ,j)} - A \Vert_\infty\nonumber\\
& = \max_{1 \leq \lambda \leq h} \min_{j\in J}   \max[(b_{i_\lambda} - a_{{i_\lambda} {j}})^+, \, \max_{k \in I, k\neq {i_\lambda}}(\theta({i_\lambda},k) \cdot (a_{k{j}} - b_k)^+)]\nonumber.
\end{align}
We conclude that 
 the equality $\mathring{\Delta}_{\infty} = \max_{i\in I} \min_{j\in J} \,\max\Bigl[(b_i - a_{ij})^+, \, \max_{k \in I, k\neq i}\Bigl(\theta(i,k) \cdot (a_{kj} - b_k)^+\Bigr)\Bigr]$ is equivalent to:
\[ \max_{i\in \overline{N_{\text{inc}}}} \min_{j\in J} \,\max[(b_i - a_{ij})^+, \, \max_{k \in I, k\neq i}(\theta(i,k) \cdot (a_{kj} - b_k)^+)] = 0 \quad \text{with} \quad \max_\emptyset = 0.\]
But, as $\overline{N_{\text{inc}}} = \{i\in I \,\mid \, \delta^A_i = 0\}$, by using (\ref{eq:deltanul}) for all $i\in \overline{N_{\text{inc}}}$, we get:
$$ \min_{j\in J} \,\max[(b_i - a_{ij})^+, \, \max_{k \in I, k\neq i}(\theta(i,k) \cdot (a_{kj} - b_k)^+)] = 0.$$ We haven proven  (\ref{eq:formulaforLinftydistance}) in the inconsistent case.
\end{itemize}
\end{proof}
\end{document}